\theoremstyle{plain}
\newtheorem{theorem}{Theorem}
\newtheorem{lemma}[theorem]{Lemma}
\theoremstyle{definition}
\newtheorem{definition}{Definition}
\newtheorem{remark}{Remark}
\newtheorem{problem}{Problem}
\newtheorem{example}{Example}
\newtheorem{assumption}{Assumption}
\begin{document}

\title{Probabilistic Motion Planning under Temporal Tasks \\and Soft Constraints}

\author{
Meng~Guo,~\IEEEmembership{Member,~IEEE,}~and~Michael~M.~Zavlanos,~\IEEEmembership{Member,~IEEE} %
\thanks{The authors are with the Department of Mechanical Engineering and Materials Science, Duke University, Durham, NC 27708 USA. Emails: {\tt\small meng.guo, michael.zavlanos@duke.edu}. This work is supported in part by NSF under grant IIS \#1302283.
}
}

\maketitle 

\markboth{IEEE~Transactions~on~Automatic~Control,~June~2017.}{Meng~Guo~and~Michael~M.~Zavlanos} 

\thispagestyle{empty} \pagestyle{empty}
\begin{abstract}
This paper studies motion planning of a mobile robot under uncertainty. The control objective is to synthesize a {finite-memory} control policy, such that a high-level task specified as a Linear Temporal Logic (LTL) formula is satisfied with a desired high probability. Uncertainty is considered in the workspace properties, robot actions, and task outcomes, giving rise to a Markov Decision Process (MDP) that models the proposed system. Different from most existing methods, we consider cost optimization both in the prefix and suffix of the system trajectory. We also analyze the potential trade-off between reducing the mean total cost and maximizing the probability that the task is satisfied. The proposed solution is based on formulating two coupled Linear Programs, for the prefix and suffix, respectively, and combining them into a multi-objective optimization problem, which provides provable guarantees on the probabilistic satisfiability and the total cost optimality. We show that our method outperforms relevant approaches that employ Round-Robin policies in the trajectory suffix. Furthermore, we propose a new control synthesis algorithm to minimize the frequency of reaching a bad state when the probability of satisfying the tasks is zero, in which case most existing methods return no solution. We validate the above schemes via both numerical simulations and experimental studies. 
\end{abstract}
\begin{IEEEkeywords}
Markov Decision Process, Linear Temporal Logic, Chance Constrained Optimization, Motion Planning.
\end{IEEEkeywords}
\section{Introduction}\label{sec:intro}

\IEEEPARstart{I}{n} this paper we study the problem of robot motion planning under uncertainty and temporal task specifications. We consider uncertainty in the workspace properties, robot motion and actions, and outcome of task executions, which gives rise to a Markov Decision Process (MDP) to model the proposed system. MDPs have been used extensively to model motion and sensing uncertainty in robotics~\cite{thrun2005probabilistic, puterman2014markov} and then solve decision making problems that optimize a given control objective. 
The most common objective is to reach a goal state from an initial state while minimizing the cost. 
The resulting solution is a policy that maps states to actions~\cite{puterman2014markov}. 
On the other hand, Linear Temporal Logic (LTL) provides a formal language to describe complex high-level tasks beyond the classic start-to-goal navigation. A LTL task formula is usually specified with respect to an abstraction of the robot motion within the allowed workspace~\cite{fainekos2009temporal}, modeled by a deterministic finite transition system (FTS). 
Then a high-level discrete plan is found using off-the-shelf model-checking algorithms~\cite{baier2008principles}, which is then executed through low-level continuous controllers~\cite{fainekos2009temporal, belta2007symbolic}. This framework is extended to allow for both robot motion and actions in the task specification~\cite{guo2016hybrid} and partially-known or dynamic workspaces in~\cite{guo2015multi,wolff2012robust}.

Recently, there have been many efforts to address the problem of synthesizing a control policy for a MDP that satisfies high-level temporal tasks specified in various formal languages.
Different classes of Probabilistic Computation Tree Logic (PCTL) formulas have been studied in~\cite{lahijanian2015formal} for abstraction and verification over Interval-valued Markov Chains.
The work in~\cite{cizelj2014control} proposes a control policy for a mobile robot that maximizes the probability of satisfying a bounded linear temporal logic (BLTL) formula. 
Syntactical co-safe LTL formulas (sc-LTL) are considered in~\cite{ulusoy2014incremental} for a deterministic robot that co-exists with other robots whose behavior is modeled as a MDP.
A FTS with time-varying rewards is controlled to satisfy a LTL formula and maximize the accumulated reward in~\cite{ding2014ltl}.
A robust control policy for MDPs with uncertain transition probabilities is proposed in~\cite{wolff2012robust}.  
A verification toolbox is provided in~\cite{kwiatkowska2011prism} for probabilistic discrete-time or continuous-time {Markov Chain (MC)}, under a wide variety of quantitative properties expressed in PCTL, LTL, CTL, and so on. 

In this work, we study motion planning of a mobile robot under uncertainty in both robot motion and workspace properties. The goal is to synthesize a finite-memory control policy that generates robot trajectories that satisfy a high-level LTL task formula with desired high probability. At the same time, we optimize the total cost \emph{both} in the prefix and suffix parts of the system trajectories. Our proposed approach is based on solving two coupled Linear Programs, one for the prefix and one for the suffix, over the occupancy measures of the product automaton introduced in~\cite{altman1996constrained}. Moreover, we explore cases where the probability of satisfying the LTL tasks is zero, so that an Accepting End Component (AEC) does not exist in the MDP, where most relevant work returns no solutions. To address such situations, we treat satisfaction of the tasks as soft constraints and propose a relaxed suffix plan that minimizes the frequency with which the system enters bad states that violate the task specifications. We show that our approach outperforms the widely-used Round-Robin policy, via both numerical simulations and experimental studies. We also compare our proposed method with the widely-used probabilistic model-checking tool PRISM~\cite{kwiatkowska2011prism}.

Our work is related to literature on (i) policy synthesis for MDPs under multiple objectives; (ii) cost optimization within AECs in MDPs; and (iii) infeasible temporal tasks. We discuss below this literature and highlight our contributions.

Since we consider both temporal tasks and total-cost criteria over MDPs, this work is closely related to policy synthesis of MDPs under multiple objectives.
The work in~\cite{altman1996constrained} proposes a framework with provable correctness to synthesize a control policy for MDPs under multiple constrained total-cost criteria. 
{A survey on multi-objective  decision-making for MDPs can be found in~\cite{roijers2013survey}}. 
On the other hand, verification of MDPs under \emph{multiple} high-level tasks is addressed in~\cite{etessami2007multi}, where the probability of satisfying each subtask  is lower-bounded by a given value.
Moreover, a quantitative multi-objective verification scheme is proposed in~\cite{forejt2011quantitative,forejt2012pareto} for numerical queries over probabilistic reward predicates.

On the other hand, the seminal works~\cite{randour2015variations, randour2015percentile} consider MDPs with multi-dimensional weights under multi-percentile queries that may be conflicting.
{However, most of the above work does not address cost optimization over the suffix of the system trajectory within the AECs, neither does it address the case where no AECs can be found in the product automaton, which are the main contributions here.}

The satisfaction of a LTL formula is associated with reaching the corresponding AECs. In particular, in \cite[Chapter 10]{baier2008principles}, a value iteration method is used to solve the maximal reachability problem towards the AECs to obtain a policy for the plan prefix.
For planning within the AECs,~\cite{ding2011mdp,forejt2011quantitative, baier2008principles} adopt the Round-Robin policy,  which guarantees only correctness but not optimality.
Optimal  policies for the plan suffix that keeps the system within the AECs have been proposed in~\cite{ding2014optimal, smith2011optimal,chatterjee2011energy,fu2015pareto}. Specifically,  in~\cite{ding2014optimal} the expected cost of satisfying instances of a desired property is minimized,  while in~\cite{smith2011optimal} the minimal bottleneck cost is considered. Both approaches in~\cite{ding2014optimal,smith2011optimal} require particular types of LTL formulas (such as ``always eventually''). {The work in~\cite{chatterjee2011energy, bruyere2016meet} considers MDPs with $\omega$-regular specifications and quantitative resource constraints within the AECs}. The work in~\cite{fu2015pareto} investigates the Pareto cost of a human-in-the-loop MDP measured by a given discounted cost function.
{Compared to this literature, the multi-objective optimization problem that we formulate to solve the control synthesis problem allows us to explicitly characterize the trade-off between prefix and suffix optimality. We then extend this methodology to the case where no AECs can be found.}

Most aforementioned work~\cite{randour2015variations,randour2015percentile,Dimitrova2016robust, baier2008principles, forejt2011quantitative, ding2011mdp, ding2014optimal} relies on the assumption that the product automaton contains at least one AEC. However, in many situations this assumption does not hold so that the probability of satisfying the task under any policy is \emph{zero}. In this case, it is still important to identify those policies that minimize the frequency with which the system will reach the bad states that violate the task specifications.
{Consequently, it is desirable to synthesize a policy with certain risk guarantees even when soft LTL tasks are considered that are only partially-feasible. To the best of our knowledge, there is no work on control synthesis for infeasible soft LTL task formulas defined on MDPs, especially when an AEC can not  be found in the resulting product automaton.} For deterministic transition systems, a framework
for robot motion planning in partially-known workspaces is
proposed in~\cite{guo2015multi} that can handle soft LTL task formulas whose
satisfiability is improved over time; a least-violating control
strategy is synthesized in~\cite{tumova2013least} for a set of LTL safety rules. In the case of MDPs, a relevant formulation is considered in~\cite{Ehlers2016risk} where a MDP is controlled to satisfy an $\omega$-regular formula. {A policy is proposed to ensure that the MDP enters a failure state relatively late in the prefix. However, a multi-objective criterion of the control policy, especially in the plan suffix, is not considered there.}
Also, recent work in~\cite{lahijanian2016specification} proposes an approach to increase the satisfaction probability by modifying the task formula which, however, only considers co-safe LTL formulas without cost optimization constraints.

In summary, the main contribution of this work is three-fold: (i) a framework that optimizes the total cost both in the plan prefix and suffix, while ensuring that the tasks are satisfied with a desired high probability; (ii) a new algorithm to synthesize the control policies that have a high probability of satisfying the task over long time intervals, for cases where an AEC does not exist; { and (iii) a new method that allows the system to recover from bad states and continue the task.}

The rest of the paper is organized as follows. Section~\ref{sec:prelims} introduces  necessary preliminaries. In Section~\ref{sec:formulation}, we formalizes the considered problem. Section~\ref{sec:solution} presents our solution in details, which includes four major parts. Section~\ref{sec:example} demonstrates the feasibility of the results by numerical simulations. 
Section~\ref{sec:experiment} contains the experimental results. We conclude and discuss about future directions in Section~\ref{sec:conc}.

\section{Preliminaries}\label{sec:prelims}

\subsection{Transient MDP}\label{sec:transient-mdp}
A Markov Decision Process (MDP) is defined as a 6-tuple $\mathcal{M} \triangleq (X, \, U,\,D,\, p_D,\, c_D,\, x_0)$, where~$X$ is the finite state space; $U$ is the finite control action space (with a slight abuse of notation, $U(x)$ also denotes the set of control actions \emph{allowed} at state~$x\in X$); $D = \{(x,u)\,|\, x\in X,\, u\in U(x)\}$ is the set of possible state-action pairs; $p_D: X\times U \times X \rightarrow {[0, 1]}$ is the transition probability function so that~$p_D(x,\,u,\,\check{x})$ is the transition probability from state~$x$ to state~$\check{x}$ via control action~$u$ and ~$\sum_{\check{x}\in X}p_D(x,u,\check{x}) = 1$, $\forall (x,\,u) \in D$; 
{$c_D: D \rightarrow \mathbb{R}^{>0}$} that~$c_D(x,\,u)$ is the cost of performing action~$u\in U(x)$ at state~$x\in X$; and $x_0\in X$ is the initial state.
Denote by~$Post(x,\, u)\triangleq \{\check{x}\in X\,|\, p_D(x,u,\check{x})>0\}$,~$\forall (x,\,u) \in D$.

The above MDP evolves by taking an action~$u\in U(x)$ associated with every state~$x\in X$. Denote by~$R_T= x_0u_0x_1u_1\cdots x_Tu_T$ the past run that is a sequence of previous states and actions up to time~$T\geq 0$.
{As defined in~\cite{puterman2014markov}, a control policy $\boldsymbol{\mu}=\mu_0\mu_1\cdots$ is a sequence of decision rules $\mu_t$ at time $t\geq 0$. 
A control policy is stationary if~${\mu}_t=\mu$, $\forall t\geq 0$, where $\mu$ can be randomized so that~${\mu}:X \times U\rightarrow [0,1]$ or deterministic so that~$\mu: X \rightarrow U$, $\forall t\geq 0$. On the other hand, a policy is history dependent or finite-memory if ${\mu}_t: R_t \times U \rightarrow [0,1]$, where $R_t$ is the past history until time $t\geq 0$.}

\subsection{End Components}\label{sec:EC}

A \emph{sub-MDP} of~$\mathcal{M}$ is a pair~$(S,\, A)$ where~$S\subseteq X$ and~$A: S\rightarrow 2^U$ such that (i)~$S\neq \emptyset$, $\emptyset \neq A(s) \subseteq U(s)$, $\forall s\in S$; (ii)~$Post(s,\, u)\subseteq S$, $\forall s\in S$ and~$\forall u \in A(s)$. 
An \emph{End Component} (EC) of~$\mathcal{M}$ is a sub-MDP~$(S,\, A)$ such that the digraph~$G_{(S,A)}$ induced by~$(S,A)$ is strongly connected. 
An end component~$(S,\,A)$ is called \emph{maximal} if there is no other end component~$(S',\, A')$ such that~$(S,\,A)\neq (S',\,A')$,~$S\subseteq S'$ and~$A(s)\subseteq A'(s)$, $\forall s \in S$. 
The set of Maximal End Components (MECs) of a MDP is finite and can be uniquely determined.
{The analysis of MECs would include each EC as a special case.} 
We refer the readers to Definitions~10.116, 10.117 and 10.124 of~\cite{baier2008principles} for details. 
Moreover, an \emph{Accepting} MEC (AMEC) is an end component that satisfies certain accepting conditions such as the Streett and Robin conditions, which will be defined in the sequel.
On the other hand, a \emph{Strongly Connected Component} (SCC) of the digraph~$G_{\mathcal{M}}$ induced by~$\mathcal{M}$ is a set of states~$S\subseteq X$ so that there exists a path in each direction between any pair of states in~$S$. {Similarly, an \emph{Accepting} SCC (ASCC) is a SCC that satisfies certain accepting conditions. Note that the main difference between a MEC~$(S,A)$ and a SCC~$S$ is that the SCC does not restrict the set of actions~$U(s)$ that can be taken at each state~$s\in S$. In other words, there might be paths that start from any state within the SCC and end at states outside the SCC.}

\subsection{LTL and DRA}\label{sec:ltl}
The  ingredients of a Linear Temporal Logic (LTL) formula are a set of atomic propositions $AP$ and several Boolean and temporal operators. Atomic propositions are Boolean variables that can be either true or false.  A LTL formula is specified according to the  syntax~\cite{baier2008principles}:
$\varphi \triangleq \top \;|\; p  \;|\; \varphi_1 \wedge \varphi_2  \;|\; \neg \varphi  \;|\; \bigcirc \varphi  \;|\;  \varphi_1 \,\textsf{U}\, \varphi_2,$
where $\top\triangleq \texttt{True}$, $p \in AP$, $\bigcirc$ (\emph{next}), $\textsf{U}$ (\emph{until}) and $\bot\triangleq \neg \top$. For brevity, we omit the derivations of other operators like $\square$ (\emph{always}), $\Diamond$ (\emph{eventually}), $\Rightarrow$ (\emph{implication}).
The semantics of LTL is defined over the set of infinite words over~$2^{AP}$. Intuitively, $p \in AP$ is satisfied on a word $w = w(1)w(2)\ldots$ if it holds at $w(1)$, i.e., if~$p \in w(1)$. Formula $\bigcirc \, \varphi$ holds true if $\varphi$ is satisfied on the word suffix that begins in the next position $w(2)$, whereas $\varphi_1 \, \textsf{U}\, \varphi_2$ states that $\varphi_1$ has to be true until $\varphi_2$ becomes true. Finally, $\Diamond \, \varphi$ and $\square \, \varphi$ are true if $\varphi$ holds on $w$ eventually and always, respectively. We refer the readers to Chapter~5 of~\cite{baier2008principles} for the full definition.

The set of words that satisfy a LTL formula~$\varphi$ over $AP$ can be captured through a Deterministic Rabin Automaton~(DRA)~$\mathcal{A}_{\varphi}$~\cite{baier2008principles}, defined as~$\mathcal{A}_{\varphi}=(Q, \,2^{AP},\, \delta,\, q_0,\,\text{Acc}_{\mathcal{A}})$,
where $Q$ is a  set of states; {$2^{AP}$ is the alphabet}; $\delta\subseteq Q\times 2^{AP} \times {Q}$ is a transition relation; $q_0 \in Q$ is the initial state; and~$\text{Acc}_{\mathcal{A}} \subseteq 2^Q \times 2^Q$ is a set of accepting pairs, i.e., $\text{Acc}_{\mathcal{A}} = \{(H^1_{\mathcal{A}}, I^1_{\mathcal{A}}), (H^2_{\mathcal{A}}, I^2_{\mathcal{A}}), \cdots, (H^N_{\mathcal{A}}, I^N_{\mathcal{A}})\}$ where~$H^i_{\mathcal{A}},\, I^i_{\mathcal{A}}\subseteq Q$, $\forall i =1,2,\cdots, N$.
An infinite run~$q_0q_1q_2\cdots$ of~$\mathcal{A}$ is \emph{accepting} if there exists \emph{at least one} pair~$(H^i_{\mathcal{A}},\, I^i_{\mathcal{A}})\in \text{Acc}_{\mathcal{A}}$ such that~$\exists n\geq 0$, it holds~$\forall m\geq n,\, q_m\notin H^i_{\mathcal{A}}$ and~$\overset{\infty}{\exists} n\geq 0$,~$q_n\in I^i_{\mathcal{A}}$, where~$\overset{\infty}{\exists}$ stands for~``existing infinitely many''.
Namely, this run should intersect with~$H^i_{\mathcal{A}}$ \emph{finitely} many times while with~$I^i_{\mathcal{A}}$ \emph{infinitely} many times. 
There are translation tools~\cite{klein2007ltl2dstar} to obtain~$\mathcal{A}_{\varphi}$ given~$\varphi$, which requires the process of translating firstly the LTL formula to the associated Nondeterministic B\"uchi Automaton~(NBA), and then to the DRA {with complexity~$2^{2^{\mathcal{O}(n\log n)}}$}, where~$n$ is the length of~$\varphi$.
Our implementation of the Python interface for~\cite{klein2007ltl2dstar} can be found in~\cite{git_mdp_tg}.
Note that~\cite{klein2007ltl2dstar} allows for different levels of automata simplifications to be made regarding the size of~$\mathcal{A}_{\varphi}$, {and a simplified automation may result in loss of optimality.}

\section{Problem Formulation}\label{sec:formulation}

\subsection{Mathematical Model}\label{sec:model}
In order to model uncertainty in both the robot motion and the workspace properties, we extend the definition of a MDP from Section~\ref{sec:transient-mdp} to include probabilistic labels, as the~\emph{probabilistically-labeled} MDP:
\begin{equation}\label{eq:mdp}
\mathcal{M} = (X, \, U,\, D,\, p_{D}, \, (x_0,\,l_0),\, AP, \,L,\, p_{L},\,c_D),
\end{equation}
where~$AP$ is a set of atomic propositions that capture the properties of interest in the workspace;~{$L:X \rightarrow 2^{2^{AP}}$ contains the set of property subsets that can be true at each state}; and~$p_{L}:X\times 2^{AP}\rightarrow {[0,\,1]}$ specifies the associated probability. Particularly,~$p_{L}(x,\,l)$ denotes the probability that state~$x\in X$ satisfies the set of propositions~$l \subset AP$.
Note that~$\sum_{l \in L(x)} p_L(x,\,l)=1$, $\forall x \in X$.
Moreover,~$(x_0,\,l_0)$ contains the initial state~$x_0\in X$ and the initial label~$l_0\in L(x_0)$, while the rest of the notations in~\eqref{eq:mdp} are the same as defined in Section~\ref{sec:transient-mdp}. 
The probabilistic labeling function provides a way to consider time-varying and dynamic workspace properties. 
Moreover, there is a LTL task formula~$\varphi$ specified over the same set of atomic propositions~$AP$, as the desired behavior of~$\mathcal{M}$. 
{We assume that the MDP~$\mathcal{M}$ in~\eqref{eq:mdp} is \emph{fully-observable} due to the following assumption.}

\begin{assumption}\label{assumption:observable}
At any stage~$t\geq 0$, the current robot state $x_t\in X$ and its label $l_t\in L(x_t)$ are fully-observable. \hfill $\blacksquare$
\end{assumption}

\begin{figure}[t]
     \centering
     \includegraphics[width=0.5\textwidth]{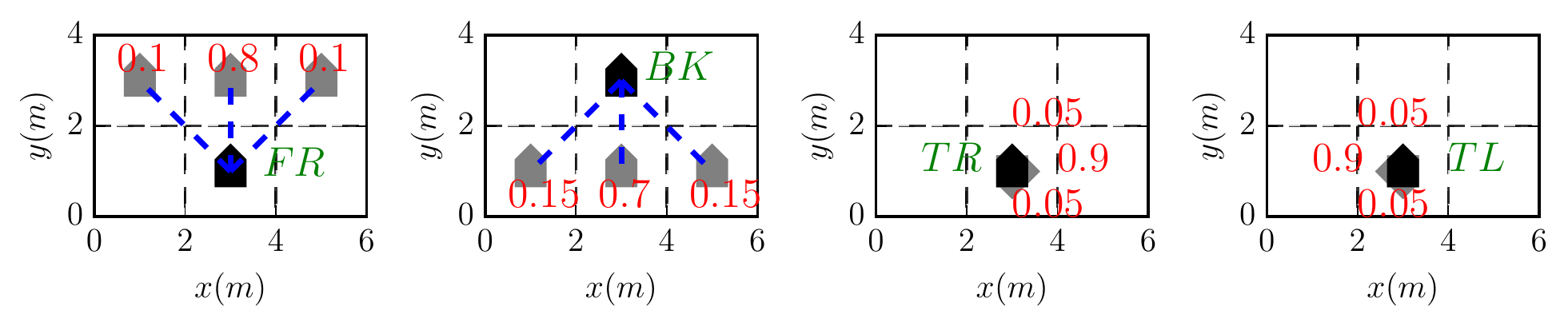}
     \caption{Uncertainty of each action primitive, {see Section~\ref{sec:example} for details}. Possible post states  are in grey from the starting state in black, where the associated possibilities are marked in red.}
     \label{fig:action}
   \end{figure}

While the robot is moving within the workspace, it is capable of sensing an actual property and determine the label of the state it is located at. 
At stage~$T\geq 0$, the robot's past path is given by~$X_T= x_0 x_1 \cdots x_T \in X^{(T+1)}$, the past sequence of observed labels is given by~$L_T=l_0 l_{1}\cdots l_T\in (2^{AP})^{(T+1)}$ and the past sequence of control actions is~$U_T=u_0 u_1 \cdots u_T \in U^{(T+1)}$. It holds that~$p_D(x_t, u_t, x_{t+1})>0$ and~$p_L(x_t, l_t)>0$, $\forall t\geq 0$. These three sequences can be composed into the complete past run~$R_T=x_0l_0u_0 x_1l_1u_1\cdots x_Tl_Tu_T$. Denote by~$\boldsymbol{X}_T$, $\boldsymbol{L}_T$ and~$\boldsymbol{R}_T$ the set of all possible past sequences of states, labels, and runs up to stage~$T$. We set $T=\infty$ for infinite sequences.

\begin{definition}\label{def:mean-total-cost}
The mean total cost~\cite{chatterjee2011two, puterman2014markov} of an infinite robot run $R_{\infty}$ of~$\mathcal{M}$ is defined as
\begin{equation}\label{eq:total-cost}
\textbf{Cost}(R_{\infty}) = \lim \inf_{n\rightarrow \infty} \frac{1}{n}\sum_{t=0}^{n} c_D(x_t,u_t),
\end{equation}
where $R_\infty=x_0l_0u_0 x_1l_1u_1\cdots\in \boldsymbol{R}_{\infty}$.  \hfill $\blacksquare$
\end{definition}
As discussed in~\cite{chatterjee2011two, chatterjee2011energy, randour2015percentile, puterman2014markov}, the above mean total cost is called the \emph{mean-payoff} function (or limit-average), where the~``$\lim$'' operator is needed as the limit-average might not exist for some runs, see~\cite{chatterjee2011two, chatterjee2011energy,brazdil2015multigain}.

{Our goal is to find a {finite-memory} policy for~$\mathcal{M}$, denoted by~$\boldsymbol{\mu}=\mu_0\mu_1\cdots$. The control policy at stage~$t\geq 0$ is given by~$\mu_t:\mathbf{R}_t \times U\rightarrow [0, 1]$, where~$\mathbf{R}_t$ is the past run~$R_t$, $\forall t\geq 0$.}
Denote by~$\overline{\boldsymbol{\mu}}$ the set of all such policies.
Given a control policy~$\boldsymbol{\mu}\in \overline{\boldsymbol{\mu}}$, {the probability measure~$\text{Pr}^{\mathcal{M}}_{\boldsymbol{\mu}}(\cdot)$ on the smallest~$\sigma$-algebra, over all possible infinite sequences~$\boldsymbol{R}_{\infty}$ that contain~$R_{T}$, is the unique measure~\cite{baier2008principles} by}
\begin{equation}\label{eq:prob-measure}
\begin{split}
{Pr}_{\mathcal{M}}^{\boldsymbol{\mu}}(\mathbf{R}_{\infty})= \prod_{t=0}^{T} &\, p_D(x_t,\,u_t,\,x_{t+1})\\
&\quad \cdot p_L(x_t,\,l_t) \cdot {\mu_t(\boldsymbol{R}_t,u_t)},
\end{split}
\end{equation}
{where~$\mu(\boldsymbol{R}_t,u_t)$ is defined as the probability of choosing action~$u_t$ given the past run $\boldsymbol{R}_t$.} Then we define the probability of~$\mathcal{M}$ satisfying~$\varphi$ under policy~$\boldsymbol{\mu}$ by:
$${Pr}_{{\mathcal{M}}}^{\boldsymbol{\mu}}(\varphi)={Pr}_{\mathcal{M}}^{\boldsymbol{\mu}}\{ \boldsymbol{R}_{\infty} \,|\, \boldsymbol{L}_{\infty}\models \varphi \},$$ 
where the satisfaction relation~``$\models$'' is introduced in Section~\ref{sec:ltl}, given an infinite word and a LTL formula. 
Accordingly, the \emph{risk} is defined as the probability that the task formula~$\varphi$ is {not} satisfied by~$\mathcal{M}$ under the policy~$\boldsymbol{\mu}$, namely,~$\textbf{Risk}_{{\mathcal{M}}}^{\boldsymbol{\mu}}(\varphi) = 1-{Pr}_{{\mathcal{M}}}^{\boldsymbol{\mu}}(\varphi)$. 

\begin{problem}\label{prob:main}
Given the labeled MDP~$\mathcal{M}$ defined in~\eqref{eq:mdp} and the task specification~$\varphi$, our goal is to sovle:
\begin{equation}\label{eq:objective}
\begin{split} 
&{\min_{\boldsymbol{\mu}\in \overline{\boldsymbol{\mu}}} \; \mathbb{E}^{\boldsymbol{\mu}}_{\mathcal{M}} \{\textbf{Cost}(R_{\infty})\}}\\
&\; {s.t.} \quad \textbf{Risk}_{{\mathcal{M}}}^{\boldsymbol{\mu}}(\varphi) \leq \gamma,\\
\end{split}
\end{equation}
where ${\gamma\geq 0}$ is a pre-defined parameter as the allowed risk; the optimal policy minimizes the mean total cost and ensures that the risk of violating~$\varphi$ remains bounded by~$\gamma$. \hfill $\blacksquare$
\end{problem}

Note that the traditional definition of un-discounted expected total cost over an infinite run from~\cite{altman1996constrained, puterman2014markov} is not used here, as it is infinite except for the special case of transient MDPs defined in Section~\ref{sec:transient-mdp}. However, in this work, the model~$\mathcal{M}$ is not restricted to be transient. 
Moreover, the discounted total cost in~\cite{puterman2014markov} is not used here either due to two reasons: first, it is not obvious how to choose the discount factor for various control tasks~$\varphi$~\cite{fu2015pareto}; and second, we are more interested in  optimizing the repetitive long-term behavior of the system, rather than the short-term one~\cite{randour2015percentile}.
In-depth discussions on the optimization of infinite-horizon un-discounted or discounted total-cost criteria over MDPs with or without constraints can be found in~\cite{puterman2014markov}.

\begin{remark}\label{rmk:randomized}
Different from the maximal reachability problem addressed in~\cite{baier2008principles,ding2011mdp}, a deterministic policy would not suffice here. 
Instead, randomization is required due to the mean total-cost criterion and the risk constraint, similar to~\cite{altman1996constrained}. \hfill $\blacksquare$
\end{remark}

\section{Solution}\label{sec:solution}

This section contains the three major parts of the proposed solution: (i) the construction of the product automaton and its AMECs; (ii) the algorithms to synthesize the optimal plan prefix and suffix, for both cases where the AMECs exist or not; (iii) the complete policy, and the online execution algorithm.

\subsection{Product Automaton and AMECs}\label{sec:product}
To begin with, we construct the DRA~$\mathcal{A}_{\varphi}$ associated with the~LTL task formula~$\varphi$ via the translation tools~\cite{klein2007ltl2dstar, git_mdp_tg}. Let it be~$\mathcal{A}_{\varphi}=(Q, \,2^{AP},\, \delta,\, q_{0},\,\text{Acc}_{\mathcal{A}})$, where the notations are defined in Section~\ref{sec:ltl}. 
Then we construct a product automaton between the robot model~$\mathcal{M}$ and the DRA~$\mathcal{A}_{\varphi}$.

\begin{definition}\label{def:product}
Denote by~$\mathcal{P}$ the product $\mathcal{M}\times \mathcal{A}_{\varphi}$ as a 7-tuple:
\begin{equation}\label{eq:prod}
\mathcal{P}=(S,\,U,\,E,\,p_E,\,c_E,\, s_0,\,\text{Acc}_{\mathcal{P}}),
\end{equation}
 where: the state~$S\subseteq X\times 2^{AP} \times Q$ is so that~$\langle x,\,l,\,q \rangle \in S$, $\forall x \in X$,~$\forall l\in L(x)$ and~$\forall q\in Q$; the action set~$U$ is the same as in~\eqref{eq:mdp} and~$U(s)=U(x)$,~$\forall s=\langle x,l,q\rangle \in S$; $E=\{(s,u)\,|\, s\in S,\, u\in U(s)\}$; the transition probability~$p_E:S\times U \times S\rightarrow {[0,\,1]}$ is so that
\begin{equation}\label{eq:new_pro}
p_E\big{(}\langle x,l,q\rangle,\,u,\, \langle \check{x},\check{l},\check{q}\rangle \big{)} =  p_D(x,\,u,\,\check{x})\cdot p_L(\check{x},\,\check{l})
\end{equation}
where (i) $\langle x,l,q\rangle,\, \langle \check{x},\check{l},\check{q}\rangle \in S$; (ii)~$(x,u)\in D$; and (iii)~$\check{q}\in \delta(q,\,l)$;  the cost function~$c_E:E\rightarrow \mathbb{R}^{>0}$ is so that~$c_E\big{(}\langle x,l,q\rangle,\,u \big{)}=c_D(x,u)$, $\forall \big{(}\langle x,l,q\rangle,\,u \big{)}\in E$. Namely, the label~$l$ should fulfill the transition condition from~$q$ to~$\check{q}$ in~$\mathcal{A}_{\varphi}$; the single initial state is~$s_0=\langle x_0,l_0,q_0 \rangle \in S$; the accepting pairs are defined as~$\text{Acc}_{\mathcal{P}}=\{(H^i_{\mathcal{P}},\, I^i_{\mathcal{P}}), i=1,2,\cdots,N\}$, where~$H^i_{\mathcal{P}}=\{\langle x,l,q\rangle \in S\,|\, q\in H^i_{\mathcal{A}}\}$ and~$I^i_{\mathcal{P}}=\{\langle x,l,q\rangle \in S\,|\, q\in I^i_{\mathcal{A}}\}$,~$\forall i=1,2,\cdots,N$. \hfill $\blacksquare$

\end{definition}

The product~$\mathcal{P}$ computes the intersection between all traces of~$\mathcal{M}$ and all words that are accepted by~$\mathcal{A}_{\varphi}$, to find all admissible robot behaviors that satisfy the task~$\varphi$. It combines the uncertainty in robot motion and the workspace model by including both~$x$ and~$l$ in the states.
The Rabin accepting condition of~$\mathcal{P}$ is defined as follows: An infinite path~$R_{\mathcal{P}}=s_0s_1\cdots$ of~$\mathcal{P}$ is accepting if for at least one pair~$(H^i_{\mathcal{P}}, I^i_{\mathcal{P}})\in \text{Acc}_{\mathcal{P}}$ it holds that~$R_{\mathcal{P}}$ intersects with~$H^i_{\mathcal{P}}$ finitely often while with~$ I^i_{\mathcal{P}}$ infinitely often. To transform this   condition into equivalent graph properties,  we need to compute the AMECs of~$\mathcal{P}$ associated with its accepting pairs~$\text{Acc}_{\mathcal{P}}$. Detailed definition of MECs is given in Section~\ref{sec:EC}.

In order to find the complete set of AMECs of~$\mathcal{P}$, for each pair~$(H^i_{\mathcal{P}}, I^i_{\mathcal{P}})\in \text{Acc}_{\mathcal{P}}$, perform the following steps:

(i) Build the MDP~$\mathcal{Z}_i^{\neg H}\triangleq (S', \, U',\, E',\, p'_E)$, where~$S'= S_i^{\neg H} \cup \{\nu\}$ is the set of states with~$S_i^{\neg H}=S\backslash H^i_{\mathcal{P}}$ and~$\nu$ a trap state; 
~$U'=U\, \cup\, \{\tau_0\}$ is the set of actions where~$\tau_0$ is a pseudo action; 
~$E'\subset S'\times U$ is the set of transitions with the associated probability~$p'_E$ which are defined by three cases: 
(a) for the transitions within~$S_i^{\neg H}$ it holds that~$(s,\,u)\in E'$ and~$p'_E(s,\,u,\, \check{s})=p_E(s,\,u,\, \check{s})$, $\forall (s,\,u)\in E$ where~$s,\,\check{s}\in  S_i^{\neg H}$;
(b) for the transitions from~$S_i^{\neg H}$ to outside~$S_i^{\neg H}$ it holds that~$(s,\,u)\in E'$ {and~$p'_E(s,\,u,\, \nu)=\sum_{\check{s} \notin S_i^{\neg H}}p_E(s,\,u,\, \check{s})$, $\forall (s,\,u)\in E$ where~$s\in  S_i^{\neg H}$};
 and~(c) the trap state is included in a self-loop such that~$(\nu,\,\tau_0)\in E'$ and~$p'_E(\nu,\,\tau_0,\, \nu)=1$.
Simply speaking, all transitions from inside~$S_i^{\neg H}$ to outside~$S_i^{\neg H}$ are transformed to transitions to the trap state~$\nu$.

(ii) Determine \emph{all} MECs of~$\mathcal{Z}_i^{\neg H}$ above via Algorithm~47 in~\cite{baier2008principles}, which is based on splitting the strongly connected components (SCCs) of~$\mathcal{Z}_i^{\neg H}$ until the conditions of being an end component are fulfilled. 
Our implementation for this algorithm can be found in~\cite{git_mdp_tg}. 
Denote by~$\Xi^i=\{(S'_1,\, U'_1), (S'_2,\, U'_2), \cdots (S'_{C_i},\, U'_{C_i})\}$ the set of MECs, where~$S'_{c}\subset S'$ and~$U'_c:S'_c \rightarrow 2^{U'}$,~$\forall c=1,2,\cdots,C_i$. {Note that $S'_{c}\cap S'_{c'}=\emptyset$, $\forall (S'_c,U'_c),(S'_{c'},U'_{c'})\in \Xi^i$.}

(iii) Find~$(S'_c,\,U'_c)\in \Xi^i$ that is \emph{accepting}, i.e., it satisfies~$\nu \notin S'_c$ and~$S'_c \cap I^i_{\mathcal{P}}\neq \emptyset$. Save the AMECs in~$\Xi^i_{acc}$. Since~$\Xi^i_{acc}$ is computed for each~$(H^i_{\mathcal{P}}, I^i_{\mathcal{P}})\in \text{Acc}_{\mathcal{P}}$, we denote by~$\Xi_{acc}=\{\Xi^i_{acc},\, i=1,\cdots,N\}$ the complete set of AMECs of~$\mathcal{P}$.

\begin{remark}\label{rem:self-loop}
A single state with a self-transition can be a MEC with a proper action set. Therefore, there exists at most~$|S'|$ MECs within~$\mathcal{Z}_i^{\neg H}$, $\forall i=1,\cdots,N$. Thus Step (ii) above has complexity~$\mathcal{O}(|S'|^2)$, as shown in Lemma 10.126 of~\cite{baier2008principles}, {while Steps (i) and (iii) have complexity linear with~$|S'|$.}
\hfill $\blacksquare$
\end{remark}

\subsection{Plan Prefix and Suffix Synthesis}\label{sec:initial}
Given the complete set of AMECs~$\Xi_{acc}$ of~$\mathcal{P}$, in this section we show how to synthesize the control policy to drive the system towards~$\Xi_{acc}$ and furthermore remain inside~$\Xi_{acc}$ while satisfying the accepting condition. 
As mentioned in Section~\ref{sec:intro}, most related work~\cite{ding2011mdp, baier2008principles, etessami2007multi, forejt2011quantitative} focuses on maximizing the probability of reaching the union of AMECs, i.e.,~$\cup_{(S'_c,\,U'_c)\in \Xi_{acc}} S'_c$, where dynamic programming techniques, such as value or policy iteration, can be applied to obtain the optimal policy.
Furthermore, once the system enters any AMEC, e.g.,~$(S'_c,\, U'_c)\in \Xi_{acc}$, it has probability~$1$ of staying within~$S'_c$ by following~$U'_c$~(see Lemma~10.119 of~\cite{baier2008principles}). The Round-Robin  policy is adopted in~\cite{ding2011mdp,forejt2011quantitative, baier2008principles} that ensures all states in~$S'_c$ (including its nonempty intersection with~$I_{\mathcal{P}}^i$) are visited infinitely often. As a result, the task~$\varphi$ is satisfied by~$\mathcal{P}$ under this policy with the maximal probability.

The above solutions may suffice for verification problems that do not optimize cost or for tasks with trivial accepting conditions. However,  for the purposes of plan synthesis and for general tasks, it is of practical interest to simultaneously satisfy the probability of reaching \emph{all} the AMECs {as well as} optimize the mean cost of staying within \emph{any} AMEC and fulfilling the accepting condition. Moreover, when no AECs can be found, instead of simply reporting failure, it is important to obtain a relaxed policy that guarantees high probability of satisfying the task over long time intervals  thus minimizing the frequency of encountering bad events. 
In what follows we present a policy synthesis algorithm that consists of {four} parts:
\begin{itemize}
\item {the \emph{plan prefix} that drives the system from the initial state to all AMECs, while minimizing the expected cost and respecting the risk constraint; see Section~\ref{sec:plan-prefix}};

\item {the \emph{plan suffix} that keeps the system within the AMEC it has reached, while satisfying the accepting condition and optimizing the expected suffix cost; see Section~\ref{sec:plan-suffix};}

\item the \emph{relaxed} prefix and suffix plans for the case where no AECs of~$\mathcal{P}$ can be found; see Section~\ref{sec:non-AEC}; and

\item {the complete finite-memory policy for the original MDP $\mathcal{M}$; see Section~\ref{sec:two-cases}.}
\end{itemize}

\begin{figure}[t]
     \centering
     \includegraphics[width=0.5\textwidth]{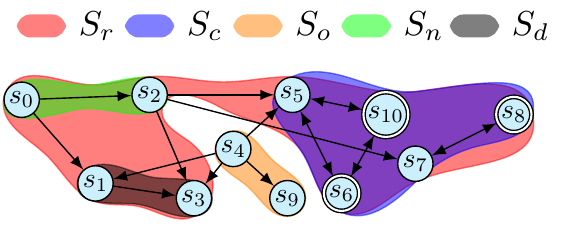}
     \caption{{Illustration of the partition of~$S$ in Definition~\ref{def:partition}, where $S_r$, $S_c$, $S_o$, $S_n$ and $S_d$ are highlighted by red, blue, orange, green and black areas, respectively. Details can be found in Example~\ref{example:partition}.}}
     \label{fig:partition}
   \end{figure}

Before stating the solution, we introduce a partition of~$S$ given the initial state~$s_0$ and the set of AMECs~$\Xi_{acc}$. Let~$S_r\subseteq S$  be the set of states within~$S$ that can be reached from~$s_0$, which can be derived via a simple graph search in~$\mathcal{P}$. 
\begin{definition}\label{def:partition}
Given~$s_0$ and~$\Xi_{acc}$, $S$ is partitioned as~$S=S_o\cup S_c \cup S_d \cup S_n$, where
$S_o\triangleq S\backslash S_r$ is the set of states that can \emph{not} be reached from~$s_0$;
{$S_c$ is the union of all goal states in~$\Xi_{acc}$, i.e., $S_c\triangleq \cup_{(S'_c,\,U'_c)\in \Xi_{acc}} S_c'$};
$S_d\subseteq S_r$ can be reached from~$s_0$ but can not reach any state in~$S_c$;
and $S_n\triangleq S_r\backslash (S_c\cup S_d)$. \hfill $\blacksquare$
\end{definition}

The set $S_d$ can be derived through a simple graph search, e.g., by reversing the directed graph associated with~$\mathcal{P}$, finding all reachable nodes of any state within each~$(S'_c,\,U'_c)\in \Xi_{acc}$ (as any AMEC is strongly connected) and finally computing its cross intersection with $S_r$; see~\cite{git_mdp_tg} for implementation details. 
Roughly speaking,~$S_n$ is the set of states related to the plan prefix,~$S_c$ is the set of goal states related to the plan suffix, and~$S_d$ is set of bad states to be avoided during the prefix. 
Since~$S_o$ contains the states that can not be reached from~$s_0$, it is neglected hereafter for the purpose of plan synthesis.

\begin{example}\label{example:partition}
This example illustrates the partition in Definition~\ref{def:partition}. 
Consider the toy product automaton~$\mathcal{P}$ in Figure~\ref{fig:partition}. For state~$s_0$, the set of reachable states is~$S_r=\{s_0,s_1,s_2,s_3,s_5,s_6,s_7,s_8,s_{10}\}$, the set of unreachable states is $S_o=\{s_4, s_9\}$, the states within an AMEC are~$S'_{c_1}=\{s_5,s_6,s_{10}\}$ and another AMEC $S'_{c_2}=\{s_7,s_8\}$, thus $S_c=S'_{c_1}\cup S'_{c_2}=\{s_5, s_6, s_7,s_8,s_{10}\}$,  the states that can be reached from $s_0$ but can not reach $S_c$ are $S_d=\{s_1, s_3\}$, and the states that~$s_0$ can reach outside~$S_c\cup S_d$ are $S_n=\{s_0,s_2\}$. \hfill $\blacksquare$
\end{example}

\subsubsection{Plan Prefix}\label{sec:plan-prefix}
Similar to~\cite{forejt2011quantitative,forejt2012pareto}, we first construct a modified sub-MDP~$\mathcal{Z}_{\texttt{pre}}$ of~$\mathcal{P}$ as~$\mathcal{Z}_{\texttt{pre}}\triangleq (S_{\texttt{p}},\, U_{\texttt{p}},\, E_{\texttt{p}},\, s_0,\,p_{\texttt{p}},\,c_{\texttt{p}})$, {where the set of states is given by $S_{\texttt{p}}= S_n \cup S_c$ with $S_n,S_c$ being defined in Definition~\ref{def:partition}.} 
The set of actions is given by~$U_{\texttt{p}}=U\cup\{\tau_0\}$ where~$\tau_0$ is a self-loop action. The set of transitions~$E_{\texttt{p}}$ is the subset of~$E$ associated with~$S_{\texttt{p}}$.
Moreover, the transition probability~$p_{\texttt{p}}$ is defined by (i)~$p_{\texttt{p}}(s,u,\check{s})=p_E(s,u,\check{s})$, $\forall s, \check{s}\in S_{\texttt{p}}$ where~$s\notin S_c$ and~$\forall u\in U(s)$; and (ii)~$p_{\texttt{p}}(s,\tau_0,s)=1$, $\forall s\in S_c$. Finally, the cost function~$c_{\texttt{p}}$ is defined by (i)~$c_{\texttt{p}}(s,u)=c_E(s,u)$, $\forall s\in S_n$ and~$\forall u\in U(s)$; and~(ii) $c_{\texttt{p}}(s,\tau_0)=0$, $\forall s\in S_c$.

Then, we find a policy for~$\mathcal{Z}_{\texttt{pre}}$ such that, starting from~$s_0$, it can reach the set of goal states~$S_c$ with a probability larger than~$1-\gamma$, while at the same time minimizing the expected total cost. Formally, consider the problem below:

\begin{problem}\label{problem:lp1}
Given the sub-MDP~$\mathcal{Z}_{\texttt{pre}}$, compute an {optimal stationary prefix} policy~$\boldsymbol{\pi}^\star_{\texttt{pre}}\in \overline{\boldsymbol{\pi}}$ that solves the problem
\begin{equation}\label{eq:problem_lp1}
\begin{split}
&\underset{\boldsymbol{\pi}\in \overline{\boldsymbol{\pi}}}{\boldsymbol{\min}} \; \; \bigg[\textbf{C}_{\texttt{pre}}(S_c) \triangleq \; \mathbb{E}^{\boldsymbol{\pi}}_{\mathcal{Z}_{\texttt{pre}}}\Big\{\sum_{t=0}^{\infty} c_{\texttt{p}}(s_t,u_t)\Big\} \bigg] \\
&\textrm{s.t.}\; \;{Pr}_{s_0}^{\boldsymbol{\pi}}(\Diamond S_c) \geq 1-\gamma,
\end{split}
\end{equation}
where~$s_0u_0s_1u_1\cdots$ is a run of~$\mathcal{Z}_{\texttt{pre}}$, $\overline{\boldsymbol{\pi}}$ is the set of all stationary policies,~the objective function is the expected total cost,~${Pr}_{s_0}^{\boldsymbol{\pi}}(\Diamond S_c)$ is the probability of reaching~$S_c$ from the initial state~$s_0$, under the policy~$\boldsymbol{\pi}$; and~$\gamma>0$ is from~\eqref{eq:objective}.
\hfill $\blacksquare$
\end{problem}

Note that the  objective function in~\eqref{eq:problem_lp1} is well-defined and finite due to the fact that~$\mathcal{Z}_{\texttt{pre}}$ is transient with respect to $S_n$, and is equal to the expected total cost of reaching~$S_c$ since the cost of staying within~$S_c$ is zero. {We omit the proof that $\mathcal{Z}_{\texttt{pre}}$ is transient here and refer the interested readers to~\cite{puterman2014markov, altman1996constrained}.}
Our proposed solution to Problem~\ref{problem:lp1} is based on transforming it into a constrained optimization problems for MDPs, which can be then solved using linear programming. The approach is inspired by~\cite{etessami2007multi,forejt2011quantitative,altman1996constrained}.
Particularly, denote by~$y_{s,u}$ the \emph{expected number of times} over the infinite horizon that the system is at state~$s$ and action~$u$ is taken, $\forall s\in S_n$ and~$\forall u\in U(s)$, which are often referred to as occupancy measures~\cite{altman1996constrained} as it holds $y_{s,u} = \sum_{t=0}^{\infty} {Pr}_{s_0}^{\boldsymbol{\pi}}[s_t=s,\,u_t=u]$,
where the probability is conditioned on a policy~$\boldsymbol{\pi}$ and the initial state~$s_0$. Note that an occupancy measure is a sum of probabilities, but not a probability itself. Consider the linear program:
\begin{subequations}\label{eq:LP1}
\begin{align}
&\underset{\{y_{s,u}\}}{\boldsymbol{\min}} \bigg{[} \textbf{C}_{\texttt{pre}}(S_c) \triangleq \sum_{(s,u)} \sum_{\check{s}\in S_{\texttt{p}}} y_{s,u} \, p_{\texttt{p}}(s,u,\check{s})\, c_{\texttt{p}}(s,u)\bigg{]} \label{eq:LP1_objective}\\
&\textrm{s.t.} \quad  \sum_{(s,u)} \sum_{\check{s}\in S_c} y_{s,u} \, p_{\texttt{p}}(s,u,\check{s}) \geq 1-\gamma; \label{eq:LP1_constraint1}\\
&\hspace{-0.2in}\sum_{u\in U(\check{s})} y_{\check{s},u} =\sum_{(s,u)} y_{s,u}\, p_{\texttt{p}}(s,u,\check{s})+ \mathbbm{1}(\check{s}=s_0), \; \forall \check{s}\in S_n; \label{eq:LP1_constraint2}\\
&\qquad  \; y_{s,u} \geq 0,\; \forall s \in S_n, \, \forall u \in U(s), \label{eq:LP1_constraint3}
\end{align}
\end{subequations}
where {$\sum_{(s,u)} \triangleq \sum_{s\in S_n} \sum_{u\in U(s)}$},  the indicator function~$\mathbbm{1}(\check{s}=s_0)=1$ if~$\check{s}=s_0$ and $\mathbbm{1}(\check{s}=s_0)=0$, otherwise. Denote by~$\text{C}_{\texttt{pre}}(S_c)$ the objective function associated with $S_c$. 
Let the solution of~\eqref{eq:LP1} be~$y_{\texttt{pre}}^\star=\{y_{s,u}^\star,\, s\in S_n, \, u\in U(s)\}$.
{Then the  optimal \emph{stationary} policy for the plan prefix, denoted by~$\boldsymbol{\pi}^\star_{\texttt{pre}}$, can be derived as follows}: the probability of choosing action~$u$ at state~$s$ equals to~$\boldsymbol{\pi}_{\texttt{pre}}^\star(s,\,u)=y^\star_{s,u}/(\sum_{u\in U(s)} y^\star_{s,u})$ if~$\sum_{u\in U(s)} y^\star_{s,u}\neq 0$; otherwise, the action at~$s$ can be chosen randomly,~$\forall s\in S_c$.

\begin{lemma}\label{lem:risk}
Given an optimal solution~$y_{\texttt{pre}}^\star$ of~\eqref{eq:LP1}, the associated policy~$\boldsymbol{\pi}^\star_{\texttt{pre}}$ ensures that~${Pr}_{s_0}^{\boldsymbol{\pi}^\star}(\Diamond S_c) \geq 1-\gamma$.
\end{lemma}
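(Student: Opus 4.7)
The plan is to exploit the classical correspondence between occupancy measures and stationary policies on transient MDPs, which is the foundation of LP methods for constrained MDPs (e.g., Altman~\cite{altman1996constrained}, Puterman~\cite{puterman2014markov}). The sub-MDP $\mathcal{Z}_{\texttt{pre}}$ has been constructed so that every state in $S_c$ is an absorbing self-loop under $\tau_0$ with zero cost, while $S_n$ is transient (which is inherited from the fact that from every $s\in S_n$ there is a positive-probability path into $S_c$, so with probability one the system eventually leaves $S_n$). This transience guarantees that the occupancy measures $y_{s,u}=\sum_{t=0}^{\infty} {Pr}^{\boldsymbol{\pi}}_{s_0}[s_t=s,u_t=u]$ are finite for all $s\in S_n$, so the LP variables are well-defined.

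First I would verify that the balance equations~\eqref{eq:LP1_constraint2} together with non-negativity~\eqref{eq:LP1_constraint3} characterize exactly the set of occupancy measures on $S_n$ induced by some stationary policy starting from $s_0$: one direction is a direct computation from the Markov property, and the converse is the standard argument that the randomized stationary policy $\boldsymbol{\pi}(s,u)=y_{s,u}/\sum_{u'} y_{s,u'}$ induces precisely the given $y$ (handling the degenerate states $s$ with $\sum_u y_{s,u}=0$, which are unreachable under $\boldsymbol{\pi}$ so the choice at such $s$ is irrelevant).

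Next I would interpret the left-hand side of constraint~\eqref{eq:LP1_constraint1}. The quantity $\sum_{s\in S_n, u\in U(s)} y_{s,u}\, p_{\texttt{p}}(s,u,\check{s})$ is the expected number of transitions from $S_n$ into the state $\check{s}$. For $\check{s}\in S_c$, because $\check{s}$ is absorbing and cannot be revisited from $S_n$, this expected number equals the probability that the first entry into $S_c$ occurs at $\check{s}$. Summing over $\check{s}\in S_c$ therefore gives exactly $Pr^{\boldsymbol{\pi}}_{s_0}(\Diamond S_c)$. Combining this identity with the definition of $\boldsymbol{\pi}^\star_{\texttt{pre}}$ from $y^\star_{\texttt{pre}}$ and feasibility of $y^\star_{\texttt{pre}}$ in~\eqref{eq:LP1_constraint1} yields $Pr^{\boldsymbol{\pi}^\star_{\texttt{pre}}}_{s_0}(\Diamond S_c)\geq 1-\gamma$, which is the claim.

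The main obstacle, and the step requiring the most care, is the converse of the occupancy-measure characterization: proving that the randomized stationary policy $\boldsymbol{\pi}^\star_{\texttt{pre}}$ constructed from $y^\star_{\texttt{pre}}$ actually \emph{generates} $y^\star_{\texttt{pre}}$ as its occupancy measure. This relies on transience of $\mathcal{Z}_{\texttt{pre}}$ restricted to $S_n$ so that the induced Markov chain under $\boldsymbol{\pi}^\star_{\texttt{pre}}$ admits a unique solution to the balance equations with the prescribed initial condition $\mathbbm{1}(\check{s}=s_0)$; otherwise, multiple distinct occupancy measures could satisfy~\eqref{eq:LP1_constraint2} and the identification would fail. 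Once transience is invoked, uniqueness follows and the argument closes.
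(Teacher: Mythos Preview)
Your proposal is correct and follows essentially the same line as the paper's proof: both invoke transience of $\mathcal{Z}_{\texttt{pre}}$ with respect to $S_n$ to ensure the occupancy measures are finite, then interpret the left-hand side of~\eqref{eq:LP1_constraint1} as the probability of first entry into the absorbing set $S_c$, so feasibility of $y^\star_{\texttt{pre}}$ gives the bound. You are in fact more explicit than the paper about the converse step (that $\boldsymbol{\pi}^\star_{\texttt{pre}}$ actually induces $y^\star_{\texttt{pre}}$ as its occupancy measure), which the paper leaves implicit by citing Lemma~3.3 of~\cite{etessami2007multi} and the standard LP--policy correspondence from~\cite{altman1996constrained}.
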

\begin{proof}
First,~$y_{s,u}$ is finite and well-defined since~$\mathcal{Z}_{\texttt{pre}}$ is transient with respect to $S_n$,. 
The second part of the proof is similar to Lemma~3.3 of~\cite{etessami2007multi}. The summation~$\sum_{(s,u)} \sum_{\check{s}\in S_c} y_{s,u} \, p_{\texttt{p}}(s,u,\check{s})$ is the expected number of times that~$\mathcal{Z}_{\texttt{pre}}$ transitions from any state in~$S_n$ into~$S_c$ for \emph{the first time}, under policy~$\boldsymbol{\pi}^\star_{\texttt{pre}}$ from the initial state~$s_0$. Since the system remains within~$S_c$ once it enters~$S_c$, the summation equals the probability of eventually reaching the set~$S_c$, which is lower-bounded by~$1-\gamma$. This completes the proof.
\end{proof}
\begin{figure}[t]
\begin{minipage}[t]{0.495\linewidth}
\centering
   \includegraphics[width =1.02\textwidth]{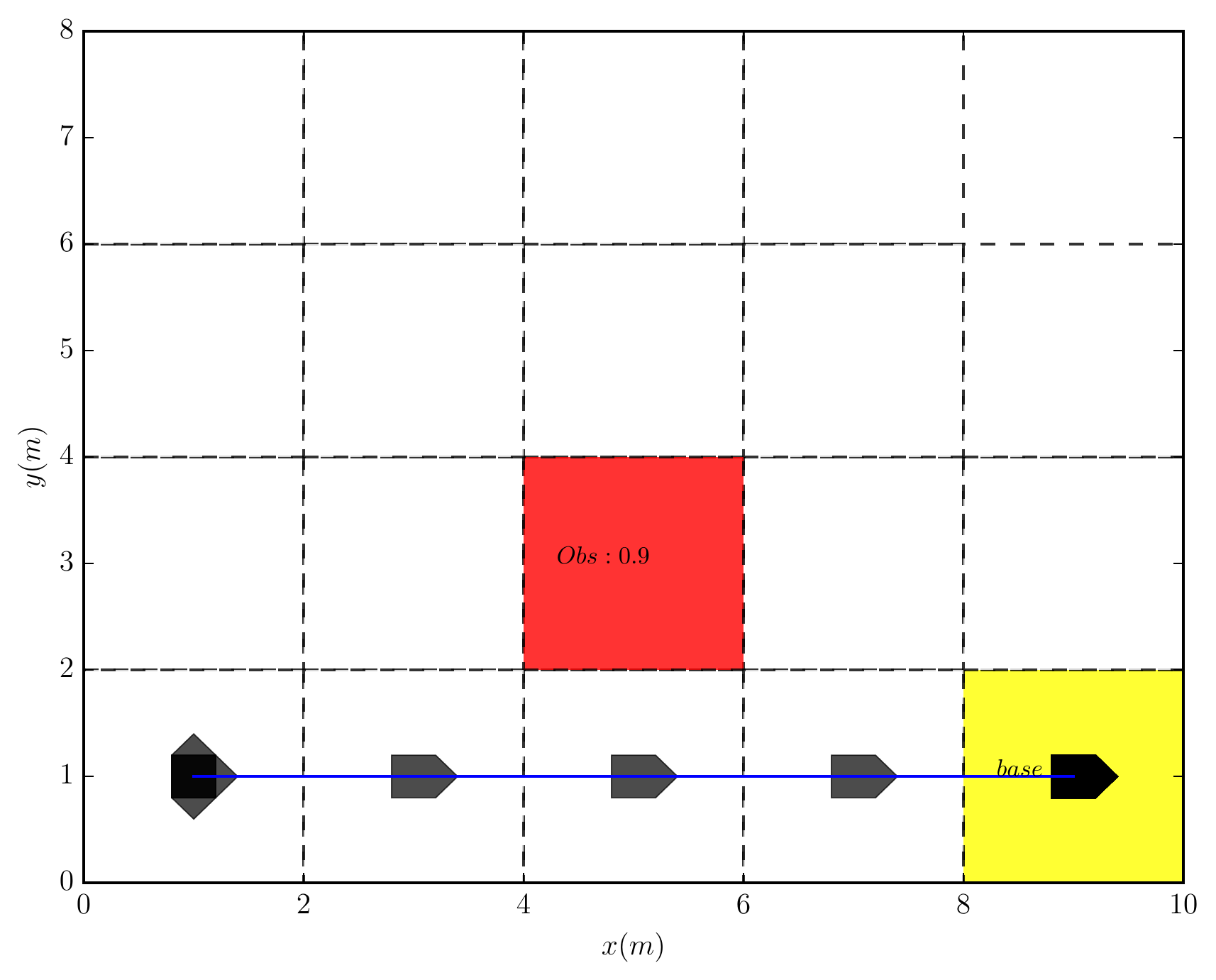}
  \end{minipage}
\begin{minipage}[t]{0.495\linewidth}
\centering
    \includegraphics[width =1.02\textwidth]{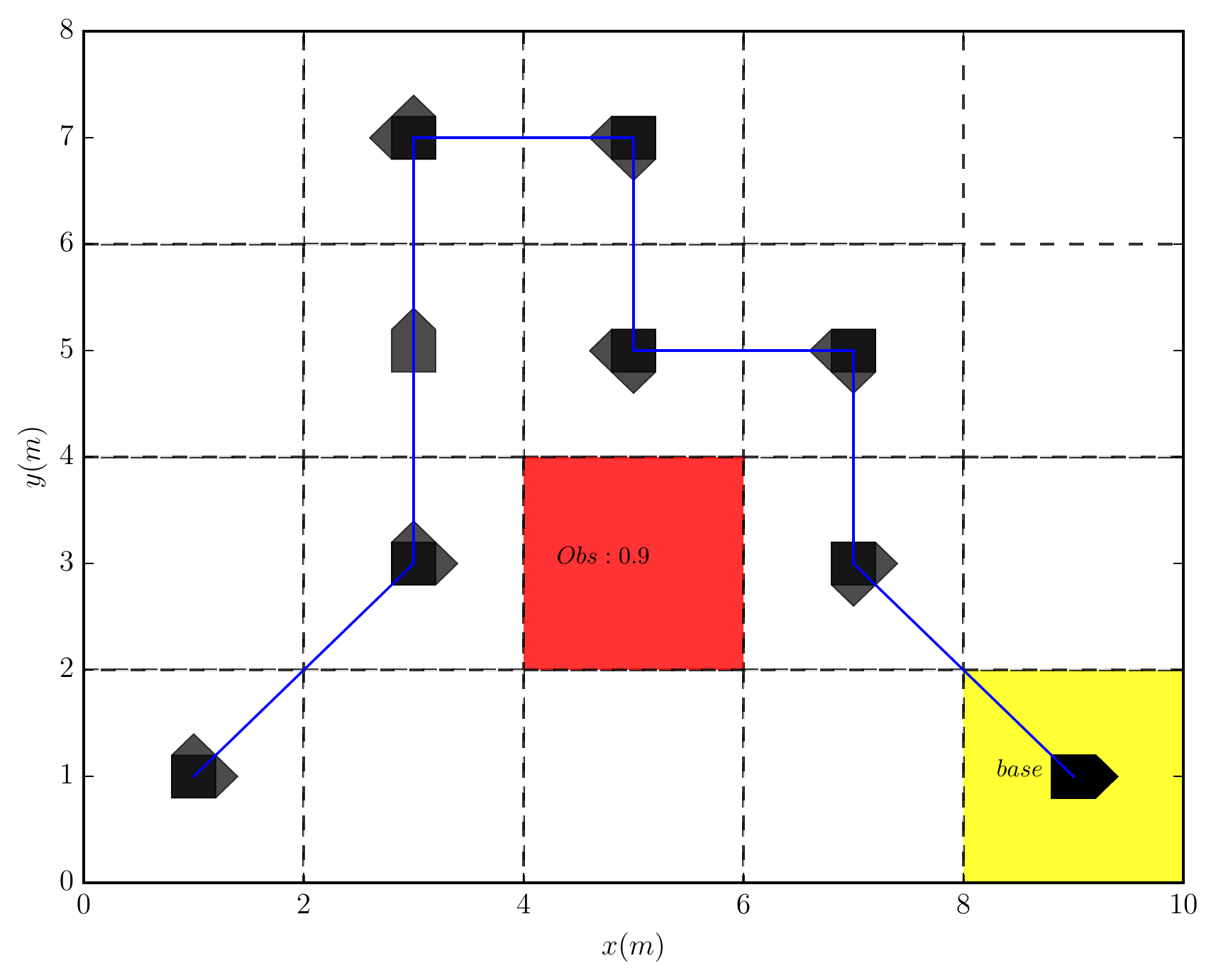}
  \end{minipage}
  \caption{Trajectories when setting~$\gamma=0.4$ (left) and~$\gamma=0$ (right). 
The task is to reach the yellow base  while avoiding the red cell. }
\label{fig:reach}
\end{figure}

\begin{example}\label{example:trade-off}
This example illustrates the important role of~$\gamma$ in the trade-off between reducing the expected total cost and minimizing the risk in Problem~\ref{problem:lp1}.
Consider the unicycle robot with action primitives illustrated in Figure~\ref{fig:action} and defined in Section~\ref{sec:example}.
The robot moves within partitioned cells as shown in Figure~\ref{fig:reach}, where the red cell has probability~$0.9$ to be occupied by an obstacle. 
Consider the task:~$\varphi=(\Diamond \square \texttt{b}) \wedge (\square \neg \texttt{obs})$, i.e., to reach the yellow base without crossing any obstacle.  
In what follows,  we solve~\eqref{eq:LP1} under risk factors~$\gamma = 0$ and~$\gamma = 0.4$ to derive two different optimal policies.
Figure~\ref{fig:reach} shows a shorter trajectory with lower expected total cost of about $12.6$ when a larger risk is allowed, compared with the right trajectory that avoids completely colliding with the obstacle, but with a much higher total cost of about $33.7$.
\hfill $\blacksquare$
\end{example}

\subsubsection{Plan Suffix with AMECs}\label{sec:plan-suffix}
In this section, we present an algorithm to synthesize the \emph{plan suffix} that minimizes the mean total cost within the AMECs, while ensuring that the system trajectory satisfies the accepting condition of~$\mathcal{P}$. {Note that the plan prefix~$\boldsymbol{\pi}^\star_{\texttt{pre}}$ from the previous section guarantees that the system enters $S_c$ from~$s_0$  with probability higher than~$1-\gamma$. Recall also that $S_c=\cup_{(S_c',U_c')\in \Xi_{acc}} S_c'$. Thus it is \emph{possible} that  the system enters \emph{any} set~$S_c'$ within $\Xi_{acc}$.
For this reason, we propose to treat each AMEC~$(S'_c,\,U'_c)\in \Xi_{acc}$ \emph{separately}, as each~$S_c'$ is associated with different~$U'_c$ and thus a different accepting condition for~$S'_c\cap I_{\mathcal{P}}^i$. }
Specifically, consider any AMEC~$(S_c',U_c')\in \Xi_{acc}$ and let~$I_c'\triangleq S'_c\cap I_{\mathcal{P}}'$, which is nonempty by the definition of an AMEC. 

Once the system enters any AMEC, 
most related work~\cite{ding2011mdp,forejt2011quantitative, baier2008principles} adopts the Round-Robin policy defined below:

\begin{definition}\label{def:round-robin}
For each state~$s_t\in S'_c$, create \emph{any} ordered sequence of actions from~$U_c'(s_t)$, denoted by~$\overline{U}(s_t)$ and its infinite repetition by~$\overline{U}^{\omega}(s_t)$.
Then at any stage~$t>0$, whenever the system reaches~$s_t\in S_c'$, the \emph{Round-Robin policy} instructs the system to take the \emph{next} action in $\overline{U}^{\omega}(s_t)$, starting from the first action in $\overline{U}^{\omega}(s_t)$.\hfill $\blacksquare$
\end{definition}

Namely, once the system enters~$S'_c$, the Round-Robin policy iterates over the allowed actions for each state, which in-turn ensures that all states in~$S'_c$ (which include~$I'_{c}$) are visited infinitely often. 
Detailed can be found in Lemma 10.119 in~\cite{baier2008principles}.

\begin{definition}\label{def:accepting-cycle}
An \emph{accepting cyclic path} of~$\mathcal{P}$, associated with~$S_c'$ and $I_c'$, is a finite path that starts from any state~$s_f\in I_c'$ and {ends in} any state~$s_g \in I_c'$, while remaining within~$S_c'$. \hfill $\blacksquare$
\end{definition}

Note that an accepting cyclic path does not necessarily start and end at the same state in~$I_c'$. Furthermore, we can define the mean cyclic cost of $\mathcal{P}$ under a stationary policy.

\begin{definition}\label{def:mean-cycle} 
The total cost of a cyclic path $P_a=s_0u_0s_1u_1\cdots s_{N_a}u_{N_a}$ is defined as
\begin{equation}\label{eq:cycle-total-cost}
\overline{\textbf{C}}_{\texttt{suf}}(P_a) \triangleq \sum_{t=0}^{N_a} c_D(s_t,u_t)
\end{equation}
where~$N_a\geq 1$ is the length of the path and $s_0,s_{N_a}\in I_c'$. 
Then its mean total cost is defined as $\textbf{C}_{\texttt{suf}}(P_a) \triangleq \frac{1}{N_a} \overline{\textbf{C}}_{\texttt{suf}}(P_a)$.
\hfill $\blacksquare$ 
\end{definition}

\begin{problem}\label{prob:suffix}
Find a {\emph{stationary} suffix} policy~$\boldsymbol{\pi}^\star_{\texttt{suf}}$ for~$\mathcal{P}$ within~$S_c'$  that minimizes the \emph{mean cyclic cost} 
\begin{equation}\label{eq:mean-cycle}
\textbf{C}_{\texttt{suf}}(S_c',U_c') = \mathbb{E}^{\boldsymbol{\pi}}_{P_a\in \mathbf{P}_a} \{\textbf{C}_{\texttt{suf}}(P_a)\}, 
\end{equation}
where $\mathbf{P}_a$ is the set of all accepting cyclic paths associated with the AMEC~$(S'_c,U'_c)$. \hfill $\blacksquare$
\end{problem}

{Inspired by~\cite{chatterjee2011two, chatterjee2011energy, randour2015percentile}, we formulate a Linear Program to solve the mean-payoff optimization problem.} First, we construct a modified sub-MDP~$\mathcal{Z}_{\texttt{suf}}$ of~$\mathcal{P}$ over~$S_c'$ by splitting~$I_c'$ into two virtual copies:~$I_{\texttt{in}}$ which only has incoming transitions into~$I_c'$ and~$I_{\texttt{out}}$ that has only outgoing transitions from~$I_c'$.
Formally, we define~$\mathcal{Z}_{\texttt{suf}}\triangleq (S_{\texttt{e}},\, U_{\texttt{e}},\, E_{\texttt{e}},\, y_0,\,p_{\texttt{e}},\,c_{\texttt{e}})$, 
where the set of states is~$S_{\texttt{e}}=(S_c'\backslash I_c')\cup I_{\texttt{in}} \cup I_{\texttt{out}}$ with~$I_{\texttt{in}}=\{s_{f}^{\texttt{in}},\, \forall s_{f}\in I_c'\}$ and~$I_{\texttt{out}} =\{s_{f}^{\texttt{out}},\, \forall s_{f}\in I_c'\}$ the virtual copies of~$I_c'$.
The set of control actions is~$U_{\texttt{e}}=U\cup\{\tau_0\}$, where~$\tau_0$ is a self-loop action. The set of state-action pairs~$E_{\texttt{e}}\subset S_{\texttt{e}}\times U_{\texttt{e}}$ is defined by (i)~$(s,u)\in E_{\texttt{e}}$,~$\forall s\in S_c'\backslash I_c' $ and~$u\in U_c'(s)$; (ii)~$(s,\tau_0)\in E_{\texttt{e}}$,~$\forall s\in I_{\texttt{in}}$; and (iii)~$(s_f^{\texttt{out}},u)\in E_{\texttt{e}}$,~$\forall s_f \in I_c'$ and~$u\in U_c'(s_f)$. Moreover, $y_0$ is the initial distribution of all states in~$S_c'$ that can be reached by taking a transition from states in~$S_n'$, defined by
$$y_0(s)=\sum_{\check{s}\in S_n'} \sum_{u\in U_{\texttt{p}}(\check{s})}p_{\texttt{p}}(\check{s},u,s) y_{\texttt{pre}}(\check{s},u), \forall s\in (S_c'\backslash I_c')\cup I_{\texttt{out}},$$
where~$\{y_{\texttt{pre}}(s,u)\}$  are the variables of~\eqref{eq:LP1}. 
Furthermore, the transition probability~$p_{\texttt{e}}$ is defined in five cases below:
(a) for transitions within~$S_c'\backslash I_c'$, it holds that~$p_{\texttt{e}}(s,u,\check{s})=p_E(s,u,\check{s})$, $\forall s,\check{s} \in  S_c'\backslash I_c'$, $\forall u\in U_{\texttt{e}}(s)$; 
(b) for transitions originated from~$I_{\texttt{out}}$, it holds that~$p_{\texttt{e}}(s_{f}^{\texttt{out}},u,\check{s})=p_E(s_f,u,\check{s})$, $\forall s_{f}^{\texttt{out}}\in I_{\texttt{out}}$, $\forall u\in U_{\texttt{e}}(s_{f}^{\texttt{out}})$ and $\forall \check{s} \in S_c'\backslash I_c'$; 
(c) for transitions into~$I_{\texttt{in}}$, it holds that~$p_{\texttt{e}}(s,u,s_{f}^{\texttt{in}})=p_E(s,u,s_f)$, $\forall s \in S_c'\backslash I_c'$,~$\forall u\in U_{\texttt{e}}(s)$ and~$\forall s_f^{\texttt{in}}\in I_{\texttt{in}}$; 
(d) for transitions from~$I_{\texttt{out}}$ to~$I_{\texttt{in}}$, it holds that~$p_{\texttt{e}}(s_{f}^{\texttt{out}},u,s_{f}^{\texttt{in}})=p_E(s_f,u,s_f)$, $\forall s_f^{\texttt{out}} \in I_{\texttt{out}}$ and~$\forall u\in U_{\texttt{e}}(s_f^{\texttt{out}})$;
and (e) for transitions within~$I_{\texttt{in}}$,~$p_{\texttt{e}}(s_{f}^{\texttt{in}},\tau_0,s_{f}^{\texttt{in}})=1$, $\forall s_{f}^{\texttt{in}}\in I_{\texttt{in}}$.
Lastly, the cost function satisfies~$c_{\texttt{e}}(s,u)= c_E(s,u)$, $\forall s\in (S_{\texttt{e}}\backslash I_{\texttt{in}})$,~$\forall u\in U_{\texttt{e}}(s)$, and~$c_{\texttt{e}}(s_{f}^{\texttt{in}},\tau_0)=0$, $\forall s_{f}^{\texttt{in}} \in I_{\texttt{in}}$. 

\begin{remark}\label{remark:initial-dist}
The initial distribution~$y_0$ of~$\mathcal{Z}_{\texttt{suf}}$ indicates how likely it is that the system controlled by the plan prefix~$\boldsymbol{\pi}^\star_{\texttt{pre}}$ will enter the AMEC~$(S'_c,\,U_c')$ via each state inside~$S_c'$. \hfill $\blacksquare$
\end{remark}

Let also $S_{\texttt{e}}'\triangleq S_{\texttt{e}}\backslash I_{\texttt{in}}$ and denote by~$z_{s,u}$ the \emph{long-run frequency}  with which the system is at state~$s$ and  the action~$u$ is applied,~$\forall s\in S_{\texttt{e}}'$ and~$\forall u\in U_{\texttt{e}}(s)$. Then, we can formulate the following linear program to solve Problem~\ref{prob:suffix}:
\begin{subequations}\label{eq:LP2}
\begin{align}
&\hspace{-0.15in} \underset{\{z_{s,u}\}}{\boldsymbol{\min}} \bigg[ \textbf{C}_{\texttt{suf}}(S_c',U_c') \triangleq \sum_{(s,u)}\sum_{\check{s}\in S_{\texttt{e}}}  z_{s,u}\, p_{\texttt{e}}(s,u,\check{s})\, c_{\texttt{e}}(s,u) \bigg] \label{eq:LP2_objective}\\
& \textrm{s.t.} \; \sum_{(s,u)} \; \sum_{\check{s}\in I_{\texttt{in}}} z_{s,u}\,p_{\texttt{e}}({s},u,\check{s}) = \sum_{s\in S_{\texttt{e}}'} y_0(s); \label{eq:LP2_constraint2}\\
&\sum_{u\in U_{\texttt{e}}(s)} z_{s,u}  = \sum_{(\check{s},u)} z_{\check{s},u}\,p_{\texttt{e}}(\check{s},u,s)+y_0(s), \; \forall s\in S_{\texttt{e}}'; \label{eq:LP2_constraint1}\\
&\; z_{s,u} \geq 0, \; \forall s\in S_{\texttt{e}}',\; \forall u \in U_{\texttt{e}}(s); \label{eq:LP2_constraint3}
\end{align}
\end{subequations}
where {$\sum_{(s,u)}\triangleq \sum_{s\in S_{\texttt{e}}'} \sum_{u\in U_{\texttt{e}}(s)}$}, the first constraint ensures that~$I_{\texttt{in}}$ is eventually reached, while the second constraint balances the incoming and outgoing flow at each state. 
Let its solution be~$z_{\texttt{suf}}^\star=\{z_{s,u}^\star,\, \forall s\in S_{\texttt{e}}', \, \forall u\in U_{\texttt{e}}(s)\}$.
{Then, the optimal \emph{stationary} policy for the plan suffix, denoted by~$\boldsymbol{\pi}^\star_{\texttt{suf}}$, can be derived as follows}:
the probability of choosing action~$u$ at state~$s$ equals to~$\boldsymbol{\pi}_{\texttt{suf}}^\star(s,u)=z^\star_{s,u}/(\sum_{u\in U_{\texttt{e}}(s)} z^\star_{s,u})$ if~$ \sum_{u\in U_{\texttt{e}}(s)} z^\star_{s,u} \neq 0$;
otherwise the action at~$s$ is chosen randomly, $\forall s\in S_{\texttt{e}}'$. Note that~$\boldsymbol{\pi}_{\texttt{suf}}^\star(s_f,u)=\boldsymbol{\pi}_{\texttt{suf}}^\star(s_f^{\texttt{out}},u)$, $\forall s_f\in I_c'$ and~$\forall u\in U_c'(s_f)$. {Namely, once the system reaches any state~$s_g\in I_{c}'$, the control policy at~$s_g$ will be the control policy for~$s_g^{\texttt{out}}\in I_{\texttt{out}}$, according to the solution of~\eqref{eq:LP2}.}

\begin{remark}\label{remark:compare-with-fu}
The initial distribution is derived from~\eqref{eq:LP1}, instead of being arbitrarily set as in~\cite{fu2015pareto}; Moreover,~\eqref{eq:LP2_constraint2} ensures that only~$I_c'$ is intersected infinitely often, instead of enforcing that \emph{all} states in the set~$S_c'$ are visited infinitely often as in~\cite{fu2015pareto}.\hfill $\blacksquare$  
\end{remark}
\begin{lemma}\label{lem:suffix}
If~\eqref{eq:LP2} has a solution, then the plan suffix~$\boldsymbol{\pi}^\star_{\texttt{suf}}$ solves Problem~\ref{prob:suffix} for the chosen AMEC~$(S_c',U_c')\in \Xi_{acc}$. 
\end{lemma}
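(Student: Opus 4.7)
The plan is to exploit the standard equivalence between occupancy measures of a \emph{transient} MDP and its stationary policies, applied to the sub-MDP $\mathcal{Z}_{\texttt{suf}}$ that was obtained by splitting $I_c'$ into an entry copy $I_{\texttt{in}}$ (absorbing via $\tau_0$) and an exit copy $I_{\texttt{out}}$. Since $I_{\texttt{in}}$ is made absorbing and contains no outgoing cost, and since $(S_c',U_c')$ is an AMEC (so trajectories never leave $S_c'$ by Lemma~10.119 of~\cite{baier2008principles}), every trajectory starting from the initial distribution $y_0$ reaches $I_{\texttt{in}}$ with probability~$1$, i.e.\ $\mathcal{Z}_{\texttt{suf}}$ is transient on $S_{\texttt{e}}'$. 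Hence each variable $z_{s,u}$ in~\eqref{eq:LP2} admits the interpretation of the expected total number of times the pair $(s,u)$ is visited before absorption, and constraints~\eqref{eq:LP2_constraint1}--\eqref{eq:LP2_constraint3} are exactly the Kirchhoff-type conditions characterizing such measures. I would invoke the classical bijection (e.g.\ Theorem~3.1 of~\cite{altman1996constrained}) between these feasible occupancy measures and the randomized stationary policies of $\mathcal{Z}_{\texttt{suf}}$, under which the rule $\boldsymbol{\pi}^\star_{\texttt{suf}}(s,u)=z^\star_{s,u}/\sum_{u'}z^\star_{s,u'}$ recovers a well-defined stationary policy from any LP optimum.

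Next I would verify that the splitting construction encodes accepting cyclic paths in the sense of Definition~\ref{def:accepting-cycle}: a sample path in $\mathcal{Z}_{\texttt{suf}}$ starting at some $s\in I_{\texttt{out}}$ and absorbing at some $s^{\texttt{in}}_g\in I_{\texttt{in}}$ is in bijection with a finite path in $(S_c',U_c')$ from $s_f\in I_c'$ to $s_g\in I_c'$ that never leaves $S_c'$. Composing these one-to-one with the flow into $I_{\texttt{in}}$ prescribed by~\eqref{eq:LP2_constraint2}, which equals the total initial mass $\sum_{s} y_0(s)$, shows that under $\boldsymbol{\pi}^\star_{\texttt{suf}}$ the system returns to $I_c'$ almost surely. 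Coupled with the strong connectedness of the AMEC, this gives a renewal decomposition of the infinite sample path into an i.i.d.-like sequence of accepting cycles; in particular $I_c'$ is hit infinitely often, so the Rabin acceptance is met.

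Having secured the policy structure, the remaining task is to argue \emph{optimality} for the criterion~\eqref{eq:mean-cycle}. For this I would use that, under a stationary policy, each occupancy $z^\star_{s,u}$ weights the transition cost $c_{\texttt{e}}(s,u)$ by the expected number of visits per renewal cycle; thus the LP objective~\eqref{eq:LP2_objective} equals the expected total cost accumulated along one cycle. Dividing by the expected cycle length (which is also linear in the same occupancies) and invoking the renewal-reward theorem (cf.~\cite{chatterjee2011two,randour2015percentile}) then matches the LP value to $\textbf{C}_{\texttt{suf}}(S_c',U_c')$ in~\eqref{eq:mean-cycle}. The converse inequality is handled by the standard direction of the same equivalence: any stationary policy with strictly smaller mean cyclic cost induces a feasible $\{z_{s,u}\}$ with strictly smaller LP value, contradicting optimality of $z^\star_{\texttt{suf}}$.

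The step I expect to be the hardest is exactly the last one, because the LP as written targets a \emph{total} cost to absorption rather than a ratio of cost to cycle length; hence the equivalence with the mean-payoff criterion is not a one-line consequence of LP duality. The delicate point is that the expected cycle length itself depends on the policy, so the objective~\eqref{eq:LP2_objective} and the mean cyclic cost differ by a policy-dependent normalizer unless the LP implicitly fixes the expected number of renewals per unit of initial mass. Constraint~\eqref{eq:LP2_constraint2} does pin down this normalization through the flux into $I_{\texttt{in}}$, and together with~\eqref{eq:LP2_constraint1} it makes the total expected ``time spent'' per cycle a constant multiple of the total flow into $I_{\texttt{in}}$. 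A careful renewal-theoretic or LP-duality argument, in the spirit of the mean-payoff LP of~\cite{chatterjee2011two,chatterjee2011energy}, is what I would need to make this rigorous and thereby conclude the proof.
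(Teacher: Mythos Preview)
Your proposal follows essentially the same route as the paper --- interpret the variables $z_{s,u}$ as occupancy measures of the transient sub-MDP $\mathcal{Z}_{\texttt{suf}}$, argue that the splitting of $I_c'$ into $I_{\texttt{in}}/I_{\texttt{out}}$ makes absorption correspond to completing an accepting cyclic path, and use the AMEC property to guarantee that no mass escapes $S_c'$ --- but you go considerably deeper than the paper does. The paper's own proof is a two-sentence sketch: it simply asserts that the objective in~\eqref{eq:LP2} equals the mean cyclic cost of accepting cyclic paths for~$I_c'$, and that by the definition of an AMEC any path remains within~$S_{\texttt{e}}'$ when only actions in~$U_c'(s)$ are used. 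It does not invoke occupancy-measure duality, renewal theory, or any explicit optimality argument.

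The point you flag as hardest --- that~\eqref{eq:LP2_objective} is an expected \emph{total} cost to absorption, not a ratio of cost to cycle length, so its minimizer need not coincide with the minimizer of the mean-payoff/mean cyclic cost unless the expected cycle length is somehow fixed --- is a legitimate concern, and the paper's proof does not address it. Your tentative resolution (``constraint~\eqref{eq:LP2_constraint2} pins down the normalization'') is not quite right as stated: \eqref{eq:LP2_constraint2} fixes the total flux into~$I_{\texttt{in}}$ at~$\sum_s y_0(s)$, i.e.\ the total absorbed mass, but it does \emph{not} fix~$\sum_{s,u} z_{s,u}$, which is the expected cycle length. So the policy-dependent normalizer you worry about is indeed free to vary across feasible solutions, and a genuine mean-payoff LP argument (as in~\cite{chatterjee2011two,chatterjee2011energy,brazdil2015multigain}) or a fractional-programming reduction would be needed to make the equivalence rigorous. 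In short, your instincts about where the difficulty lies are sharper than the paper's proof; the paper simply asserts the identification and moves on, deferring the mean-payoff equivalence to Lemma~\ref{lem:rabin-suffix}.
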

\begin{proof}
First, by Definition~\ref{def:accepting-cycle}, the objective in~\eqref{eq:LP2} equals the mean cyclic cost of all accepting cyclic paths for~$I_c'$.
Moreover, by the definition of an AMEC, any  path~remains within~$S_{\texttt{e}}'$ by choosing only actions  within~$U_c'(s)$ at each state~$s\in S_{\texttt{e}}'$. 
\end{proof}

\begin{figure}[t]
\begin{minipage}[t]{0.495\linewidth}
\centering
   \includegraphics[width =0.97\textwidth]{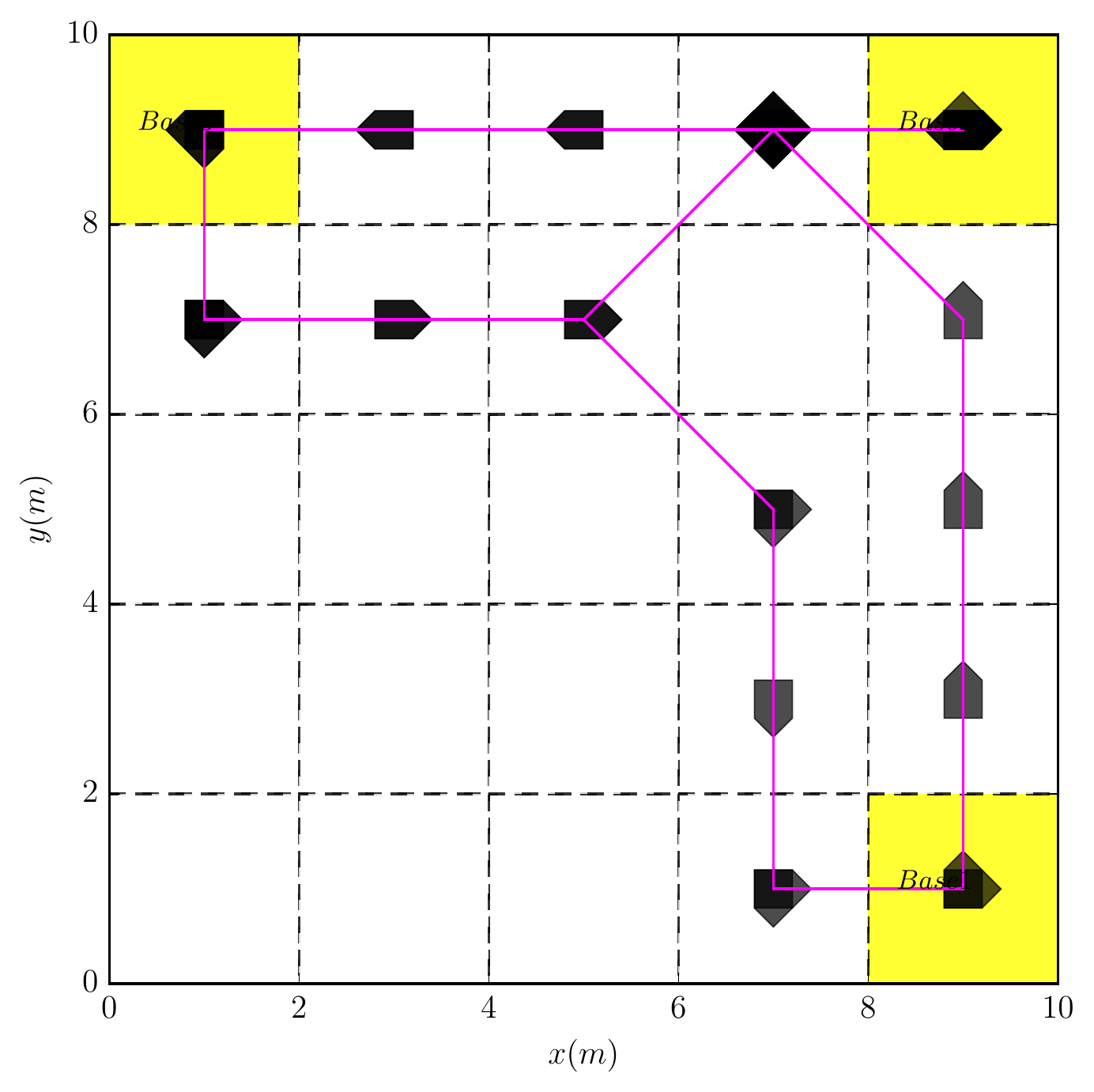}
  \end{minipage}
\begin{minipage}[t]{0.495\linewidth}
\centering
    \includegraphics[width =0.97\textwidth]{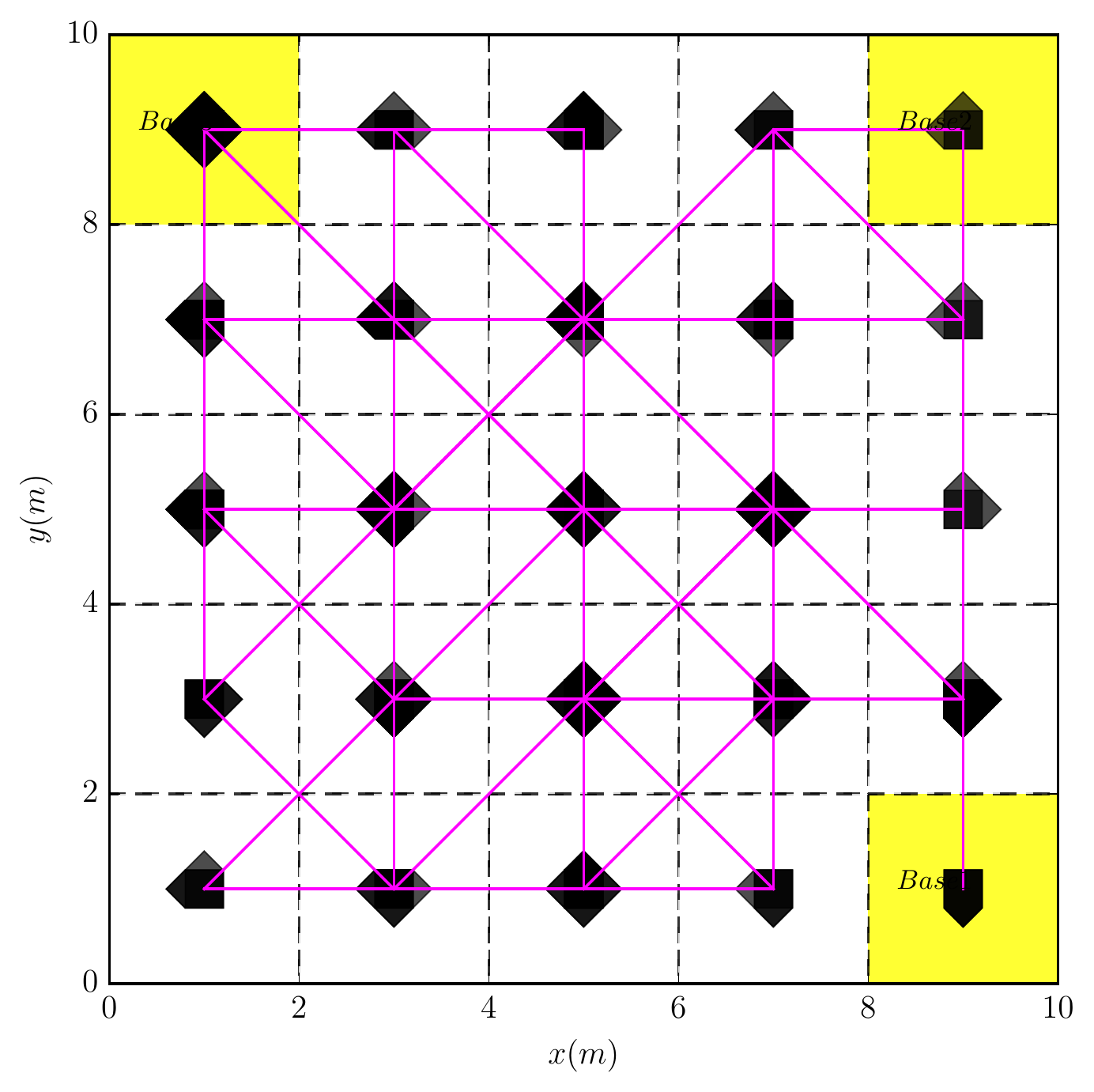}
  \end{minipage}
  \caption{Simulated trajectory  under~$\boldsymbol{\pi}_{\texttt{suf}}^\star$ (left) and under the Round-Robin policy (right), see Example~\ref{example:importance-suffix}.}
\label{fig:suffix}
\end{figure}

\begin{lemma}\label{lem:rabin-suffix}
Let $\boldsymbol{\tau}_{\mathcal{P}}$ be the set of all accepting runs of~$\mathcal{P}$ that enter~$S_c'$ after a finite number of steps. If~$\tau_{\mathcal{P}}\in \boldsymbol{\tau}_{\mathcal{P}}$ is generated under~$\boldsymbol{\pi}^\star_{\texttt{suf}}$, then~$\tau_{\mathcal{P}}$ satisfies the accepting condition of~$\mathcal{P}$. 
{Moreover, the mean total cost in~\eqref{eq:total-cost} equals the mean cyclic cost in~\eqref{eq:mean-cycle}, i.e.,}
$\mathbb{E}_{\tau_{\mathcal{P}}\in \boldsymbol{\tau}_{\mathcal{P}}} \{\textbf{\textup{Cost}}(\tau_{\mathcal{P}})\} = \textbf{\textup{C}}_{\texttt{suf}}(S'_c,\, U'_c)$.
\end{lemma}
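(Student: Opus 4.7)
My plan is to split the lemma into its two assertions and prove each separately by exploiting the AMEC structure and the flow-balance constraints of the linear program~\eqref{eq:LP2}.

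First I would argue that any run $\tau_{\mathcal{P}}$ generated under $\boldsymbol{\pi}^\star_{\texttt{suf}}$ that eventually enters $S_c'$ satisfies the Rabin accepting condition associated with the pair $(H^i_{\mathcal{P}}, I^i_{\mathcal{P}})$ from which the AMEC $(S_c',U_c')$ was extracted. Two properties must hold: (a) after finitely many steps the run avoids $H^i_{\mathcal{P}}$ forever; and (b) it visits $I_c' = S_c' \cap I^i_{\mathcal{P}}$ infinitely often. Claim (a) is immediate from the construction in Section~\ref{sec:product}: by step (iii) of the AMEC computation $S_c' \cap H^i_{\mathcal{P}} = \emptyset$, and since $(S_c',U_c')$ is an end component, restricting to $u \in U_c'(s)$ guarantees $Post(s,u) \subseteq S_c'$, so $\boldsymbol{\pi}^\star_{\texttt{suf}}$, which is supported on $U_c'(s)$, never exits $S_c'$ once the system has entered.

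For claim (b) I would use the first constraint~\eqref{eq:LP2_constraint2}, noting that in $\mathcal{Z}_{\texttt{suf}}$ the set $I_{\texttt{in}}$ is a sink due to the self-loop $\tau_0$. Combined with the flow-balance~\eqref{eq:LP2_constraint1}, this constraint forces the total expected inflow into $I_{\texttt{in}}$ to equal the total initial mass $\sum_{s \in S_{\texttt{e}}'} y_0(s)$, which in a transient structure is equivalent to reaching $I_{\texttt{in}}$ with probability one from the initial distribution $y_0$. Translating back to $\mathcal{P}$, this means $I_c'$ is reached with probability one. To upgrade this to \emph{infinitely-often} visits, I would invoke the strong Markov property together with the prescription $\boldsymbol{\pi}_{\texttt{suf}}^\star(s_f,u) = \boldsymbol{\pi}_{\texttt{suf}}^\star(s_f^{\texttt{out}},u)$: each time the original system hits some $s_f \in I_c'$, it behaves afresh as if at $s_f^{\texttt{out}}\in I_{\texttt{out}}$, so the same LP-based reachability argument yields another visit to $I_c'$ almost surely. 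Iterating gives infinitely many visits with probability one, establishing the accepting condition.

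For the second assertion I would use a renewal-style ergodic argument. Under $\boldsymbol{\pi}^\star_{\texttt{suf}}$ and by claim (b), the tail of $\tau_{\mathcal{P}}$ decomposes into a sequence of accepting cyclic paths $P_a^{(1)}, P_a^{(2)}, \dots$ that successively start and end in $I_c'$. By stationarity of $\boldsymbol{\pi}^\star_{\texttt{suf}}$ and the strong Markov property, the pairs $(\overline{\textbf{C}}_{\texttt{suf}}(P_a^{(k)}), N_a^{(k)})$ form a stationary sequence with finite first moments (finiteness of $\mathbb{E}\{N_a^{(k)}\}$ follows from the almost-sure reachability just proved, and finiteness of the cost from positivity and boundedness of $c_E$). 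Grouping terms cycle-by-cycle in~\eqref{eq:total-cost} and applying the ergodic theorem yields
\[
\textbf{Cost}(\tau_{\mathcal{P}}) \;=\; \frac{\mathbb{E}\{\overline{\textbf{C}}_{\texttt{suf}}(P_a)\}}{\mathbb{E}\{N_a\}} \quad \text{a.s.,}
\]
and the standard identification of the LP variables $z^\star_{s,u}$ with long-run state-action frequencies in $\mathcal{Z}_{\texttt{suf}}$ shows that this ratio equals the objective~\eqref{eq:LP2_objective}, which is $\textbf{C}_{\texttt{suf}}(S_c',U_c')$ as defined in~\eqref{eq:mean-cycle}.

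The main obstacle, I expect, is rigorously justifying the identification of the occupancy measures $z^\star_{s,u}$ in the \emph{split} sub-MDP $\mathcal{Z}_{\texttt{suf}}$ with the long-run frequencies of $(s,u)$ in $\mathcal{P}$; in particular, one must verify that merging the copies $s_f^{\texttt{in}}$ and $s_f^{\texttt{out}}$ back into $s_f$ preserves the per-cycle cost/length interpretation, and that~\eqref{eq:LP2_objective} equals $\mathbb{E}_{P_a}\{\textbf{C}_{\texttt{suf}}(P_a)\}$ rather than a differently weighted average. The remaining ingredients (AMEC closure, LP reachability, ergodic averaging) are essentially standard.
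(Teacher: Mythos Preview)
Your proposal is correct and follows essentially the same approach as the paper: the paper's proof also argues (i) that the trajectory stays in $S_c'$ (hence avoids $H^i_{\mathcal{P}}$ after finitely many steps) and, via the LP constraints, decomposes into infinitely many accepting cyclic paths hitting $I_c'$; and (ii) that under the stationary policy the induced Markov chain has a stationary distribution over cyclic paths, so the limit-average cost equals the mean cyclic cost. Your version is in fact more detailed than the paper's---you spell out the strong-Markov/renewal structure and the ergodic averaging that the paper only gestures at---and the technical concern you flag about identifying the $z^\star_{s,u}$ in the split model $\mathcal{Z}_{\texttt{suf}}$ with long-run frequencies in $\mathcal{P}$ is a point the paper's proof also leaves implicit.
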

\begin{proof}
By~\eqref{eq:LP2}, any system trajectory of~$\mathcal{P}$ under~$\boldsymbol{\pi}^\star_{\texttt{suf}}$ contains infinite occurrences of accepting cyclic paths. Since any accepting cyclic path starts from and ends in~$I_c'$ (which is finite), $\tau_{\mathcal{P}}$ intersects with~$I_c'$ infinitely often. Moreover, since any accepting cyclic path remains within~$S_c'$, $\tau_{\mathcal{P}}$ remains within~$S_c'$ for all time after entering~$S_c'$. 
In other words,~$\tau_{\mathcal{P}}$ intersects with $H^i_{\mathcal{P}}$ a finite number of times before entering~$S_c'$ and then intersects~$I^i_{\mathcal{P}}$ infinitely often after entering~$S_c'$, which satisfies the Rabin accepting condition of~$\mathcal{P}$. 
{To show the second part, notice that the product $\mathcal{P}$ under~$\boldsymbol{\pi}_{\texttt{suf}}^\star$ evolves as a Markov chain and the set of all accepting cyclic paths within~$S_c'$ has a stationary distribution. By viewing any accepting run~$\tau_{\mathcal{P}}$ as the \emph{concatenation} of an infinite number of cyclic paths, the mean total cost of $\tau_{\mathcal{P}}$ defined in~\eqref{eq:objective} over an infinite time horizon equals the mean cyclic cost in~\eqref{eq:mean-cycle} of all cyclic paths contained in $\tau_{\mathcal{P}}$. This result is important in showing the equivalence between Problems~\ref{prob:main} and~\ref{prob:suffix} later in Theorem~\ref{thm:whole}.}
\end{proof}

\begin{example}\label{example:importance-suffix}
This example illustrates the difference between the plan suffix obtained by~\eqref{eq:LP2} and the Round-Robin policy. 
Consider the same robot model from Example~\ref{example:trade-off} and the partitioned workspace in Figure~\ref{fig:suffix}.
The task is to surveil three base stations in the corners, i.e.~$\varphi=(\square \Diamond \texttt{b1})\wedge (\square \Diamond \texttt{b2}) \wedge (\square \Diamond \texttt{b3})$. 
The plan prefix is derived by solving~\eqref{eq:LP1} but two different plan suffixes are used: one using~\eqref{eq:LP2} and the Round-Robin policy. 
Figure~\ref{fig:suffix} shows the simulated trajectory under these two policies. It can be seen that the trajectory under the optimal plan suffix approximates the shortest route to cross all base stations, while the trajectory under the Round-Robin policy exhibits a rather random behavior. \hfill $\blacksquare$  
\end{example}

\subsubsection{Plan Synthesis when AECs do Not Exist}\label{sec:non-AEC}
The synthesis algorithms proposed in Sections~\ref{sec:plan-prefix} and~\ref{sec:plan-suffix} rely on the assumption that the set of AMECs~$\Xi_{acc}$ of~$\mathcal{P}$ is nonempty which, however, might not hold in many scenarios. 
In this case, most existing techniques proposed in~\cite{baier2008principles, forejt2011quantitative, ding2011mdp, ding2014optimal} can not be applied.
In this section, we first provide a simple example where no AECs exist, and then propose an approach to synthesize a \emph{relaxed} plan prefix and suffix.

\begin{example}\label{example:non-AEC}
This example provides a robot model~$\mathcal{M}$ and its task~$\varphi$ for which no AECs exist in the product automaton~$\mathcal{P}$.  
Consider the MDP~$\mathcal{M}$ in Figure~\ref{fig:toy-example} that transitions between two states~($S_1$, $S_2$) with probability~$1$ using the action~$f$. Note that $S_1$ has only probability~$0.01$ of being occupied by an obstacle and~$S_2$ is the base station. 
The task is to surveil the base station while avoiding obstacles, i.e.,~$\varphi=(\square \Diamond \texttt{b})\wedge (\square \neg \texttt{obs})$. The associated DRA is shown in Figure~\ref{fig:toy-example}. The resulting~$\mathcal{P}$ is shown in Figure~\ref{fig:toy-product}, where the set of states~$H_i^{\mathcal{P}}$ to avoid in the suffix is in red and the set of states~$I_i^{\mathcal{P}}$ to intersect infinitely often in green. The reason that no AECs exist in~$\mathcal{P}$ is because by definition an AEC~$(S',\, \{f\})$ should include \emph{all} successor states that are reachable by the single action~$f$. Then, starting from any green state in~$I_i^{\mathcal{P}}$,  the set of reachable states eventually intersect with the red states in~${H}_i^{\mathcal{P}}$. \hfill $\blacksquare$
\end{example}

When no AECs exist in~$\mathcal{P}$, the probability of satisfying the task under \emph{any} policy is \emph{zero}. However, it is still important to identify those policies that ensure high probability of avoiding bad states over long time intervals.
Consequently, we propose to use an accepting SCC (ASCC)  of~$\mathcal{P}$ as the \emph{relaxed} AMEC, due to the following lemma.

\begin{figure}[t]
\begin{minipage}[t]{0.4\linewidth}
\centering
   \includegraphics[width =1.01\textwidth]{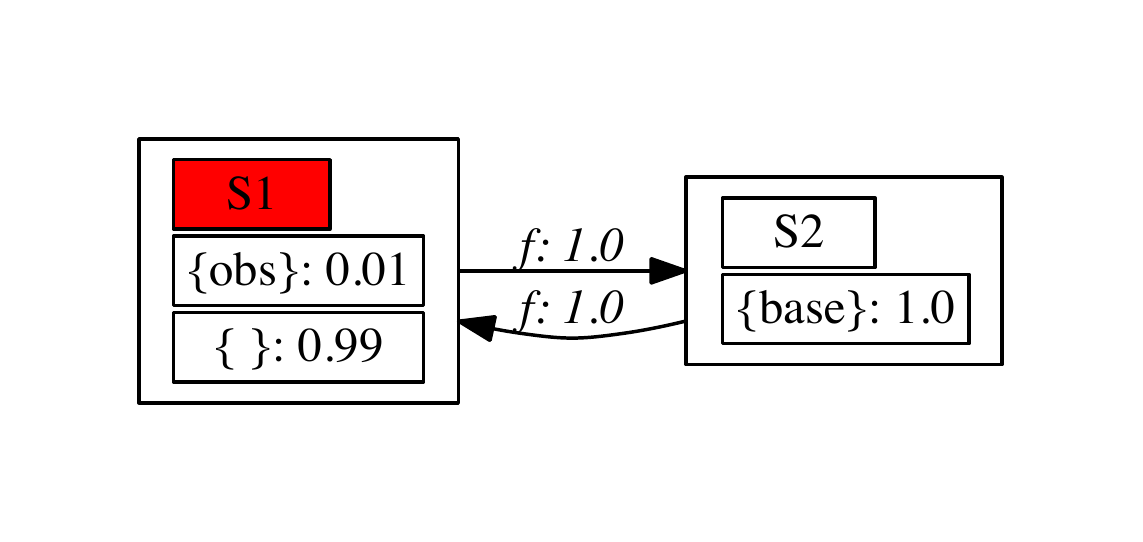}
  \end{minipage}
\begin{minipage}[t]{0.6\linewidth}
\centering
    \includegraphics[width =1.0\textwidth]{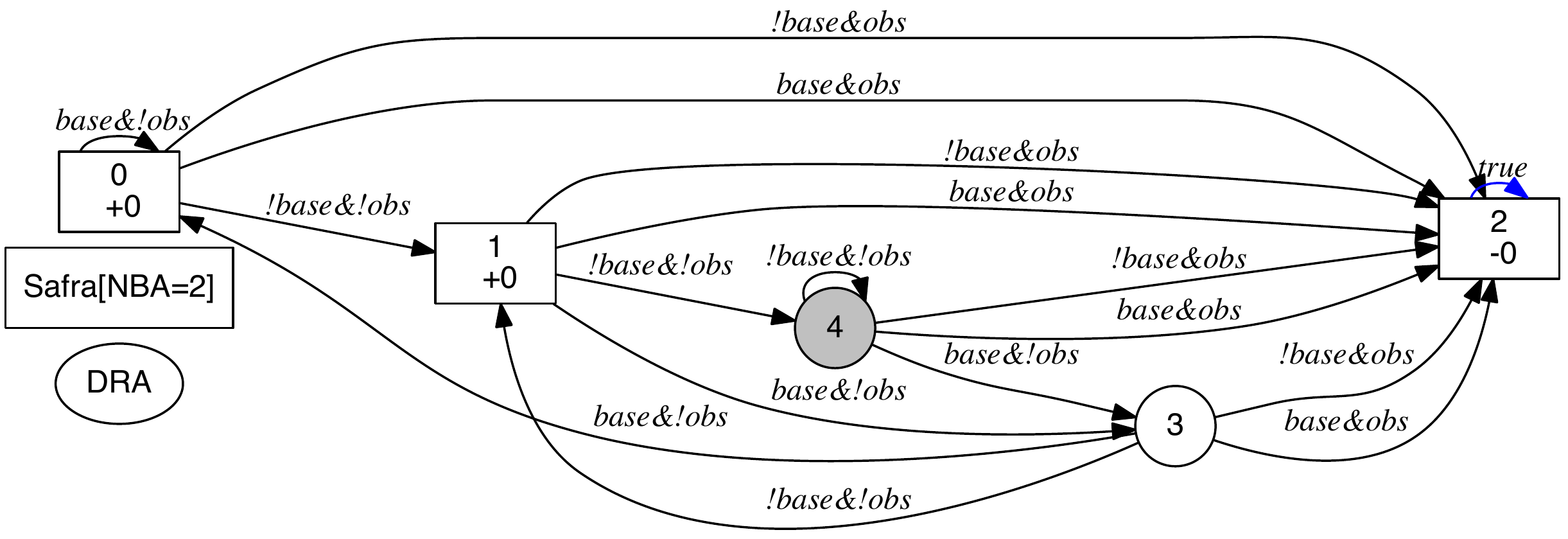}
  \end{minipage}
  \caption{
The MDP~$\mathcal{M}$ (left) and DRA~$\mathcal{A}_{\varphi}$ (right, derived via~\cite{klein2007ltl2dstar, git_mdp_tg}) described in Example~\ref{example:non-AEC}, with one accepting pair~$(\{2\},\{0,1\})$.}
\label{fig:toy-example}
\end{figure}

\begin{figure}[t]
     \centering
     \includegraphics[width=0.45\textwidth]{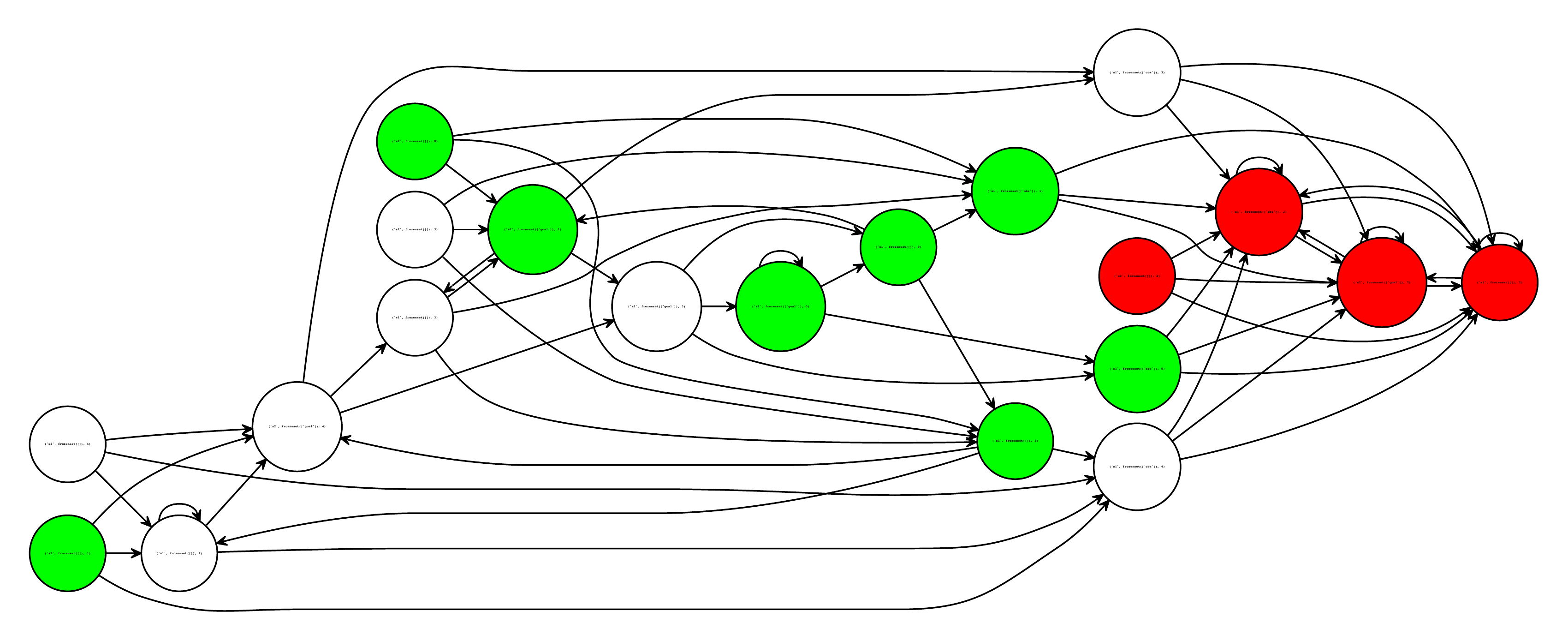}
     \caption{The product~$\mathcal{P}$ of~$\mathcal{M}$ and~$\mathcal{A}_{\varphi}$ in Figure~\ref{fig:toy-example}. The state and edge names are omitted as the structure is of importance here. At least one green state should be visited infinitely often while avoiding all red states. Note all transitions are driven by the action~$f$.}
     \label{fig:toy-product}
\end{figure}

\begin{lemma}\label{lem:exist}
Assume there exists one infinite path of~$\mathcal{P}$ that is accepting. Then, there exists at least one SCC of~$\mathcal{P}$ that intersects with~$I^i_{\mathcal{P}}$ but not with~$H^i_{\mathcal{P}}$, for at least one pair~$(H^i_{\mathcal{P}},\, I^i_{\mathcal{P}})\in \textup{Acc}_{\mathcal{P}}$. 
\end{lemma}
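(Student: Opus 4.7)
The plan is to exploit the accepting condition directly: an accepting path must visit $H^i_{\mathcal{P}}$ only finitely often, so it has an infinite suffix lying entirely in $S\setminus H^i_{\mathcal{P}}$, and we can extract a strongly connected set from this suffix that contains a state of $I^i_{\mathcal{P}}$.

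First I would fix the accepting pair $(H^i_{\mathcal{P}}, I^i_{\mathcal{P}})$ witnessing acceptance of the given path $\pi = s_0 s_1 s_2 \cdots$ and let $T\geq 0$ be an index with $s_t\notin H^i_{\mathcal{P}}$ for all $t\geq T$, which exists by the accepting condition. Since $S$ is finite and $\pi$ intersects $I^i_{\mathcal{P}}$ infinitely often, a standard pigeonhole argument gives at least one state $s^\star\in I^i_{\mathcal{P}}$ that appears infinitely often in $\pi$. Letting $V^\infty$ denote the set of states appearing infinitely often, we have $V^\infty\cap H^i_{\mathcal{P}}=\emptyset$ (only finitely many visits to $H^i_{\mathcal{P}}$) and $s^\star\in V^\infty\cap I^i_{\mathcal{P}}$. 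For any two states $s,s'\in V^\infty$ both recur beyond time $T$, so a finite subpath of $\pi$ connects $s$ to $s'$ (and symmetrically $s'$ to $s$) using transitions of $\mathcal{P}$ that never leave $S\setminus H^i_{\mathcal{P}}$. Hence $V^\infty$ lies inside a single strongly connected component $C$ of the digraph $G_{\mathcal{P}}$ restricted to $S\setminus H^i_{\mathcal{P}}$, and this $C$ satisfies $C\cap H^i_{\mathcal{P}}=\emptyset$ while $C\cap I^i_{\mathcal{P}}\ni s^\star$, giving exactly the accepting condition required of an ASCC.

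The main obstacle is reconciling the phrase ``SCC of $\mathcal{P}$'' with the fact that, in the unrestricted digraph $G_{\mathcal{P}}$, the maximal SCC containing $V^\infty$ could in principle absorb additional $H^i_{\mathcal{P}}$ states through side paths that $\pi$ never uses. I would resolve this by working throughout in the subgraph obtained by deleting the states of $H^i_{\mathcal{P}}$, which is consistent with the authors' informal description of an ASCC as a maximal strongly connected set of states that satisfies the accepting condition (rather than as a maximal SCC of the full digraph). A one-line check at the end confirms the extracted set is maximal within this subgraph. Outside of this bookkeeping subtlety, the argument is really just a pigeonhole observation on the tail of $\pi$, so no heavy machinery (value iteration, fixed points, or probabilistic reasoning over policies) should be needed.
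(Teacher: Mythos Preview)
Your argument is correct and follows essentially the same route as the paper's: both use a pigeonhole observation on the tail of the accepting path (after the last visit to $H^i_{\mathcal{P}}$) to extract a strongly connected set that meets $I^i_{\mathcal{P}}$ and avoids $H^i_{\mathcal{P}}$. The paper's version is even terser---it simply exhibits a single cycle $s_k\cdots s_f\cdots s_k$ with $s_f\in I^i_{\mathcal{P}}$ rather than your full set $V^\infty$---and it sidesteps the maximality issue you worry about because the paper's working definition of an SCC (Section~\ref{sec:EC}) does not insist on maximality, so the cycle itself already qualifies.
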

\begin{proof}
As mentioned before, an infinite path of~$\mathcal{P}$, denoted by~$R_{\mathcal{P}}$, is accepting if for at least one pair~$(H^i_{\mathcal{P}}, I^i_{\mathcal{P}})\in \text{Acc}_{\mathcal{P}}$ it holds that~$R_{\mathcal{P}}$ intersects with all states in~$H^i_{\mathcal{P}}$ finitely often while with~$I^i_{\mathcal{P}}$ infinitely often.
Since both~$H^i_{\mathcal{P}}$ and~$I^i_{\mathcal{P}}$ are finite, there exists a cyclic path~$s_k \cdots s_f \cdots s_k$ of~$\mathcal{P}$ that contains at least one~$s_f \in I^i_{\mathcal{P}}$ and does not contain any state within~$H^i_{\mathcal{P}}$.
By definition, this cyclic path is a SCC of~$\mathcal{P}$ that intersects with~$I^i_{\mathcal{P}}$ but not with~$H^i_{\mathcal{P}}$.
This completes the proof.
\end{proof}

Denote the set of SCCs in~$\mathcal{P}$ as~$\Omega \triangleq \{S_1', S_2',\cdots,S_C'\}$, where~$S_c'\subseteq S$. This set can derived using Tarjan’s algorithm~\cite{baier2008principles, git_mdp_tg}.
Moreover, denote by~$\Omega^i_{acc}=\{S_c'\in \Omega\,|\, S_c'\cap I^i_{\mathcal{P}}\neq \emptyset,\, S_c'\cap H^i_{\mathcal{P}} = \emptyset\}$  the set of SCCs that satisfy the accepting conditions associated with~$(H^i_{\mathcal{P}},\, I^i_{\mathcal{P}})\in \text{Acc}_{\mathcal{P}}$. 
Lemma~\ref{lem:exist} ensures that $\Omega^i_{acc}\neq \emptyset$ for at least one pair~$(H^i_{\mathcal{P}},\, I^i_{\mathcal{P}})\in \text{Acc}_{\mathcal{P}}$. 
Therefore, the union~$\Omega_{acc} \triangleq \cup_{i=1,\cdots,N}\, \Omega^i_{acc}$ is not empty.

Now the union~$S_c\triangleq \cup_{S'_c\in \Omega_{acc}}S'_c$ serves as the set of states the system should enter, starting from the initial state, and then remain inside any of the ASCC to satisfy the accepting condition.
Again the first step is to formulate a Linear Program that minimizes the expected total cost of reaching~$S_c$ from~$s_0$, while ensuring the risk is upper-bounded by the chosen~$\gamma_{\texttt{prex}}>0$.
It can be done analogously as in~\eqref{eq:LP1} but over~$S_n\triangleq S\backslash S_c$ (which is omitted here).
Denote the objective function by $\textup{\textbf{C}}_{\texttt{prex}}(S_c)$ and its set of variables by~$\{y_{\texttt{prex}}(s,u)\}$ and the associated relaxed plan prefix as~$\boldsymbol{\pi}_{\texttt{prex}}$. Same as in Section~\ref{sec:plan-suffix}, it is possible that the system under the policy~$\boldsymbol{\pi}_{\texttt{prex}}$ can enter any ASCC in $\Omega_{acc}$. Assume that the system enters~$S_c'\in \Omega_{acc}$.
Different from an AMEC~$(S_c',U_c')\in \Xi_{acc}$, the action set at each state of~$S_c'\in \Omega_{acc}$ is \emph{not} constrained. Thus, there is no guarantee that the system will stay within~$S_c'$ after entering it.

Therefore, the second step is to synthesize the relaxed plan suffix that keeps the system inside~$S_c'$ to satisfy the accepting condition with the maximal probability. 
Define the set~$I_c'=S_c'\cap I^i_{\mathcal{P}}$, which is not empty 
for an ASCC~$S_c'$. Then, an accepting cyclic path of~$\mathcal{P}$ associated with~$I_c'$, and the cyclic cost associated with~$S_c'$ and~$I_c'$ can be defined similarly as in Definition~\ref{def:accepting-cycle}.
Formally, we consider the following problem:

\begin{problem}\label{prob:suffix_rex}
Find a control policy for~$\mathcal{P}$ that minimizes the mean cyclic cost associated with the ASCC~$S_c'$:
$\mathbb{E}^{\boldsymbol{\pi}}_{P_a\in \mathbf{P}_a} \{\textbf{C}_{\texttt{sufx}}(P_a)\}$, 
where~$\mathbf{P}_a$ is the set of all accepting cyclic paths associated with $S'_c$ and $\textbf{C}_{\texttt{sufx}}$ is defined as in Definition~\ref{def:mean-cycle}; while \emph{at the same time} maximizing the probability that the cyclic paths stay within~$S_c'$.
\hfill $\blacksquare$
\end{problem}

{In Problem~\ref{prob:suffix_rex}, the first objective of minimizing the mean cyclic cost corresponds to minimizing the mean total cost in~\eqref{eq:objective} in Problem~\ref{prob:main}. The objective of maximizing the probability of the system staying within the ASCC $S'_c$ corresponds to minimizing the frequency with which the system will reach the bad states that violate the task specifications. It constitutes a relaxation of the risk constraint \eqref{eq:objective} in Problem~\ref{prob:main}.}
To solve Problem~\ref{prob:suffix_rex}, first we construct a modified MDP~$\mathcal{Z}_{\texttt{sufx}}$ over~$S_c'$, which is similar to~$\mathcal{Z}_{\texttt{suf}}$ in Section~\ref{sec:plan-suffix}. 
The set~$I_c'$ is split into two virtual copies:~$I_{\texttt{in}}$ which only has incoming transitions and~$I_{\texttt{out}}$ that has only outgoing transitions.
Formally, we define~$\mathcal{Z}_{\texttt{sufx}}=(S_{\texttt{r}},\, U_{\texttt{r}},\, E_{\texttt{r}},\, y_0,\,p_{\texttt{r}},\,c_{\texttt{r}})$, 
where the set of states is~$S_{\texttt{r}}=(S_c'\backslash I_c')\cup I_{\texttt{in}} \cup I_{\texttt{out}} \cup \{s_{bad}\}$, with~$I_{\texttt{in}}=\{s_{f}^{\texttt{in}},\, \forall s_{f}\in I_c'\}$ and~$I_{\texttt{out}} =\{s_{f}^{\texttt{out}},\, \forall s_{f}\in I_c'\}$ the two virtual copies of~$I_c'$, and~$s_{bad}$ is a virtual bad state.
The set of control actions is given by~$U_{\texttt{r}}=U\cup\{\tau_0\}$, where~$\tau_0$ is a self-loop action. 
The set of transition is~$E_{\texttt{r}}\subset S_{\texttt{r}}\times U_{\texttt{r}}$ which satisfies that (i)~$(s,u)\in E_{\texttt{r}}$,~$\forall s\in S_c'$ and~$u\in U(s)$; (ii)~$(s,\tau_0)\in E_{\texttt{r}}$,~$\forall s\in I_{\texttt{in}}$; and (iii)~$(s_{bad},\tau_0)\in E_{\texttt{r}}$.
Moreover, $y_0$ is the initial distribution of states in~$S_c'$ based on the transition from states in~$S_n'$:
$$y_0(s)=\sum_{(\check{s},u)}p_{\texttt{p}}(\check{s},u,s)\, y_{\texttt{prex}}(\check{s},u),\; \forall s\in (S_c'\backslash I_c')\cup I_{\texttt{out}},$$
where~$\sum_{(\check{s},u)}\triangleq \sum_{\check{s}\in S_n'} \sum_{u\in U_{\texttt{p}}(\check{s})}$ and~$\{y_{\texttt{prex}}(s,u)\}$ are the variables solutions from the synthesis of the relaxed plan prefix, and~$y_0(s_{bad}) =0$.
Furthermore, the transition probability~$p_{\texttt{r}}$ is defined in seven cases below:
(a) for  transitions within~$S_c'\backslash I_c'$, it holds that~$p_{\texttt{r}}(s,u,\check{s})=p_E(s,u,\check{s})$, $\forall s,\check{s} \in  S_c'\backslash I_c'$, $\forall u\in U_{\texttt{r}}(s)$;
(b) for transitions originated from~$I_{\texttt{out}}$, it holds that~$p_{\texttt{r}}(s_{f}^{\texttt{out}},u,\check{s})=p_E(s_f,u,\check{s})$, $\forall s_{f}^{\texttt{out}}\in I_{\texttt{out}}$, $\forall u\in U_{\texttt{r}}(s_{f}^{\texttt{out}})$ and $\forall \check{s} \in S_c'\backslash I_c'$;
(c) for transitions into~$I_{\texttt{in}}$, it holds that~$p_{\texttt{r}}(s,u,s_{f}^{\texttt{in}})=p_E(s,u,s_f)$, $\forall s \in S_c'\backslash I_c'$,~$\forall u\in U_{\texttt{r}}(s)$ and~$\forall s_f^{\texttt{in}}\in I_{\texttt{in}}$;
(d) for transitions from~$I_{\texttt{out}}$ to~$I_{\texttt{in}}$, it holds that~$p_{\texttt{r}}(s_{f}^{\texttt{out}},u,s_{f}^{\texttt{in}})=p_E(s_f,u,s_f)$, $\forall s_f^{\texttt{out}} \in I_{\texttt{out}}$ and~$\forall u\in U_{\texttt{r}}(s_f^{\texttt{out}})$;
(e) for transitions into the bad state~$s_{bad}$, it holds that~$p_{\texttt{r}}(s,u,s_{bad})=p_E(s,u,\check{s})$, $\forall s \in S_c'\backslash I_{\texttt{in}}$, $\forall \check{s}\in S\backslash S_c'$ and~$u\in U_{\texttt{r}}(s)$;
(f) each state within~$I_{\texttt{in}}$ is included in a self-loop such that~$p_{\texttt{r}}(s_{f}^{\texttt{in}},\tau_0,s_{f}^{\texttt{in}})=1$, $\forall s_{f}^{\texttt{in}}\in I_{\texttt{in}}$;
(g) the bad state is included in a self-loop such that~$p_{\texttt{r}}(s_{bad},\tau_0,s_{bad})=1$.
Finally, the cost function~$c_{\texttt{r}}$ is defined in two cases: (i)~$c_{\texttt{r}}(s,u)= c_E(s,u)$, $\forall s\in S_{\texttt{r}}\backslash I_{\texttt{in}}$,~$\forall u\in U_{\texttt{r}}(s)$; and (ii)~$c_{\texttt{r}}(s_{f}^{\texttt{in}},\tau_0)=0$, $\forall s_{f}^{\texttt{in}} \in I_{\texttt{in}}$ and~$c_{\texttt{r}}(s_{bad},\tau_0)=0$.

\begin{remark}\label{remark:z-relax}
Note that~$E_{\texttt{r}}$ contains all actions for each state in~$S_c'$, compared with~$E_{\texttt{e}}$ as allowed by the AMEC.
\hfill $\blacksquare$
\end{remark}

Let~$S_{\texttt{r}}'\triangleq S_{\texttt{r}}\backslash (I_{\texttt{in}}\cup \{s_{bad}\})$ and $S_{\texttt{r}}''\triangleq S_{\texttt{r}}\backslash \{s_{bad}\}$.
We can also show that~$\mathcal{Z}_{\texttt{sufx}}$ above is~$S_{\texttt{r}}'-$transient.
Then, to solve Problem~\ref{prob:suffix_rex}, we rely on a technique proposed in~\cite{trevizan2016heuristic} to deal with dead ends in Stochastic Shortest Path (SSP) problems. 
First we introduce a large positive penalty for reaching the dead state, denoted by~$d>0$. Then, we modify~\eqref{eq:LP2} as follows:
denote by~$z_{s,u}$ the long-run frequency  with which the system is at state~$s$ and the action~$u$ is taken,~$\forall s\in S_{\texttt{r}}'$ and~$\forall u\in U_{\texttt{r}}(s)$. We want to minimize the mean total cost of reaching~$I_{\texttt{in}}$ from~$I_{\texttt{out}}$, while minimizing the probability of leaving~$S_{\texttt{s}}''$. In particular, we consider the following optimization: 
\begin{subequations}\label{eq:LP3}
\begin{align}
&\underset{\{z_{s,u}\}}{\boldsymbol{\min}} \bigg{[} \text{C}_{\texttt{sufx}}(S_c',d) \triangleq \sum_{(\check{s},u)}\Big{(}\sum_{s\in S_{\texttt{r}}''}  \eta(\check{s},u,s)\, c_{\texttt{r}}(\check{s},u)\nonumber\\
&\quad \qquad \qquad \qquad \qquad \qquad \qquad + \eta(\check{s},u,s_{bad})\, d \Big{)}\bigg{]} \label{eq:LP3_objective}\\
& \textrm{s.t.} \;  \sum_{u\in U_{\texttt{r}}(s)} z_{s,u}  = \sum_{(\check{s},u)} \eta(\check{s},u,s)+y_0(s), \; \forall s\in S_{\texttt{r}}';  \label{eq:LP3_constraint1}\\
&\sum_{(\check{s},u)} \; \bigg{(}\sum_{s\in I_{\texttt{in}}} \eta(\check{s},u,s)+ \eta(\check{s},u,s_{bad})  \bigg{)}= \sum_{s\in S_{\texttt{r}}'} y_0(s);  \label{eq:LP3_constraint2}\\
&\; \qquad z_{s,u} \geq 0, \; \forall s\in S_{\texttt{r}}',\; \forall u \in U_{\texttt{r}}(s);\label{eq:LP3_constraint3}
\end{align}
\end{subequations}
where the notation {$\sum_{(\check{s},u)}\triangleq \sum_{\check{s}\in S_{\texttt{r}}'} \sum_{u\in U_{\texttt{r}}(s)} $}, the variables satisfy that $\eta(\check{s},u,s)\triangleq z_{\check{s},u}\, p_{\texttt{r}}(\check{s},u,s)$,  $\eta(\check{s},u,s_{bad})\triangleq z_{\check{s},u}\, p_{\texttt{r}}(\check{s},u,s_{bad})$,~$\text{C}_{\texttt{sufx}}(S_c',d)$ denotes the objective function as the summation of the mean cost of reaching~$I_{\texttt{in}}$ and the expected penalty of reaching~$s_{bad}$. 
The first constraint balances the incoming and outgoing flow at each state, while the second constraint ensures that~$I_{\texttt{in}}\cup \{s_{bad}\}$ are eventually reached. 
Let the optimal solution of~\eqref{eq:LP3} be~$z^\star_{\texttt{sufx}}=\{z_{s,u}^\star,\, s\in S_{\texttt{r}}',  u\in U_{\texttt{r}}(s)\}$.
{Then, the optimal  \emph{stationary} policy for the relaxed plan suffix, denoted by~$\boldsymbol{\pi}^\star_{\texttt{sufx}}$, can be derived as follows}:
for states in~$S_{\texttt{r}}'$, the optimal policy is given by~$\pi_{\texttt{sufx}}^\star(s,u)=z^\star_{s,u}/(\sum_{u\in U_{\texttt{r}}(s)} z^\star_{s,u})$ if~$ \sum_{u\in U_{\texttt{r}}(s)} z^\star_{s,u} \neq 0$; otherwise the action at~$s$ is chosen randomly, $\forall s\in S_{\texttt{r}}'$. Note that~$\pi_{\texttt{sufx}}^\star(s_f,u)=\pi_{\texttt{sufx}}^\star(s_f^{\texttt{out}},u)$, $\forall s_f \in I_c'$ and~$\forall u\in U(s_f)$.

\begin{lemma}\label{lem:reach-prob}
Under the relaxed plan suffix~$\boldsymbol{\pi}^\star_{\textup{\texttt{sufx}}}$, the probability of~$\mathcal{Z}_{\texttt{sufx}}$ reaching~$I_{\texttt{in}}$ from~$I_{\texttt{out}}$ while staying within~$S_{\texttt{r}}''$ over an infinite horizon, is lower bounded by~$1-\gamma_{\texttt{sufx}}(d)$, where~${\gamma_{\texttt{sufx}}(d)}\triangleq \sum_{\check{s}\in S_{\texttt{r}}'}\sum_{u\in U_{\texttt{r}}(\check{s})}z^\star_{\texttt{sufx}}(\check{s},u) \,p_{\texttt{r}}(\check{s},u,s_{bad})$.
\end{lemma}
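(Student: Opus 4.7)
The plan is to exploit the absorbing structure of $\mathcal{Z}_{\texttt{sufx}}$ together with the flow-balance constraint in~\eqref{eq:LP3_constraint2}. By cases~(f) and~(g) of the transition probability $p_{\texttt{r}}$, every state in $I_{\texttt{in}}\cup\{s_{bad}\}$ carries a probability-one self-loop and is therefore absorbing; meanwhile $\mathcal{Z}_{\texttt{sufx}}$ is $S_{\texttt{r}}'$-transient, so the optimal occupancy measures $\{z^\star_{s,u}\}$ returned by~\eqref{eq:LP3} are finite and admit the interpretation $z^\star_{s,u}=\sum_{t=0}^{\infty}\text{Pr}^{\boldsymbol{\pi}^\star_{\texttt{sufx}}}_{y_0}[s_t=s,\,u_t=u]$ for each $s\in S_{\texttt{r}}'$ and $u\in U_{\texttt{r}}(s)$.

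The first step is to identify the expected first-passage flow into an absorbing set with the probability of eventually reaching it. Since $s_{bad}$ is accessible only from $S_{\texttt{r}}'$ and, once entered, never left, each sample path contributes at most one transition into $s_{bad}$, so
\begin{equation*}
\text{Pr}^{\boldsymbol{\pi}^\star_{\texttt{sufx}}}_{y_0}[\Diamond s_{bad}] = \sum_{\check s\in S_{\texttt{r}}'}\sum_{u\in U_{\texttt{r}}(\check s)} z^\star_{\check s,u}\,p_{\texttt{r}}(\check s,u,s_{bad}) = \gamma_{\texttt{sufx}}(d),
\end{equation*}
and the analogous identity holds with $s_{bad}$ replaced by $I_{\texttt{in}}$.

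The second step is to substitute $\eta(\check s,u,s)=z^\star_{\check s,u}\,p_{\texttt{r}}(\check s,u,s)$ into constraint~\eqref{eq:LP3_constraint2} and split the inner sum over $I_{\texttt{in}}\cup\{s_{bad}\}$, which yields the balance
\begin{equation*}
\text{Pr}^{\boldsymbol{\pi}^\star_{\texttt{sufx}}}_{y_0}[\Diamond I_{\texttt{in}}] + \gamma_{\texttt{sufx}}(d) = \sum_{s\in S_{\texttt{r}}'} y_0(s).
\end{equation*}
Viewing $y_0$ as the initial distribution of the suffix stage conditioned on the prefix having reached $S_c'$ so that $\sum_s y_0(s)=1$, this rearranges to $\text{Pr}^{\boldsymbol{\pi}^\star_{\texttt{sufx}}}_{y_0}[\Diamond I_{\texttt{in}}]=1-\gamma_{\texttt{sufx}}(d)$. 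Finally, because case~(e) of $p_{\texttt{r}}$ redirects every transition leaving $S_c'$ into $s_{bad}$ and $s_{bad}$ is absorbing, reaching $I_{\texttt{in}}$ is equivalent to reaching it while remaining inside $S_{\texttt{r}}''=S_{\texttt{r}}\setminus\{s_{bad}\}$, which establishes the claimed lower bound.

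The main obstacle is the bookkeeping step that equates an occupancy-measure flow with a reachability probability: this relies on the $S_{\texttt{r}}'$-transience of $\mathcal{Z}_{\texttt{sufx}}$ (so that the infinite series defining $z^\star_{s,u}$ converges) and on the absorbing self-loops of cases~(f) and~(g) (so that no sample path is double-counted when accounting for first-passage transitions). Once these two ingredients are in place, the lemma reduces to a one-line rearrangement of~\eqref{eq:LP3_constraint2}.
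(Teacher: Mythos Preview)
Your proof is correct and follows the same route as the paper: the paper's own proof is the single sentence ``The proof is a simple inference from~\eqref{eq:LP3_constraint2},'' and what you have written is precisely that inference spelled out in full, exploiting the absorbing self-loops at $I_{\texttt{in}}\cup\{s_{bad}\}$ and the $S_{\texttt{r}}'$-transience of $\mathcal{Z}_{\texttt{sufx}}$ to read the flow balance in~\eqref{eq:LP3_constraint2} as a reachability statement. The only mild caveat is your normalization $\sum_s y_0(s)=1$, which is an interpretive step the paper leaves implicit as well; with it you obtain equality, which of course implies the stated lower bound.
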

\begin{proof}
The proof is a simple inference from~\eqref{eq:LP3_constraint2}. 
\end{proof}

\begin{remark}\label{remark:different_suffix}
A lower bound can be enforced  on~$\gamma_{\texttt{sufx}}$ as in~\eqref{eq:LP1}.
However, this bound is hard to estimate and a large bound can yield the problem infeasible.  In contrast,~\eqref{eq:LP3} always has a solution and~$\gamma_{\texttt{sufx}}(d)$ is tunable by varying~$d$.   \hfill $\blacksquare$
\end{remark}

\subsection{The Complete Policy}\label{sec:complete-policy}
In this section, we present how to combine the \emph{stationary} plan prefix and plan suffix of~$\mathcal{P}$ into the complete \emph{finite-memory} policy of the original MDP~$\mathcal{M}$. Furthermore, we show how to execute this finite-memory policy online.

\subsubsection{Combining the Plan Prefix and Suffix}\label{sec:two-cases}

When AMECs of~$\mathcal{P}$ exist, we can combine the plan prefix synthesis  and the plan suffix synthesis for each AMEC into one Linear Program:
\begin{align}\label{eq:comb-amec}
&\hspace{-0.1in}\min_{\{y_{s,u},\boldsymbol{z}_{s,u}\}}\; \beta\cdot\textup{\textbf{C}}_{\texttt{pre}}(S_c)+(1-\beta)\sum_{(S_c',U_c')\in \Xi_{acc}}\textup{\textbf{C}}_{\texttt{suf}}(S_c',U_c'), \\
& \textrm{s.t.} \qquad\qquad \textrm{Constraints \eqref{eq:LP1_constraint1}--\eqref{eq:LP1_constraint3} and \eqref{eq:LP2_constraint1}--\eqref{eq:LP2_constraint3}}\nonumber
\end{align}
{where $\textup{\textbf{C}}_{\texttt{pre}}(S_c)$ and $\textup{\textbf{C}}_{\texttt{suf}}(S_c',U_c')$ are defined in \eqref{eq:LP1_objective} and \eqref{eq:LP2_objective}, respectively, the variables $\{y_{s,u}\}$ satisfy the constraints \eqref{eq:LP1_constraint1}--\eqref{eq:LP1_constraint3} and \eqref{eq:LP2_constraint1}, and the variables $\boldsymbol{z}_{s,u}\triangleq \{z_{s,u}(S_c'),\forall (S_c',U_c')\in \Xi_{acc}\}$, where $z_{s,u}(S_c')$, satisfy the constraints \eqref{eq:LP2_constraint1}--\eqref{eq:LP2_constraint3} for the AMEC~$(S_c',U_c')\in \Xi_{acc}$. The parameter $0\leq \beta\leq 1$ captures the importance of minimizing the expected total cost to reach~$S_c$ versus stay in $S_c$. Note that the initial conditions $y_0$ in~\eqref{eq:LP2_constraint1} for each state in the suffix are expressed over the variables $\{y_{s,u}\}$. In other words, the initial conditions of each AMEC are now optimized to solve the combined objective function~\eqref{eq:comb-amec}.
It can be solved via any Linear Programming solver, e.g.,~``Gurobi''~\cite{gurobi} and~``CPLEX''. Once the optimal solution $\{y^\star_{s,u}\}$ and $\boldsymbol{z}^\star_{s,u}$ is obtained, the optimal plan prefix~$\boldsymbol{\pi}^\star_{\texttt{pre}}$ can be constructed as described in Section~\ref{sec:plan-prefix} and the plan suffix~$\boldsymbol{\pi}^\star_{\texttt{suf}}$ as in Section~\ref{sec:plan-suffix}.}

On the other hand, when no AECs of~$\mathcal{P}$ exist, as discussed in Section~\ref{sec:non-AEC}, we can combine the relaxed plan prefix and suffix synthesis for each ASCC into one  Linear Program:
\begin{align}\label{eq:comb-ascc}
&\min_{\{y_{s,u},\boldsymbol{z}_{s,u}\}}\; \beta\cdot \textup{\textbf{C}}_{\texttt{prex}}(S_c)+(1-\beta)\sum_{S_c'\in \Omega_{acc}} \textup{\textbf{C}}_{\texttt{sufx}}(S_c',d), \\
& \textrm{s.t.} \qquad\qquad \textrm{Constraints \eqref{eq:LP1_constraint1}--\eqref{eq:LP1_constraint3} and \eqref{eq:LP3_constraint1}--\eqref{eq:LP3_constraint3}}\nonumber
\end{align}
where $\textup{\textbf{C}}_{\texttt{prex}}(S_c)$ and $\textup{\textbf{C}}_{\texttt{sufx}}(S_c',d)$ are defined in \eqref{eq:LP1_objective} and \eqref{eq:LP3_objective}, respectively, the variables $\{y_{s,u}\}$ satisfy the constraints \eqref{eq:LP1_constraint1}--\eqref{eq:LP1_constraint3},  and the variables $\boldsymbol{z}_{s,u}\triangleq \{z_{s,u}(S_c'),\forall S_c'\in \Omega_{acc}\}$, where $z_{s,u}(S_c')$, satisfy the constraints \eqref{eq:LP3_constraint1}--\eqref{eq:LP3_constraint3} for the ASCC~$S_c'\in \Omega_{acc}$. The parameter $0\leq \beta\leq 1$ captures the importance of minimizing the expected total cost to reach~$S_c$ versus stay in $S_c$.
Similar to the previous case, the initial conditions $y_0$ in~\eqref{eq:LP3_constraint1} for each state in the ASCCs are expressed over the variables $\{y_{s,u}\}$. Thus the initial conditions are now optimized to solve the combined objective function~\eqref{eq:comb-ascc}.
Again, it can be solved via any Linear Programming solver. Once the optimal $\{y^\star_{s,u}\}$ and $\boldsymbol{z}^\star_{s,u}$ is obtained, the optimal relaxed plan prefix~$\boldsymbol{\pi}^\star_{\texttt{prex}}$ and relaxed plan suffix~$\boldsymbol{\pi}^\star_{\texttt{sufx}}$ can be constructed as described in Section~\ref{sec:non-AEC}. 

Note that the size of both Linear Programs in~\eqref{eq:comb-amec} and~\eqref{eq:comb-ascc} is linear with respect to the number of transitions in~$\mathcal{P}$ and can be solved in polynomial time~\cite{dantzig2016linear}. Note also that the multi-objective costs introduced in~\eqref{eq:comb-amec} and \eqref{eq:comb-ascc} provide a balance between optimizing the plan prefix and suffix. Compared to only optimizing the plan suffix, i.e., for $\beta=0$ as required to solve Problems~\ref{prob:suffix} and \ref{prob:suffix_rex},  increasing slightly the value of $\beta$ can lead to a significant decrease in the total cost of the plan prefix, without sacrificing much the optimality in the plan suffix. 

Observe that the optimal policy derived above only includes the states within~$S_n\cup S_c$. Thus no policy is specified for the \emph{bad states} in~$S_d$. Once the system reaches any bad state, it has violated the formula~$\varphi$ and can not satisfy it anymore. 
Thus, it is common practice to stop the system once that happens~\cite{ding2011mdp, baier2008principles}. 
We propose here a new method that allows the system to \emph{recover} from the bad state in~$S_d$ and continue performing the task, which could be useful for partially-feasible tasks with soft constraints, as discussed in~\cite{guo2015multi}.

\begin{definition}\label{def:projection}
The \emph{projected distance} of a bad state~$s_d=\langle x,l,q\rangle \in S_d$  onto~$S_c \cup S_n$ via~$u\in U(s_d)$ is defined as:
\begin{equation}\label{eq:projected}
\kappa(s_d,\,u) \triangleq \sum_{\check{s}\in S_c\cup S_n}\frac{\texttt{D}(l,\, \chi(q,\, \check{q}))}{|\chi(q,\check{q})|}\cdot p_E(x,u,\check{x})\cdot p_L(\check{x},\check{l}),
\end{equation}
where~$\check{s}\triangleq \langle \check{x},\check{l},\check{q}\rangle$ and  function~$\texttt{D}: 2^{AP}\times 2^{2^{AP}}\rightarrow \mathbb{N}$ returns the distance between an element~$l\in 2^{AP}$ and a set~$\chi \subseteq 2^{AP}$, was firstly introduced in~\cite{guo2015multi} and restated below.  \hfill $\blacksquare$
\end{definition}

\begin{algorithm}[t]
\caption{Complete Policy Synthesis} \label{alg:complete}
\DontPrintSemicolon
\KwIn{$\mathcal{P}$ by Definition~\ref{def:product}, $\gamma$, $\beta$}
\KwOut{the complete policy~$\boldsymbol{\pi}^\star$, $\boldsymbol{\mu}^\star$}
\eIf{~$\Xi_{acc}\neq \emptyset$}{
{1. Construct~$\mathcal{Z}_{\texttt{pre}}$, and~$\mathcal{Z}_{\texttt{suf}}$ for each~$(S'_c,\, U'_c)\in \Xi_{acc}$.}\\
{2. Derive~$\boldsymbol{\pi}^\star$ via solving~\eqref{eq:comb-amec}, and~\eqref{eq:policy-bad}.}\\
}{
{1. Construct~$\mathcal{Z}_{\texttt{prex}}$, and~$\mathcal{Z}_{\texttt{sufx}}$ for each~$S'_c \in \Omega_{acc}$.}\\
{2. Derive~$\boldsymbol{\pi}^\star$ via solving~\eqref{eq:comb-ascc}, and~\eqref{eq:policy-bad}.}\\
}
3. Construct~$\boldsymbol{\mu}^\star$ from~$\boldsymbol{\pi}^\star$ by~\eqref{eq:transform}
\end{algorithm}

Simply speaking,~$\kappa(s_d,\,u)$  evaluates how much the product automaton~$\mathcal{P}$ is violated on the average if 
the bad state~$s_d\in S_d$ is projected into the set of good states~$S_c\cup S_n$ using action~$u\in U(s_d)$. 
{Function $\texttt{D}(\ell, \, \chi)=0$ if $\ell \in \chi$ and $\texttt{D}(\ell, \, \chi)=\min_{\ell'\in \chi}\; |\{a\in AP\,|\,a\in \ell, a\notin \ell'\}|$, otherwise. Namely, it returns the minimal difference between~$\ell$ and any element in~$\chi$.}
Given~$\kappa(\cdot)$, the policy at~$s_d\in S_d$ is given by 
\begin{equation}\label{eq:policy-bad}
\boldsymbol{\pi}^\star(s_d, u) = \begin{cases}
1 & \text{for}~u = \text{argmin}_{u\in U(s_d)} \kappa(s_d,\,u); \\
0 & \text{other}\;u\in U(s_d),
\end{cases}
\end{equation}
which chooses the single action that minimizes~\eqref{eq:projected}. 
Combing~\eqref{eq:comb-amec},~\eqref{eq:comb-ascc} and~\eqref{eq:policy-bad} provides the complete policy for~$\mathcal{P}$. The above discussions are summarized in Algorithm~\ref{alg:complete}.

\subsubsection{Mapping~${\pi}^\star$ to~${\mu}^\star$}\label{sec:map-policy}
Lastly, 
we need to map the optimal {stationary} policy~$\boldsymbol{\pi}^\star$ of~$\mathcal{P}$ above to the optimal {finite-memory} policy~$\boldsymbol{\mu}^\star$ of~$\mathcal{M}$.
Starting from stage~$t=0$, the initial state~$s_0=\langle x_0,l_0,q_0\rangle\in S_n$ and the optimal action to take is given by the distribution~$\boldsymbol{\pi}^\star(s_0)$. Assume that~$u\in U(s_0)$ is taken. Then at stage~$t=1$, the robot observes its resulting state~$x_1$  and the label~$l_1$. Thus the subsequent state in~$\mathcal{P}$ is~$s_1=\langle x_1,l_1,q_1\rangle$, where~$q_1=\delta(q_0,l_0)$ is unique as~$\mathcal{A}_{\varphi}$ is deterministic. The optimal action to take now is given by the distribution~$\boldsymbol{\pi}^\star(s_1)$. 
This process repeats itself indefinitely. Denote by~$s_t\in S$ the \emph{reachable} state at stage~$t\geq 0$ which is always unique given the robot's past sequence of states~$X_{t}=x_0x_1\cdots x_t$ and labels~$L_t=l_0l_1\cdots l_t$.
Thus the optimal policy~$\boldsymbol{\mu}^\star$ at stage~$t\geq 0$ given~$X_t$ and~$L_t$ is 
\begin{equation}\label{eq:transform}
\boldsymbol{\mu}^\star(X_t, L_t) = \boldsymbol{\pi}^\star(s_t),
\end{equation}
i.e., the control policy at the reachable state~$s_t$ in~$\mathcal{P}$ is the best control policy in~$\mathcal{M}$ at stage~$t$,~$\forall t\geq 0$. 
Last but not least, if the system reaches a bad state at stage~$t-1$, i.e.,~$s_{t-1}\in S_d$, according to policy~\eqref{eq:policy-bad} the robot will take action~$u^\star$ and more importantly the next reachable state is \emph{set to be}~$s_t\triangleq \langle x_t,l_t,q'_t\rangle \in (S_c \cup S_n)$, where~$x_t$, $l_t$ are the observed robot location and label at stage~$t$ and~$q'_t\triangleq \text{argmin}_{\check{q}\in Post(q_{t-1})} \texttt{D}(l_{t-1},\chi(q_{t-1},\check{q}))$.

\begin{theorem}\label{thm:whole}
{Algorithm~\ref{alg:complete} solves Problem~\ref{prob:main} if~AECs of~$\mathcal{P}$ exist and $\beta=0$. }
Otherwise, if no~AECs of~$\mathcal{P}$ exist, then Problem~\ref{prob:main} has no solution. {In this case, Algorithm~\ref{alg:complete} provides a relaxed policy that minimizes the \emph{relaxed} suffix cost~$\text{C}_{\texttt{sufx}}(S_c', d)$ defined in~\eqref{eq:LP3}}.
Moreover, given any finite run~$S_T = s_0s_1\cdots s_T$ of~$\mathcal{P}$ under the optimal policy~$\boldsymbol{\pi}^\star$,  the probability that~$S_T$ does not intersect with the set of bad states~$S_d$ for all time~$t\in [0,\, T]$ is bounded as
$$Pr(s_t\notin S_d, \forall t\in[0, T]) \geq (1-\gamma_{\texttt{prex}})\cdot (1-\gamma_{\texttt{sufx}}(d))^{N_s},$$
where~$N_s\geq 0$ is the number of accepting cyclic paths contained in~$S_T$ that depends on $T$. 
\end{theorem}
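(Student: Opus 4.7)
The plan is to address the three assertions of the theorem separately, chaining together the earlier lemmas rather than redoing their internal arguments.

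For the first assertion, I would observe that when AECs exist and $\beta=0$, the combined program~\eqref{eq:comb-amec} reduces to the suffix objective~\eqref{eq:LP2_objective} subject to the prefix feasibility constraints~\eqref{eq:LP1_constraint1}--\eqref{eq:LP1_constraint3} coupled via the initial distribution $y_0$ of each $\mathcal{Z}_{\texttt{suf}}$. Lemma~\ref{lem:risk} then gives $\textup{Pr}^{\boldsymbol{\pi}^\star}_{s_0}(\Diamond S_c)\geq 1-\gamma$, which is exactly the risk constraint in~\eqref{eq:objective}. Lemmas~\ref{lem:suffix} and~\ref{lem:rabin-suffix} together give that $\boldsymbol{\pi}^\star_{\texttt{suf}}$ yields runs satisfying the Rabin acceptance condition, and crucially that the mean total cost in~\eqref{eq:total-cost} coincides with the mean cyclic cost minimized by~\eqref{eq:LP2}. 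Optimality of the combined program therefore implies optimality of the original objective. Finally, the mapping~\eqref{eq:transform} lifts the stationary $\boldsymbol{\pi}^\star$ on $\mathcal{P}$ to a finite-memory $\boldsymbol{\mu}^\star$ on $\mathcal{M}$ preserving the probability measure~\eqref{eq:prob-measure}, so Problem~\ref{prob:main} is solved.

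For the second assertion, I would invoke the standard equivalence (Theorem~10.56 of~\cite{baier2008principles}) between the probability of satisfying the Rabin condition and the probability of reaching some AEC. If no AEC exists in $\mathcal{P}$, this probability is zero under every policy, so $\textbf{Risk}^{\boldsymbol{\mu}}_{\mathcal{M}}(\varphi)=1$ and Problem~\ref{prob:main} is infeasible for any $\gamma<1$. Algorithm~\ref{alg:complete} then falls through to~\eqref{eq:comb-ascc}, which by Lemma~\ref{lem:exist} has nonempty $\Omega_{acc}$, and by Remark~\ref{remark:different_suffix} is always feasible; at $\beta=0$ its objective reduces to $\textup{C}_{\texttt{sufx}}(S_c',d)$, as claimed.

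For the third assertion, the argument proceeds by decomposing $\{s_t\notin S_d,\forall t\in[0,T]\}$ along the prefix/suffix boundary and applying the strong Markov property at each renewal point. By construction of $\mathcal{Z}_{\texttt{pre}}$ (resp.\ $\mathcal{Z}_{\texttt{prex}}$) the state space is $S_n\cup S_c$, so any trajectory reaching $S_c$ automatically avoids $S_d$; the prefix analogue of Lemma~\ref{lem:risk} supplies the factor $(1-\gamma_{\texttt{prex}})$. Inside the chosen ASCC, each accepting cyclic path in $\mathcal{P}$ corresponds in $\mathcal{Z}_{\texttt{sufx}}$ to an excursion from $I_{\texttt{out}}$ to $I_{\texttt{in}}$; leaving $S_c'$ (i.e.\ hitting $S_d$ in $\mathcal{P}$) is encoded by absorption in the virtual $s_{bad}$. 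Lemma~\ref{lem:reach-prob} bounds the probability of a single such excursion completing without absorption by $1-\gamma_{\texttt{sufx}}(d)$. Since $\boldsymbol{\pi}^\star_{\texttt{sufx}}$ is stationary and each cyclic path restarts from $I_{\texttt{in}}$, successive excursions are conditionally independent by the strong Markov property, so the probability that $N_s$ consecutive cyclic paths avoid $s_{bad}$ is at least $(1-\gamma_{\texttt{sufx}}(d))^{N_s}$. Conditioning on the first-entry state into $S_c$ and multiplying gives the stated bound.

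The main obstacle will be the third assertion, specifically making the product-form bound rigorous. The subtleties are that $N_s$ is itself a random variable depending on $T$ and on the realized trajectory, that the cyclic paths are not i.i.d.\ in raw form (they depend on which state of $I_c'$ they originate from), and that one must argue the bound holds uniformly over all realizations having exactly $N_s$ completed cycles. The cleanest fix is to define the stopping times $\{T_k\}$ marking successive returns to $I_c'$, observe $N_s=\max\{k:T_k\leq T\}$, and apply the strong Markov property iteratively, noting that Lemma~\ref{lem:reach-prob}'s bound is uniform over the starting state in $I_{\texttt{out}}$ because $z^\star_{\texttt{sufx}}$ determines a single stationary policy on $S_c'$.
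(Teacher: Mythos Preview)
Your proposal is correct and follows essentially the same structure as the paper's proof: Lemma~\ref{lem:risk} for the risk constraint, Lemmas~\ref{lem:suffix} and~\ref{lem:rabin-suffix} for suffix correctness and the cost equivalence, the observation that no AECs implies zero satisfaction probability, and the multiplicative decomposition of the avoidance probability into a prefix factor and $N_s$ suffix factors via Lemma~\ref{lem:reach-prob}. Your treatment of the third assertion is in fact more careful than the paper's, which simply asserts the product form without explicitly invoking the strong Markov property or discussing the uniformity of the per-cycle bound over starting states in $I_c'$; the stopping-time argument you outline is exactly the right way to make that step rigorous.
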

\begin{proof}
To show the \emph{first} part of this theorem, similar to Lemma~\ref{lem:risk}, the constraints of \eqref{eq:LP1_constraint1}--\eqref{eq:LP1_constraint3} ensures that the total probability of reaching the union of all AMECs is lower-bounded by~$1-\gamma$.
{Moreover, the first part of Lemma~\ref{lem:rabin-suffix} shows that any infinite run~$\tau_{\mathcal{P}}$ of~$\mathcal{P}$ would satisfy~$\varphi$ once~it enters any AMEC~$(S'_c,U'_c)\in \Xi_{acc}$, by following the plan suffix.}
The fact that~$\boldsymbol{\pi}^\star$ also minimizes the mean total cost in~\eqref{eq:objective} when~$\beta=0$ in~\eqref{eq:comb-amec} can be shown as follows: as discussed in~\cite{chatterjee2011two, chatterjee2011energy, brazdil2015multigain}, the mean payoff objective depends on how the system suffix behaves within the AMECs. 
The second part of Lemma~\ref{lem:rabin-suffix} guarantees that the derived plan suffix~$\boldsymbol{\pi}_{\texttt{suf}}^\star$ minimizes the mean total cost of staying within any of the AMECs, while satisfying the accepting condition.

To show the \emph{second} part of the theorem, no solution to Problem~\ref{prob:main} exists regardless of the choice of~$\gamma$, {as the probability of satisfying the task is zero. Instead, when $\beta=0$, the optimal policy~$\boldsymbol{\pi}^\star$ obtained by Algorithm~\ref{alg:complete} minimizes the \emph{relaxed} suffix cost~$\text{C}_{\texttt{sufx}}(S_c', d)$.} {At the same time, due to the constraints in \eqref{eq:LP1} that are also present in \eqref{eq:comb-amec}, the plan prefix~$\boldsymbol{\pi}^\star_{\texttt{prex}}$ ensures that all runs stay within~$S_n$ with at least probability~$(1-\gamma_{\texttt{prex}})$ before entering any ASCC~$S_c'\in \Omega_{acc}$, while the relaxed plan suffix~$\boldsymbol{\pi}_{\texttt{sufx}}^\star$ ensures that the runs stay within~$S_c'$ with at least probability~$(1-\gamma_{\texttt{sufx}}(d))$ for one execution of any accepting cyclic path. }
Consequently, if the finite run contains~$N_s$ accepting cyclic paths, the probability of avoiding~$S_d$, is lower bounded by~$(1-\gamma_{\texttt{prex}})\cdot (1-\gamma_{\texttt{sufx}}(d))^{N_s}$. 
Even though this probability approaches zero as~$N_s$ approaches infinity, this result still ensures that the frequency of visiting bad states over finite intervals is minimized.
\end{proof}

\begin{algorithm}[t]
\caption{Policy Execution} \label{alg:execution}
\DontPrintSemicolon
\KwIn{$\mathcal{M}$,~$\varphi$, observed state~$x_t$ and label~$l_t$ at stage~$t\geq 0$}
\KwOut{$\boldsymbol{\mu}^\star$ and~$u_t$ at stage~$t\geq 0$}
1. \textbf{Offline}: Construct~$\mathcal{P}$ and synthesize~$\boldsymbol{\pi}^\star$ by Alg.~\ref{alg:complete}. \;
2. {At~$t=0$}: set $s_0=\langle x_0,l_0,q_0\rangle$ and apply $u_0\sim \boldsymbol{\pi}^\star(s_0)$.\;
3. \While{$t=1,2,\cdots$} 
{observe~$x_t$ and~$l_t$.\; 
\eIf{$s_{t-1}\notin S_d$}
{Set~$s_t=\langle x_t,l_t,q_t\rangle$, where~$q_t=\delta(q_{t-1},l_{t-1})$.}{Set~$s_t=\langle x_t,l_t,q'_t\rangle\in (S_n \cup S_c)$.}
Apply action~$u_t\sim \boldsymbol{\pi}^\star(s_t)$.}
\end{algorithm}

\subsubsection{Policy Execution}\label{sec:policy-execution}
Clearly, {the optimal policy~$\boldsymbol{\mu}^\star$ from~\eqref{eq:transform} requires only a finite memory to save the current reachable state~$s_t$ and the optimal policy~$\boldsymbol{\pi}^\star$.} It is synthesized off-line once via Algorithm~\ref{alg:complete} and its online execution involves observing the current state~$x_t$ and label~$l_t$, updating the reachable state~$s_t$, and applying the action according to~$\boldsymbol{\pi}^\star(s_t)$.
Details are given in Algorithm~\ref{alg:execution}.


\section{Simulation Results}\label{sec:example}
In this section, we present  simulation results to validate the  scheme.
All algorithms are implemented in Python 2.7 and available online~\cite{git_mdp_tg}. 
All simulations are carried out on a laptop (3.06GHz Duo CPU and 8GB of RAM).

\subsection{Model Description}\label{sec:model-description}
We consider a partitioned~$10m\times 10m$ workspace as shown in Figure~\ref{fig:traj-supply}, where each cell is a $2m\times 2m$ area. {The properties of interest are~$\{\texttt{Obs},\texttt{b1}, \texttt{b2},\texttt{b3},\texttt{Spl}\}$}. The properties satisfied at each cell are probabilistic: three cells at the corners satisfy~$\texttt{b1}$, $\texttt{b2}$ and $\texttt{b3}$, respectively with probability one. Four cells at~$(1m,5m),(5m,3m),(9m,5m),(5m,9m)$ satisfy~$\texttt{Spl}$ with probabilities ranging from~$0.2$ to~$0.8$, modeling the likelihood that a supply appears at that particular cell. One cell at~$(5m,1m)$ satisfies~$\texttt{Obs}$ with probability~$0.7$. Other obstacles will be described later upon different task scenarios. 

The robot motion follows the unicycle model, i.e.,~$\dot{x} = v\cos(\theta)$, $\dot{y} = v\sin(\theta)$, $\dot{\theta} = \omega$, where~$p(t)=(x(t),\,y(t))\in \mathbb{R}^2$, $\theta(t)\in (-\mathbf{pi},\,\mathbf{pi}]$ are the robot's position and orientation at time~$t\geq 0$. The control input is~$u(t)=(v(t),\,\omega(t))$ and contains the linear and angular velocities. 
Due to actuation noise and drifting, the robot's motion is subject to uncertainty.
The action primitives and the associated uncertainties are shown in Figure~\ref{fig:action} and described below: 
action~``$\textsf{FR}$'' means driving forward for~$2m$ by setting~$v(t)= v_0$ and~$\omega(t)= 0$, $\forall t=[0,\,2/v_0]$. This action has probability~$0.8$ of reaching~$2m$ forward and probability~$0.1$ of drifting to the left or right by~$2m$, respectively;  
{action~``$\textsf{BK}$'' can be defined analogously to ``$\textsf{FR}$''}; 
action~``$\textsf{TR}$'' means turning right by an angle of~$\mathbf{pi}/2$ by setting~$v(t)=0$ and~$\omega(t)=-\omega_0$, $\forall t=[0,\,\mathbf{pi}/(2\omega_0)]$.
This action has  probability~$0.9$ of turning to the right by~$\mathbf{pi}/2$, probability~$0.05$ of turning less than~$\mathbf{pi}/4$ due to undershoot and probability~$0.05$ of turning more than~$3\mathbf{pi}/4$ due to overshoot; {action~``$\textsf{TL}$'' can be defined analogously to ``$\textsf{TR}$''};
lastly, action~``$\textsf{ST}$'' means staying still by setting~$v(t)=\omega(t)=0$,~$\forall t=[0,T_0]$ where~$T_0$ is the chosen waiting time. It has probability~$1.0$ of staying where it is.
The cost of each action is given by~$[2,4,3,3,1]$, respectively, where the cost of~``$\textsf{ST}$'' is set to~$1$ as it consumes time to wait at one cell.

\begin{table}[t]
\begin{center}
\scalebox{1.05}{
    \begin{tabular}{| c| c | c | c | c|}
    \hline
   $\gamma$ & \textbf{Total Cost}  &  \textbf{Failure} & \textbf{Success} & \textbf{Unfinished}  \\ \hline \hline
    0 & 132.2 & 0 & 910 & 90\\ \hline
    0.1 & 118.1 & 99 & 872 & 29\\ \hline
    0.2 & 110.5 & 219 & 770 & 11\\ \hline
    0.3 & 104.6 & 308 & 692 & 0\\ \hline
    0.4 & 98.3 & 417 & 583 & 0\\ 
    \hline
    \end{tabular}}
\caption{Statistics of~$1000$ Monte Carlo simulations of~$500$ time steps, under different~$\gamma$ for task~\eqref{eq:reach-task}.}
\label{table:reach-statistics}
\end{center}
\end{table}

With the above model, we can abstract the robot state by the cell coordinate in which it belongs, namely, $(x_c, y_c)\in \{1,3,\cdots,9\}^2$ and its four possible orientations~($N, E, S, W$). The transition relation and probability can be built following the description above. The resulting probabilistically-labeled MDP has~$100$ states and~$816$ edges.

\begin{table*}[t]
\begin{center}
\begin{tabular}{ccccccccccccc}
\toprule
\multicolumn{2}{c}{$\mathcal{M}$} & \multicolumn{2}{c}{$\mathcal{P}$} & 
\multicolumn{2}{c}{AMECs $\Xi_{acc}$} &  \multicolumn{3}{c}{$\boldsymbol{\pi}^{\star}$ via~\eqref{eq:comb-amec}}\\
\cmidrule(lr){1-2}\cmidrule(lr){3-4}\cmidrule(lr){5-6}\cmidrule(lr){7-9}
Size & {Time [\si{\second}]} & 
Size & {Time [\si{\second}]} &
Size & {Time [\si{\second}]} & 
Size of~\eqref{eq:LP1} & Size of~\eqref{eq:LP2} & {Time to solve~\eqref{eq:comb-amec} [\si{\second}]} \\
\midrule
(100,\;816)& 0.13 & (4.2e3,\;4.1e4) & 16.3 & 1.2e3 & 4.15 & (443,\;2.0e3,\;8.3e3) & (1.2e3,\;4.9e3,\;2.1e4) & 0.21\\
\addlinespace[0.3em]
(324,\;2.8e3)& 1.69 & (1.1e4,\;1.0e5) & 41.2 & 3.6e3 & 29.4 & (1.3e3,\;6.3e3,\;2.2e4) & (3.6e3,\;1.7e4,\;5.9e4) & 0.72\\
\addlinespace[0.3em]
(900,\;8.4e3)& 24.2 & (2.9e4,\;2.8e5) & 106.8 & 1.0e4 & 337.1 & (3.6e3,\;1.7e4,\;6.0e4) &  (9.9e3,\;4.8e4,\;1.6e5) & 16.74\\
\addlinespace[0.3em]
(1.4e3,\;1.3e4)& 88.7 & (4.7e4,\;4.5e5) & 391.7 & 1.6e4 & 1.1e3 & (5.8e3,\;2.8e4,\;9.7e4) & (1.5e4,\;7.7e4,\;2.6e5) & 20.81\\
\addlinespace[0.3em]
(2.5e3,\;2.4e4)& 326.9 & (8.1e4,\;7.8e5) & 290.1 & 2.7e4 & 4.8e3 & (1.0e4,\;4.9e4,\;1.6e5) & (2.7e4,\;1.3e5,\;4.5e5) & 15.74\\
\addlinespace[0.3em]
(3.3e3,\;3.2e4)& 558.3 & (1.0e5,\;1.1e6) & 380.1 & 3.7e4 & 9.4e3 & (1.3e4,\;6.6e4,\;2.2e5) & (3.7e4,\;1.8e5,\;6.1e5) & 32.04\\
\bottomrule
\end{tabular}
\caption{Size and computation time of various models~$\mathcal{M}$ as described in Section~\ref{sec:surv-rex} under task~\eqref{eq:deliver-task}. The notation~$\texttt{a}\text{e}\texttt{b}\triangleq \texttt{a}\times 10^{\texttt{b}}$ for~$\texttt{a},\texttt{b}>0$. The size of~$\mathcal{M}$, $\mathcal{A}_{\varphi}$ and~$\mathcal{P}$ includes the number of states and transitions. The size of LP problems~\eqref{eq:comb-amec} which contains~\eqref{eq:LP1} and~\eqref{eq:LP2} includes the number of rows, columns and variables in the linear equations, as indicated by the ``\texttt{Gurobi}'' solver~\cite{gurobi}. 
}
\label{table:sim-complexity}
\end{center}
\end{table*}

In the sequel, we consider \emph{three} different task formulas in the order of increasing complexity. 
We used~``\texttt{Gurobi}''~\cite{gurobi} to solve the Linear Programs in~\eqref{eq:comb-amec} and~\eqref{eq:comb-ascc}. {When comparing the performance in the plan suffix, we also use the total cost in~\eqref{eq:cycle-total-cost} as an indicator, especially when the difference in the mean total cost  in~\eqref{eq:mean-cycle} is too small to measure.}

\begin{figure}[t]
     \centering
     \includegraphics[width=0.4\textwidth]{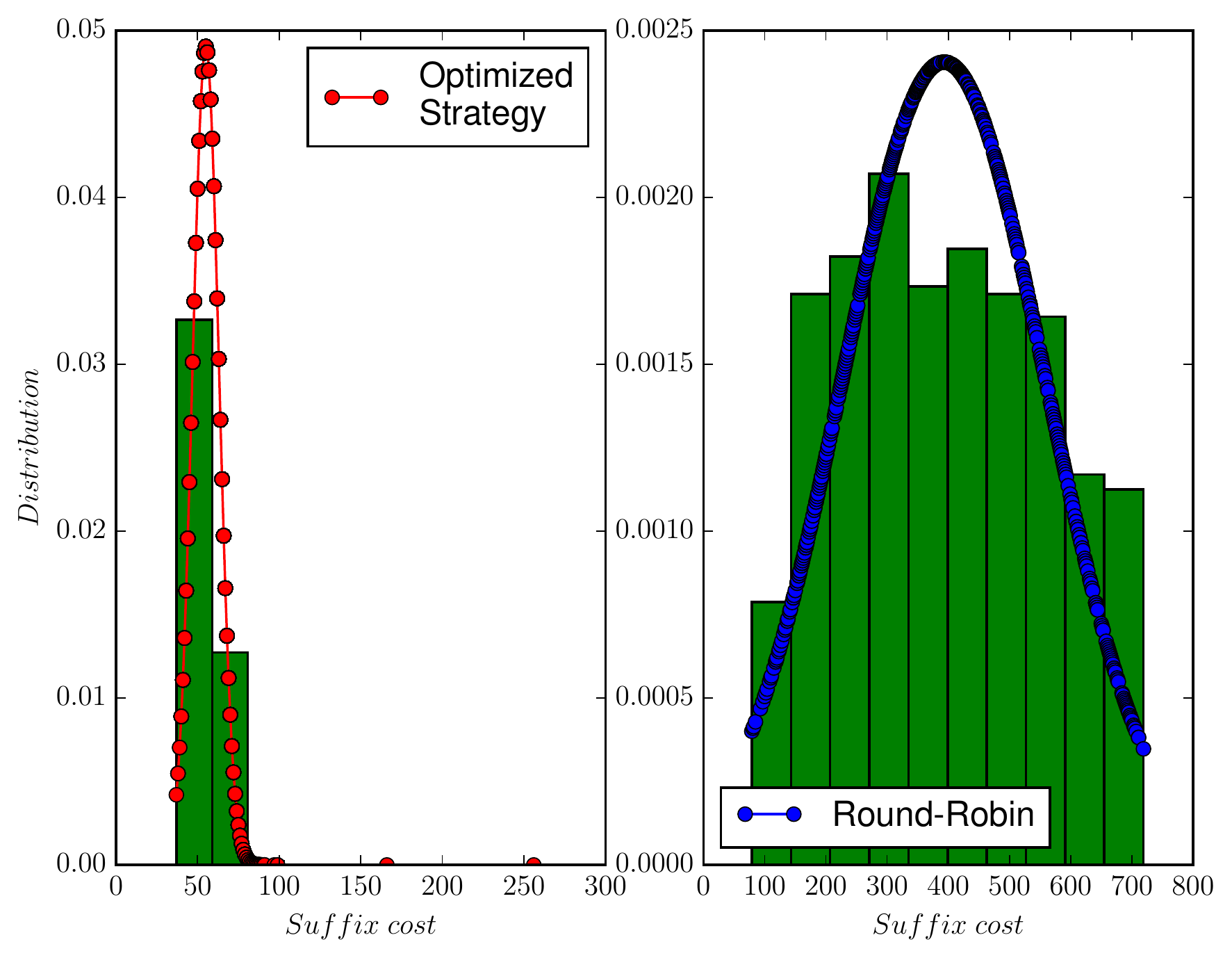}
     \caption{The normalized distribution of the total cost of accepting cyclic paths from $1000$ Monte Carlo simulations under the optimal plan suffix (left) and the Round-Robin policy (right), for task~\eqref{eq:surveillance-task}.}
     \label{fig:compareSurv}
   \end{figure}

\subsection{Ordered Reachability}\label{sec:order-reach}
In this case, we show the trade-off between reducing the expected total cost and decreasing the risk factor in the plan prefix synthesis using~\eqref{eq:LP1}.
In particular, the robot needs to reach~$\texttt{b1}$,~$\texttt{b2}$,~$\texttt{b3}$ (in this order) from the initial cell while avoiding obstacles for all time. Afterwards it should stay at~$\texttt{b3}$. The LTL formula for this task is
\begin{equation}\label{eq:reach-task}
\varphi_1 = (\Diamond (\texttt{b1} \wedge \Diamond (\texttt{b2} \wedge \Diamond \texttt{b3}))) \wedge (\square \neg \texttt{Obs}) \wedge (\Diamond \square \texttt{b3}).
\end{equation}
The associated DRA derived using~\cite{klein2007ltl2dstar} has~$7$ states,~$24$ transitions and~$1$ accepting pair. 
An additional obstacle is added which has probability~$0.7$ of appearing in the cell~$(5m,9m)$.

It took~$10.9s$ to construct the product automaton which has~$840$ states,~$7280$ transitions.
Since one AMEC exists, we synthesize the optimal policy using Algorithm~\ref{alg:complete} via solving~\eqref{eq:comb-amec} under $\beta=0.5$ and different risk factors $\gamma$ chosen from~$\{0,0.1,\cdots,0.4\}$, which took on average~$0.1s$. 
Then, we perform~$1000$ Monte Carlo simulations of $500$ time steps each, where we  evaluate the total cost in~\eqref{eq:problem_lp1} and whether the task is satisfied.
As shown in Table~\ref{table:reach-statistics}, the total cost increases when the allowed risk factor~$\gamma$ is decreased. 
The percentage of simulated runs that collide with an obstacle is approximately~$(1-\gamma)$, which verifies the risk constraint in Lemma~\ref{lem:risk}.

\subsection{Surveillance}\label{sec:surveil}
In this case, we compare the efficiency of the optimal plan suffix from Algorithm~\ref{alg:complete} and the Round-Robin policy. Particularly, the robot should visit~$\texttt{b1}$,~$\texttt{b2}$ and~$\texttt{b3}$ infinitely often for surveillance and avoid all obstacles:
\begin{equation}\label{eq:surveillance-task}
\varphi_2 = (\square \Diamond \texttt{b1}) \wedge (\square \Diamond \texttt{b2}) \wedge (\square \Diamond \texttt{b3}) \wedge (\square \neg \texttt{Obs}). 
\end{equation}
The associated DRA has~$8$ states,~$30$ transitions, and~$1$ accepting pair. It took~$5.8s$ to construct the product~$\mathcal{P}$ which has~$700$ states,~$5712$ transitions and~$1$ accepting pair.
{Since one AMEC exists in the product, we synthesize the optimal policy using Algorithm~\ref{alg:complete} via solving~\eqref{eq:comb-amec} under~$\gamma=0$ and $\beta=0.1$, which took~$0.2s$.}
We conducted~$1000$ Monte Carlo simulations and Figure~\ref{fig:compareSurv} shows that the total cost from~\eqref{eq:cycle-total-cost} of accepting cyclic paths in the plan suffix under the optimal policy is much lower than the Round-Robin policy ($50$ versus $400$).
Moreover, Figure~\ref{fig:supply-compare} shows that the average number of times each base station is visited by the robot under the optimal policy is much higher than under the Round-Robin policy.

\begin{figure}[t]
\begin{minipage}[t]{0.495\linewidth}
\centering
   \includegraphics[width =0.97\textwidth]{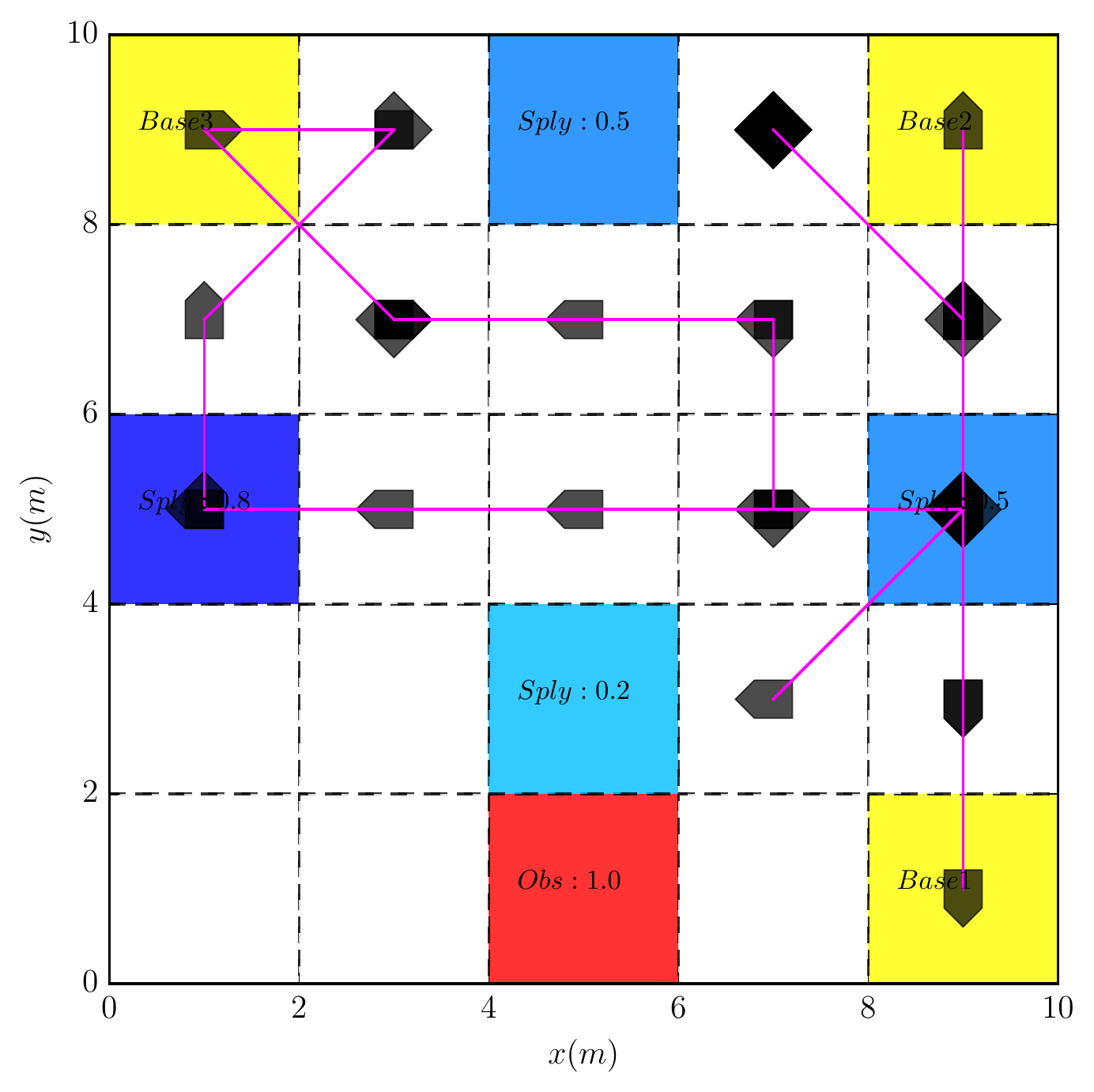}
  \end{minipage}
\begin{minipage}[t]{0.495\linewidth}
\centering
    \includegraphics[width =0.97\textwidth]{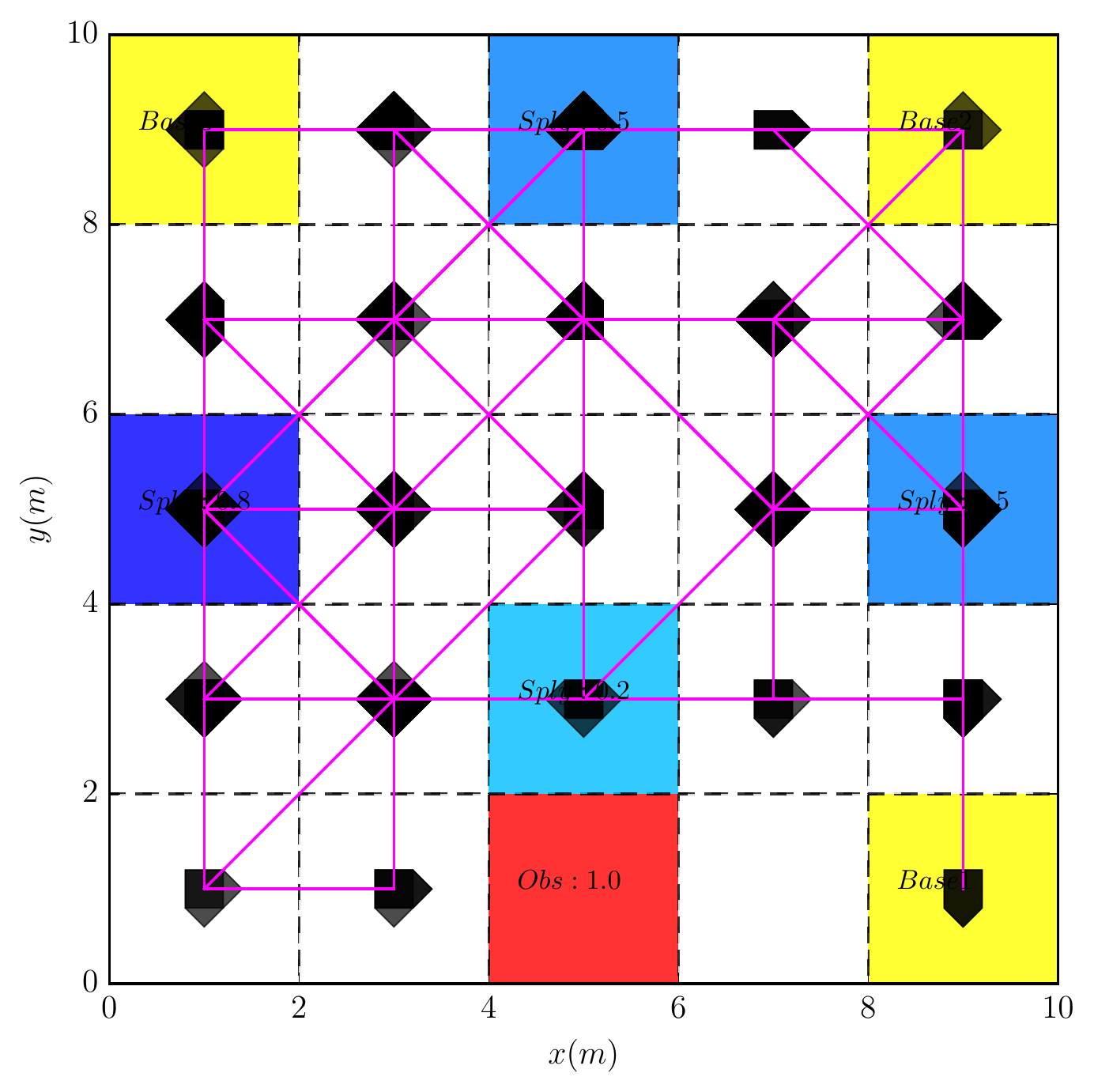}
  \end{minipage}
  \caption{Simulated trajectory suffix under the optimized plan suffix is applied (left) and the Round-Robin policy (right).}
\label{fig:traj-supply}
\end{figure}

\subsection{Ordered Supply-delivery}\label{sec:ordered-supply}
In this case, we demonstrate the reactiveness of the derived optimal policy. 
The robot needs to collect supplies from the cells that are marked by~$\texttt{Spl}$, where supplies appear probabilistically. 
Then it needs to transport these supplies to each base station.
Furthermore, the robot should \emph{not} visit two base stations consecutively without collecting a supply first. 
It should always avoid obstacles. The LTL task formula  is
\begin{equation}\label{eq:deliver-task}
\varphi = \varphi_{\texttt{all\_base}} \wedge \varphi_{\texttt{order}} \wedge (\square \neg \texttt{Obs}),
\end{equation}
where~$\varphi_{\texttt{all\_base}}=(\square\Diamond\texttt{b1})\wedge (\square\Diamond\texttt{b2}) \wedge (\square\Diamond\texttt{b3})$ means that all base stations should be visited infinitely often and~$\varphi_{\texttt{order}}=\square(\varphi_{\texttt{one}}\rightarrow \bigcirc((\neg \varphi_{\texttt{one}}) \, \mathsf{U} \, \texttt{Spl}))$, with~$\varphi_{\texttt{one}}=(\texttt{b1} \vee \texttt{b2} \vee \texttt{b3})$ means that when one base station is visited, then no base can be visited until a supply has been collected. 
The associated DRA is derived using~\cite{klein2007ltl2dstar, git_mdp_tg} in $0.05s$, which has~$32$ states,~$298$ transitions and~$1$ accepting pair.

It took around~$16s$ to construct the product automaton that has~$4224$ states,~$41344$ transitions and 1 accepting pair.
{Since two AMECs exist in the product, we synthesize the optimal policy using Algorithm~\ref{alg:complete} via solving~\eqref{eq:comb-amec} under~$\gamma=0$ and $\beta=0.1$, which took around~$0.2s$ given the complexity of task~\eqref{eq:deliver-task}. }
Notice that the optimal plan sometimes requires the robot to wait at a cell marked by~$\texttt{Spl}$ by taking action~``$\textsf{ST}$'', since the expected cost of traveling to another cell with supply  might be higher than waiting there for the supply to appear.
Figure~\ref{fig:traj-supply} compares the simulated trajectories under the optimal policy and the Round-robin policy.
Based on~$1000$ Monte Carlo simulations, the total cost of accepting cyclic paths is much lower under the optimal policy than the Round-Robin policy ($70$ versus $550$).
Furthermore, Figure~\ref{fig:supply-compare} shows the average number of supplies received at each base under these two policies. It can be seen that much more supplies are received at each base station under the optimal policy. Simulation videos of both cases can be found in~\cite{mdp_video}.  
Lastly, to show how the choice of~$\beta$ in~\eqref{eq:comb-amec} affects the optimal prefix and suffix cost, we repeat the above procedure for different $\beta$ and the results are summarized in Table~\ref{table:vary-beta}. 
In the table, the prefix cost equals to $\textup{\textbf{C}}_{\texttt{pre}}(S_c)$, the mean suffix cost equals to $\sum_{(S_c',U_c')\in \Xi_{acc}}\textup{\textbf{C}}_{\texttt{suf}}(S_c',U_c')$ from~\eqref{eq:comb-amec}. The total suffix cost is computed based on~\eqref{eq:cycle-total-cost} in order to magnify the changes in the suffix cost. 
It can be noticed that for small non-zero values of $\beta$, less $0.2$, the optimal prefix cost is reduced \emph{dramatically} (from $180.7$ to $62.4$), without increasing much the optimal suffix cost (from $66.1$ to $67.1$).

\begin{table}[t]
\begin{center}
\scalebox{1.1}{
    \begin{tabular}{| c| c | c| c| c|}
    \hline
   \multirow{2}{*}{$\beta$} & \multirow{2}{*}{\textbf{Prefix Cost}}  &  \multicolumn{2}{c|}{\textbf{Suffix Cost}} & \multirow{2}{*}{
\begin{tabular}{@{}c@{}}\textbf{Balanced Cost} \\ \textbf{by}~\eqref{eq:comb-amec}\end{tabular}}\\ 
 \cline{3-4}\\[-1em]
& & Total & Mean & \\ \hline 
    0 & 180.7 & 66.1 & 2.524 & 66.1 \\ \hline
    0.2 & 62.4  & 67.1 & 2.533 & 65.2\\ \hline
    0.4 & 50.5 & 72.9 & 2.551 & 64.1\\ \hline
    0.6 & 49.8 & 73.5 & 2.552 & 59.3\\ \hline
    0.8 & 49.5 & 74.3 & 2.554 & 54.4\\ \hline
    1.0 & 49.5 & 246.7 & 2.817 & 49.5\\ 
    \hline
    \end{tabular}}
\caption{{The optimal prefix cost, suffix cost and the balanced cost as defined in~\eqref{eq:comb-amec} of task~\eqref{eq:deliver-task} under different~$\beta$ with $\gamma=0$.}}
\label{table:vary-beta}
\end{center}
\end{table}

\begin{figure}[t]
\begin{minipage}[t]{0.495\linewidth}
\centering
   \includegraphics[width =0.97\textwidth]{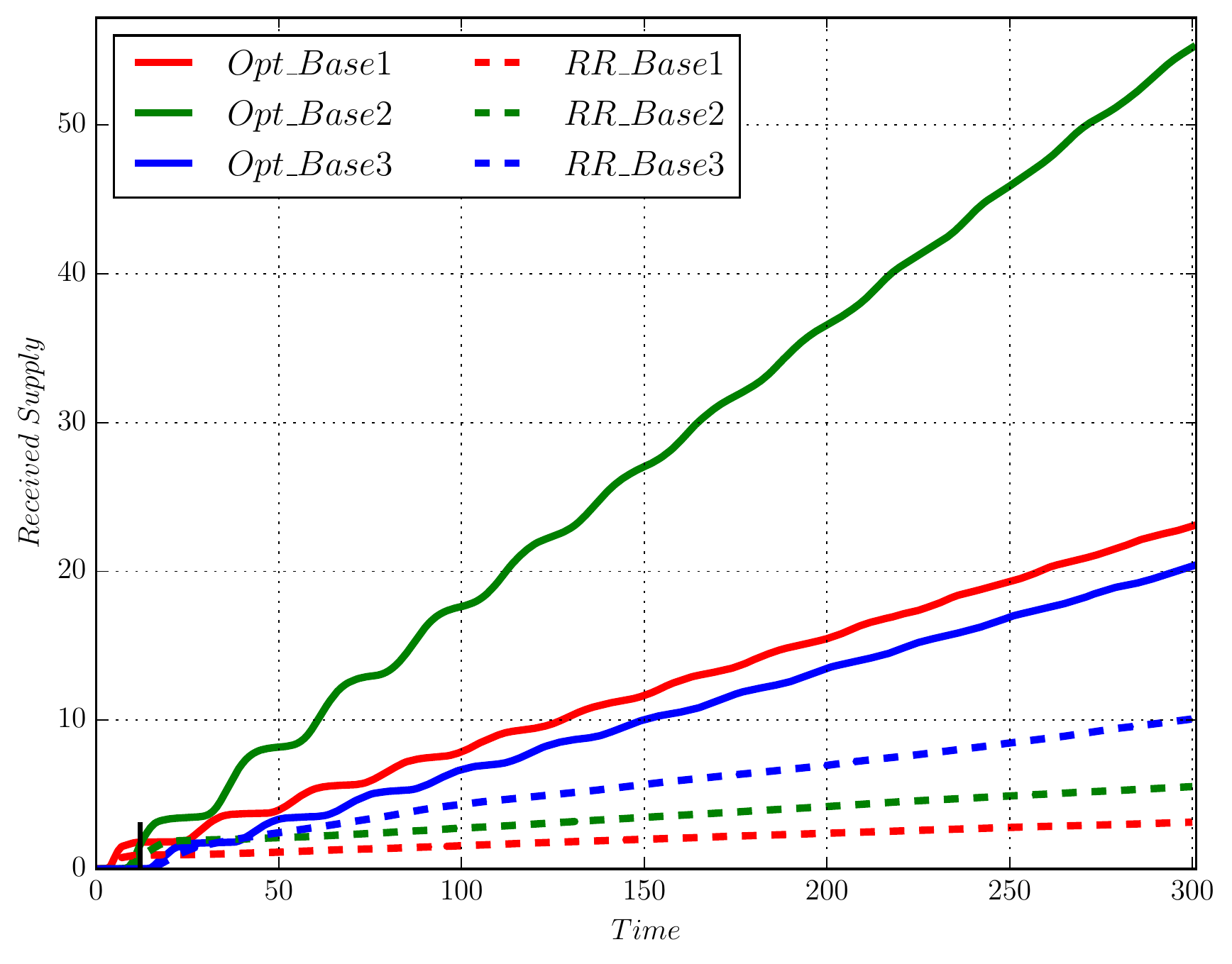}
  \end{minipage}
\begin{minipage}[t]{0.495\linewidth}
\centering
    \includegraphics[width =0.97\textwidth]{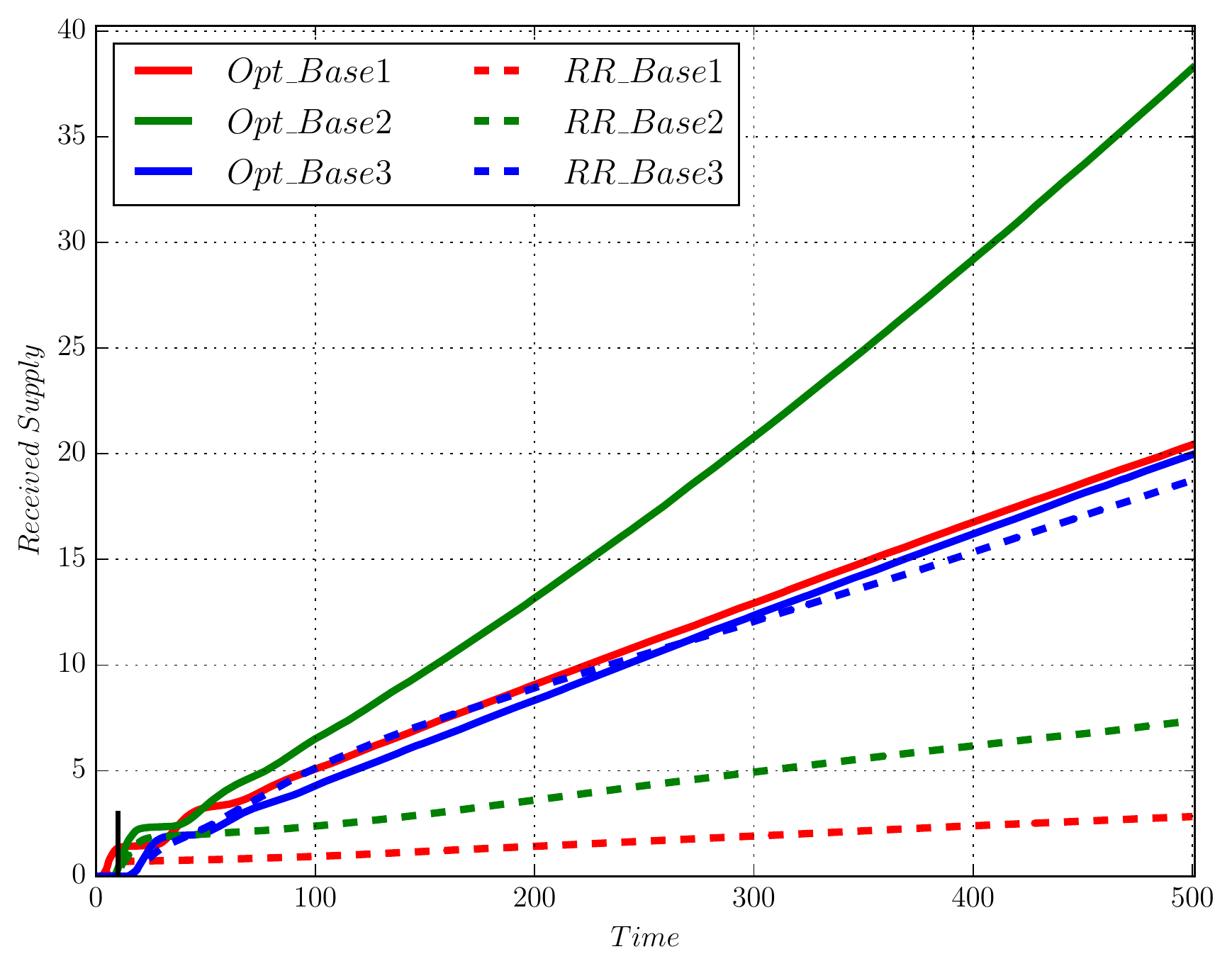}
  \end{minipage}
  \caption{Left: the average number of times each base is visited, for task~\eqref{eq:surveillance-task}.  Right: the average number of supplies received at each base for task~\eqref{eq:deliver-task}. The optimal policy  is shown in solid lines while the Round-Robin policy in dashed lines.}
\label{fig:supply-compare}
\end{figure}

In order to demonstrate scalability and computational complexity of the proposed algorithm, we repeat the policy  synthesis under the same task~\eqref{eq:deliver-task} but for  workspaces of various sizes. Particularly, we increase the number of cells from~$5^2$ to~$9^2$, $15^2$, $19^2$, $25^2$, $29^2$. The size of resulting~$\mathcal{M}$, $\mathcal{P}$, $\Xi_{acc}$ and the time taken to compute them are shown in Table~\ref{table:sim-complexity}, where we also list the complexity of the LP~\eqref{eq:comb-amec}, which consists of~\eqref{eq:LP1} and~\eqref{eq:LP2}, and the time taken to solve~\eqref{eq:comb-amec}. 
It can be seen from Table~\ref{table:sim-complexity} that solving~\eqref{eq:comb-amec} requires a small fraction of total time, compared to the construction of~$\mathcal{M}$,~$\mathcal{P}$ and~$\Xi_{acc}$. 

\subsection{Surveillance with Clustered Obstacles}\label{sec:surv-rex}
In this case, we demonstrate how the relaxed plan prefix and suffix can be  synthesized under scenarios where no AECs can be found.
In particular, we consider the surveillance task in~\eqref{eq:surveillance-task} but more obstacles are placed in the workspace as shown in Figure~\ref{fig:relaxed_1}. The center cell~$(5m,\,5m)$ has probability~$0.9$ of being occupied by an obstacle and the four cells above and on the left have probability~$0.01$ of being occupied by an obstacle. Thus,~$\texttt{b1}$ is surrounded by possible obstacles around it, even though the probability is very low.

The resulting product automaton has~$1184$ states,~$13888$ transitions, and $1$ accepting pair.
{It can be verified that \emph{no AECs exist} in~$\mathcal{P}$ and thus the second case of Algorithm~\ref{alg:complete} is activated, where the optimal solution is derived by solving~\eqref{eq:comb-ascc}.}
We synthesize the relaxed optimal policy  under different~$\gamma_{\texttt{prex}}$  and~$d$, as shown in Table~\ref{table:rex-statistics}.
{It took in average~$37s$ to synthesize the complete policy for $\beta=0.1$ and any chosen $\gamma_{\texttt{prex}}$ and~$d$ in this case.}
{Recall that $d$ is a large positive penalty for entering the set of bad states in~\eqref{eq:LP3}}.
In particular, we first choose~$\gamma_{\texttt{prex}}=0.1$ and~$d=300$. Two simulated trajectories under the derived policy are shown in Figure~\ref{fig:relaxed_1}.
Furthermore, we perform $1000$ Monte Carlo simulation under the~$\gamma_{\texttt{prex}}$ and~$d$ listed in Table~\ref{table:rex-statistics}, where we compare the number of times that the robot fails the task by colliding with obstacles (the failure),  the number of times that the robot successfully reaches the set of ASCC~$S_c$ (the prefix success), and the number of times that the robot successfully executes one accepting cyclic path associated with~$S_c'$ and $I_c'$ of one ASCC (the suffix success).
{It can be seen that~$(1-(1-\gamma_{\texttt{prex}})(1-\gamma_{\texttt{sufx}}))$,~$(1-\gamma_{\texttt{prex}})$ and~$(1-\gamma_{\texttt{sufx}})$ matches very well the probability of failure, the prefix success, and the suffix success, respectively, as discussed in Theorem~\ref{thm:whole}. 
Also, it can be seen that the system can recover from the bad states and continue executing the task if the recovery policy proposed in~\eqref{eq:policy-bad} is activated.}
It can also be seen that increasing~$\gamma_{\texttt{prex}}$ leads to a lower prefix success rate and decreasing~$d$ leads to a lower suffix success rate.

\begin{figure}[t]
\begin{minipage}[t]{0.495\linewidth}
\centering
   \includegraphics[width =1.02\textwidth]{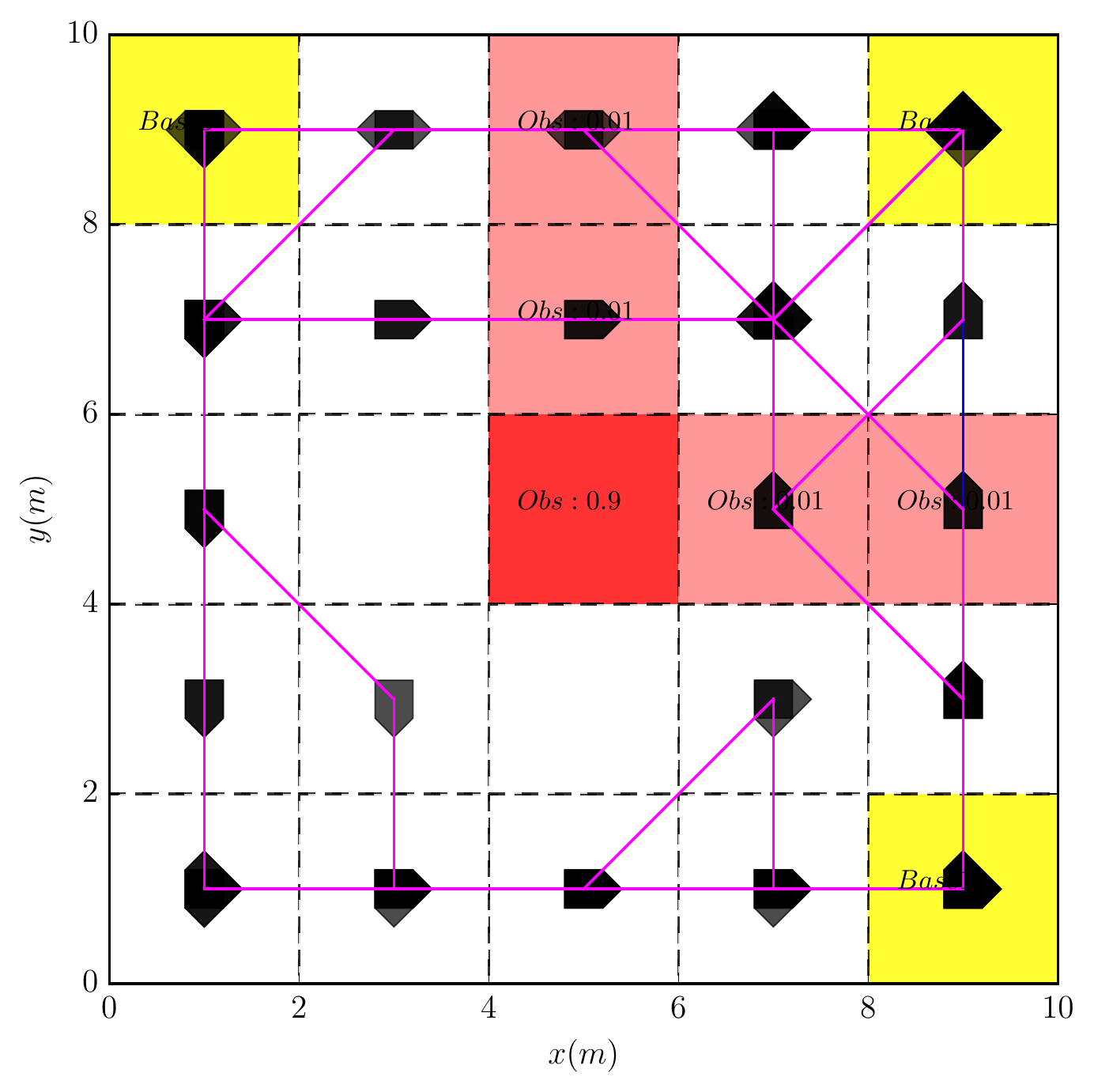}
  \end{minipage}
\begin{minipage}[t]{0.495\linewidth}
\centering
    \includegraphics[width =1.02\textwidth]{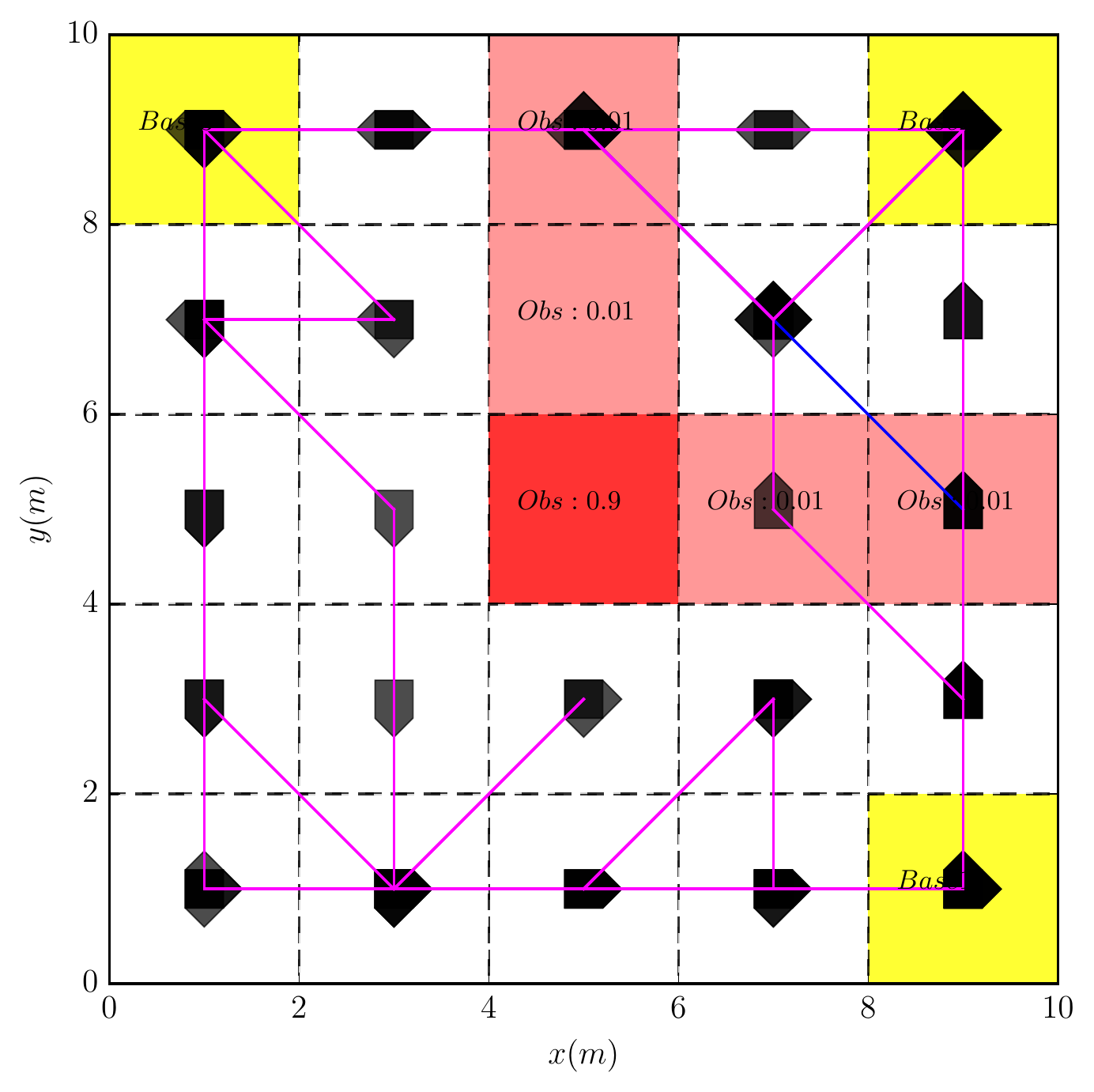}
  \end{minipage}
  \caption{Two simulated trajectories of~$200$ time steps for the surveillance task~\eqref{eq:surveillance-task}, under the relaxed optimal policy. 
}
\label{fig:relaxed_1}
\end{figure}

To  demonstrate  scalability and computational complexity of the proposed algorithm when AMECs do not exist, we repeat the policy synthesis under the same task~\eqref{eq:surveillance-task} but for different workspaces of various sizes, as in Section~\ref{sec:ordered-supply}. 
We set~$\gamma=0.3$, $d=300$ and $\beta=0.1$.
The size of resulting~$\mathcal{M}$, $\mathcal{P}$, $\Omega_{acc}$ and the time taken to compute them are shown in Table~\ref{table:sim-complexity-2}, where we also list the complexity of the \eqref{eq:comb-ascc}, which consists of~\eqref{eq:LP1} and~\eqref{eq:LP3}, and the time taken to solve~\eqref{eq:comb-ascc}.
It can be seen above that solving~\eqref{eq:comb-ascc} now requires a larger fraction of total time, compared to the construction of~$\mathcal{M}$,~$\mathcal{P}$ and~$\Omega_{acc}$. However, it requires much less time to compute the set of ASCCs~$\Omega_{acc}$ than the set of AMECs~$\Xi_{acc}$. For instance, in the case of~$29^2$ cells in the workspace, it took around~$23.1$ seconds to construct~$\mathcal{P}$ (which has approximately~$2.8\times 10^4$ states and~$2.9\times 10^5$ transitions) and~$19.6$ seconds to construct its ASCCs (compared with $160$ minutes in Table~\ref{table:sim-complexity}). Once~\eqref{eq:comb-ascc} is constructed, it took around~$2.5$ minutes to solve it.

\subsection{Comparison with PRISM}\label{sec:prism}

In this section we compare the proposed algorithm to the widely-used model-checking tool PRISM~\cite{kwiatkowska2011prism}. 
The following results were obtained using PRISM 4.3.1, where Linear Programming is chosen as the solution method.  First, since PRISM does not take the probabilistically-labeled MDP in~\eqref{eq:mdp} as inputs, we translate the product automaton in~\eqref{eq:prod} into PRISM language and verify its Rabin accepting condition directly. Implementation details can be found in~\cite{git_mdp_tg}.
For tasks~\eqref{eq:reach-task},~\eqref{eq:surveillance-task} and~\eqref{eq:deliver-task}, PRISM verifies that the probability of satisfying each of them is~$1.0$, within time~$0.46s$,~$0.38s$ and~$6.4s$, respectively. The difference in computation time is likely due to the difference in the LP solvers. 
Second, in order to test different values of~$\gamma$, we use the ``multi-objective property'' to find the minimal cumulative reward while ensuring the risk of violating the task is bounded by~$\gamma$. Note that the associated model has to be the modified product model~$\mathcal{Z}_{\texttt{pre}}$ defined in Section~\ref{sec:plan-prefix} as PRISM does not currently support multi-objective property with the ``$\textbf{F}$ target'' operator (i.e., $\Diamond S_c$). The computation time is approximately the same as in the previous cases. 
Last, the current PRISM version does not support the mean-payoff optimization in the AMECs, nor does it generate the relaxed control policy for the case where no AMECs exist in the product automaton. In fact, PRISM will simply return that the maximal probability of satisfying the task is $0$. 
The \texttt{MultiGain} tool recently proposed in~\cite{brazdil2015multigain} can handle multiple mean-payoff constraints but does not allow the tuning of the satisfaction probability~$(1-\gamma)$.

\begin{table}[t]
\begin{center}
\scalebox{1.05}{
    \begin{tabular}{| c| c|c | c | c | c|}
    \hline
   $\gamma_{\texttt{prex}}$ & $d$  & $\gamma_{\texttt{sufx}}$& \textbf{Failure} & \textbf{Pre. Success}  &  \textbf{Suf. Success} \\ \hline \hline

    0.1 & $300$& 0.05  & 106 & 894 & 852   \\ \hline
    0.2 & $300$& 0.05  & 169 & 831 & 785   \\ \hline
    0.3 & $300$& 0.05  & 318 & 682 & 650   \\ \hline
    0.4 & $300$& 0.05  & 409 & 591 & 549   \\ \hline
    0.1 & $280$& 0.85  & 888 & 901 & 117   \\ \hline    
    0.1 & $270$& 0.98  & 997 & 903 & 4   \\ \hline    
    \end{tabular}}
\caption{Statistics of~$1000$ Monte Carlo simulations under different~$\gamma_{\texttt{prex}}$ and~$d$, for task~\eqref{eq:surveillance-task} in Section~\ref{sec:surv-rex}. 
}
\label{table:rex-statistics}
\end{center}
\end{table}

\begin{table*}[t]
\begin{center}
\begin{tabular}{ccccccccccccc}
\toprule
\multicolumn{2}{c}{$\mathcal{M}$} & \multicolumn{2}{c}{$\mathcal{P}$} & 
\multicolumn{2}{c}{ASCCs $\Omega_{acc}$} &  \multicolumn{3}{c}{$\boldsymbol{\pi}^{\star}$ via~\eqref{eq:comb-ascc}}\\
\cmidrule(lr){1-2}\cmidrule(lr){3-4}\cmidrule(lr){5-6}\cmidrule(lr){7-9}
Size & {Time [\si{\second}]} & 
Size & {Time [\si{\second}]} &
Size & {Time [\si{\second}]} & 
Size of~\eqref{eq:LP1} & Size of~\eqref{eq:LP3} & {Time to solve~\eqref{eq:comb-ascc} [\si{\second}]} \\
\midrule
(100,\;816)& 0.13 & (1.0e3,\;1.1e4) & 0.9 & 3.1e2 & 0.66 & (202,\;920,\;3.4e3) & (301,\;1.4e3,\;4.9e3) & 0.45\\
\addlinespace[0.3em]
(324,\;2.8e3)& 1.57 & (2.9e3,\;3.1e4) & 3.39 & 9.8e2 & 1.84 & (6.5e2,\;3.1e3,\;1.1e4) & (9.7e2,\;4.7e3,\;1.6e4) & 2.41\\
\addlinespace[0.3em]
(900,\;8.4e3)& 23.9 & (7.7e3,\;7.9e4) & 7.04 & 2.7e3 & 5.09 & (1.8e3,\;8.7e3,\;3.0e4) &  (2.7e3,\;1.3e4,\;4.5e4) & 9.89\\
\addlinespace[0.3em]
(1.4e3,\;1.3e4)& 92.2 & (1.2e4,\;1.2e5) & 9.78 & 4.3e3 & 8.41 & (2.9e3,\;1.4e4,\;4.9e4) & (4.3e3,\;2.1e4,\;7.2e4) & 22.94\\
\addlinespace[0.3em]
(2.5e3,\;2.4e4)& 322.1 & (2.1e4,\;2.1e5) & 20.1 & 7.5e3 & 17.1 & (5.1e3,\;2.5e4,\;8.5e4) & (7.5e3,\;3.7e4,\;1.3e5) & 83.33\\
\addlinespace[0.3em]
(3.3e3,\;3.2e4)& 625.2 & (2.8e4,\;2.9e5) & 23.1 & 1.0e4 & 19.6 & (6.7e3,\;3.3e4,\;1.1e5) & (1.0e4,\;4.9e4,\;1.7e5) & 145.8\\
\bottomrule
\end{tabular}
\caption{Size and computation time of various models~$\mathcal{M}$ under task~\eqref{eq:surveillance-task} where \emph{no} AECs exist in~$\mathcal{P}$. The notations are defined similarly as in Table~\ref{table:sim-complexity}. In this case, the combined LP in~\eqref{eq:comb-ascc} contains~\eqref{eq:LP1} and~\eqref{eq:LP3} instead. 
}
\label{table:sim-complexity-2}
\end{center}
\end{table*}


\section{Experimental Study}\label{sec:experiment}

In this section, we present an experimental study. We use a differential-driven ``iRobot'' whose position we track in real-time via an Optitrack motion capture system.  
The communication among the planning module, the robot actuation module, and the Optitrack is handled by the Robot Operating System (ROS). The software implementation for this experiment is available in~\cite{git_mdp_tg_ros}. The experiment videos are online~\cite{mdp_exp_video}.

\subsection{Model Description}\label{sec:model-exp}
Consider the $2.5m \times 1.5m$ experiment workspace as shown in Figure~\ref{fig:exp-0}, with three base stations located at the corners and one obstacle region. It consists of~$5\times 3$ square cells of dimension ~$0.5m\times 0.5m$ each. The robot's motion within the workspace is abstracted similarly as in Section~\ref{sec:model-description}. The resulting MDP has~$60$ states and $456$ edges.



\begin{figure}[t]
\begin{minipage}[t]{0.495\linewidth}
\centering
   \includegraphics[width =0.93\textwidth]{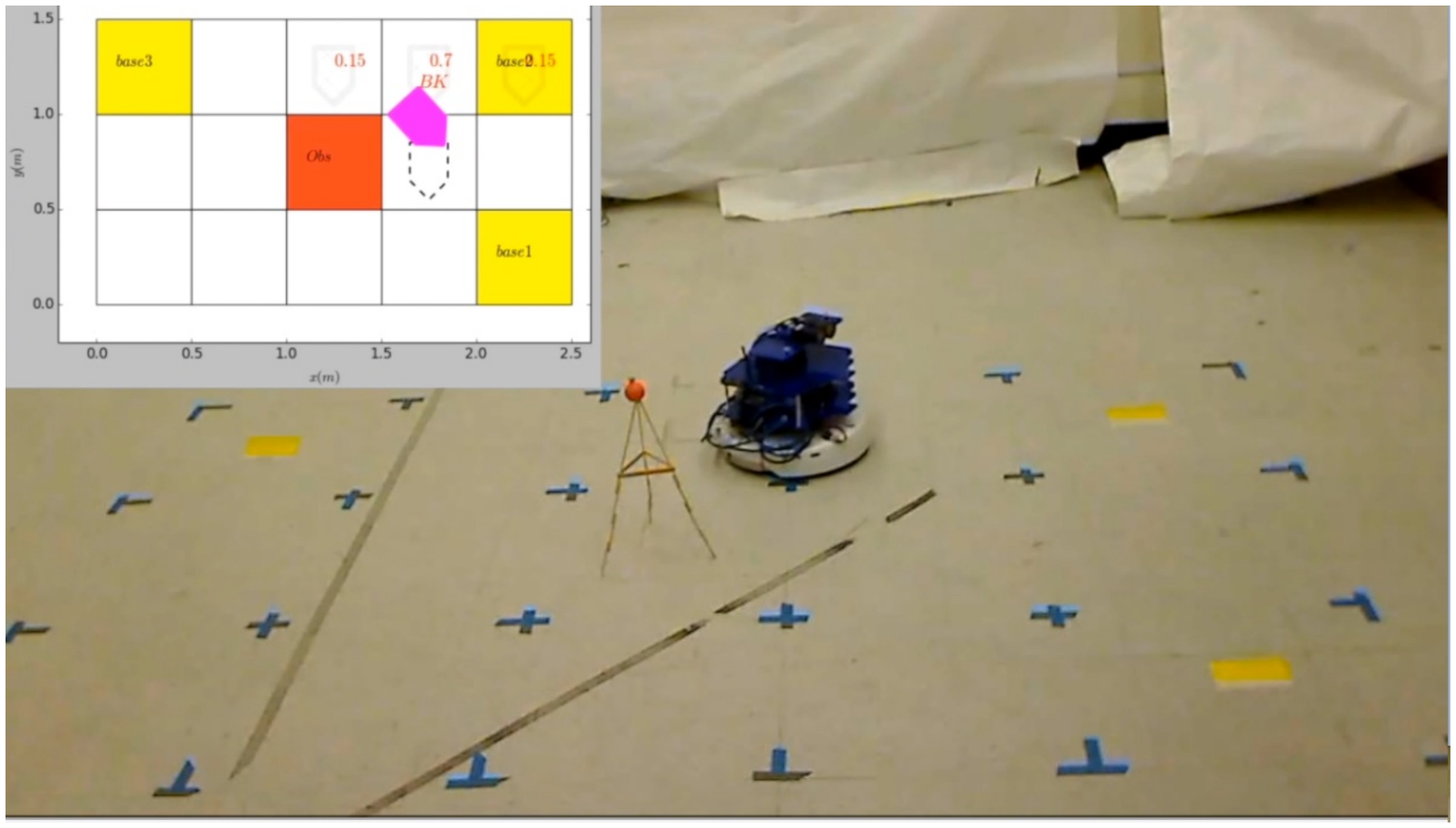}
  \end{minipage}
\begin{minipage}[t]{0.495\linewidth}
\centering
    \includegraphics[width =1.0\textwidth]{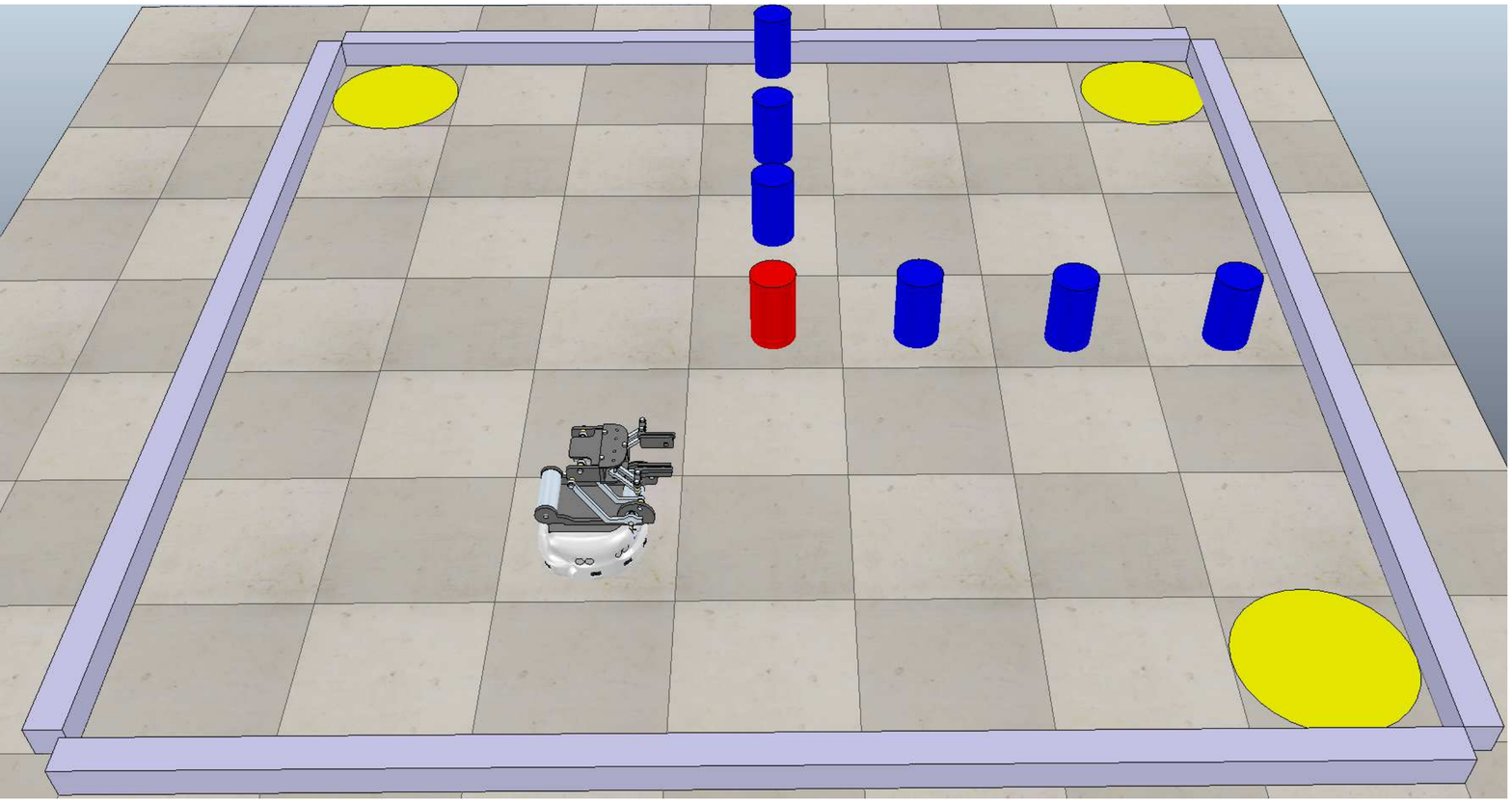}
  \end{minipage}
  \caption{The experiment workspace (left) with the monitoring panel. Three bases are marked by yellow tapes, while the tripod represents the obstacle. The monitoring panel displays the real-time  position, the control policy, the motion uncertainty and the robot status (being in prefix (green) or in suffix (magenta)). Customizable virtual experiment platform (right) in V-REP for task~\eqref{eq:surveillance-task} where no AMECs can be found in the product, see~\cite{git_mdp_tg} and~\cite{mdp_exp_video}.
}
\label{fig:exp-0}
\end{figure}


\subsection{Experimental Results}\label{sec:oneline-exp}

We consider two different tasks: first the sequential visiting task~\eqref{eq:reach-task} and then the surveillance task~\eqref{eq:surveillance-task}.

\begin{figure}[t]
\begin{minipage}[t]{0.495\linewidth}
\centering
   \includegraphics[width =1.02\textwidth]{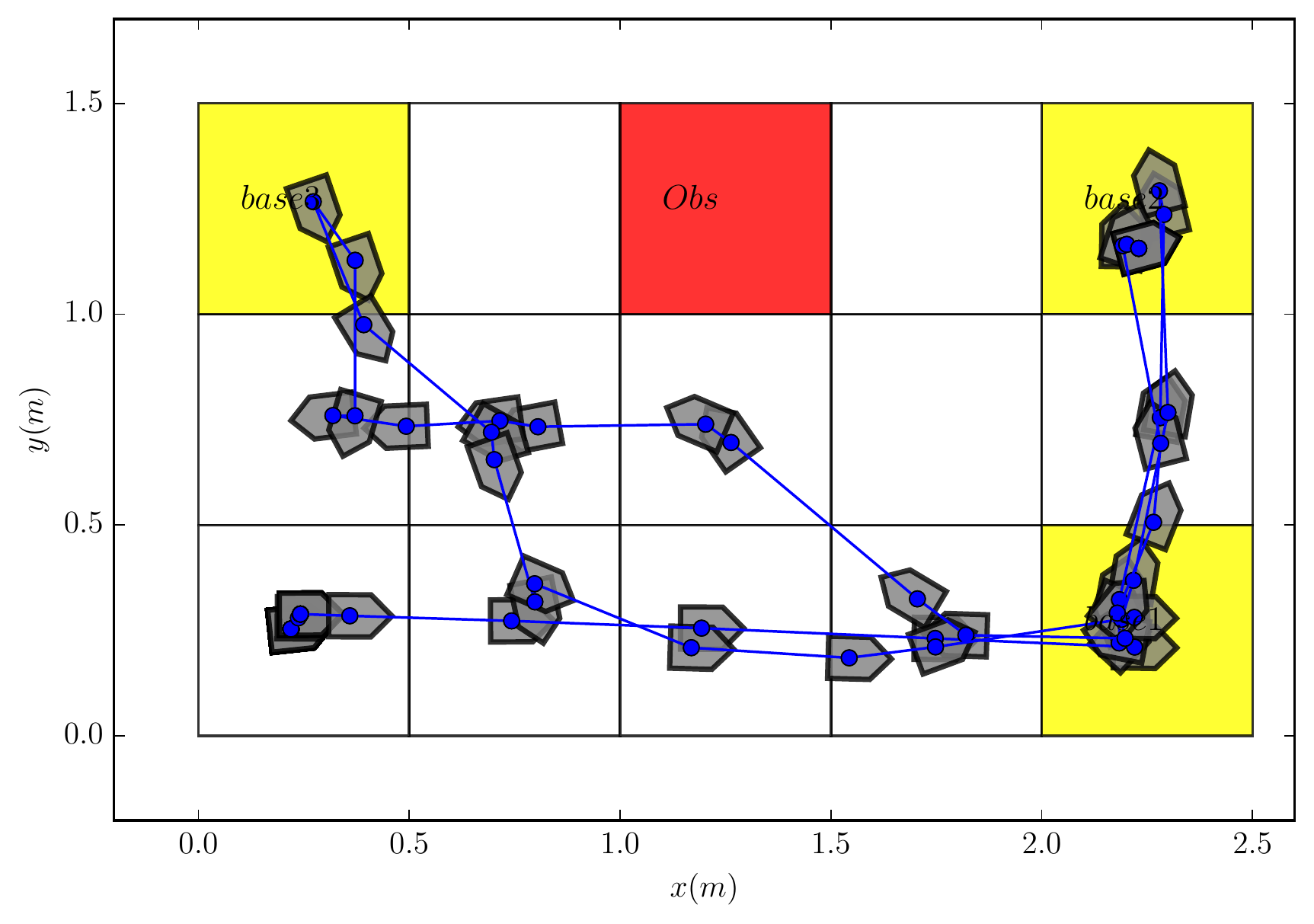}
  \end{minipage}
\begin{minipage}[t]{0.495\linewidth}
\centering
    \includegraphics[width =1.02\textwidth]{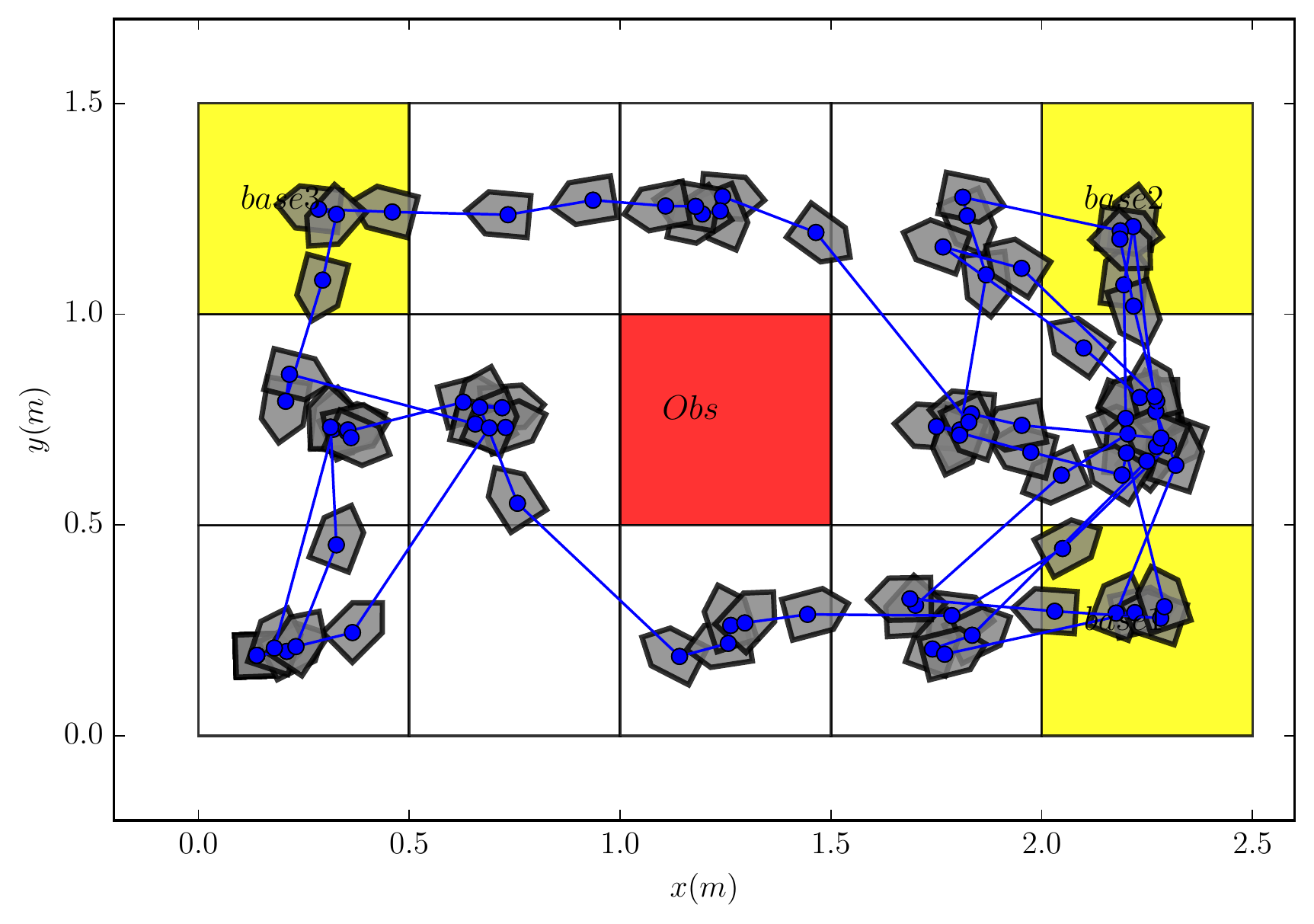}
  \end{minipage}
  \caption{The robot trajectory to satisfy task~\eqref{eq:reach-task} (left) and~\eqref{eq:surveillance-task} (right) when~$\gamma=0$, sampled at every~$15s$.
}
\label{fig:exp}
\end{figure}


\subsubsection{Sequential Visiting Task}\label{sec:task-1-exp}
The LTL task formula is given in~\eqref{eq:reach-task} and the associated DRA is constructed in Section~\ref{sec:order-reach}. The obstacle has probability~$0.1$ of appearing in the cell~$(1.25m, 1.25m)$. The resulting product automaton in this case has~$532$ states and~$4228$ edges and~$1$ accepting pair. For~$\gamma=0$ and $\beta=0.1$, it took~$3.16s$ to synthesize the complete policy  using Algorithm~\ref{alg:complete}, resulting in an average prefix cost~$47.72$ and  suffix cost~$1.0$. 
Then the robot was controlled in real-time using Algorithm~\ref{alg:execution}. The robot state was retrieved using the motion capture system and the observed label was generated randomly.
The complete video is online~\cite{mdp_exp_video} and the resulting trajectory shown in Figure~\ref{fig:exp}. Notice that the robot avoids completely collision with the obstacle.




\subsubsection{Surveillance Task}\label{sec:task-2-exp}
The LTL task formula is given in~\eqref{eq:surveillance-task} and the associated DRA is constructed in Section~\ref{sec:order-reach}. The obstacle has probability~$0.1$ of appearing in the cell~$(1.25m, 0.75cm)$. The resulting product automaton in this case has~$608$ states, $4992$ edges, and $1$ accepting pair. 

In the \emph{first} experiment, we choose~$\gamma=0$ and $\beta=0.1$ so that there is no risk allowed in the plan prefix. It took~$5.2s$ to synthesize the complete plan offline using Algorithm~\ref{alg:complete}. 
The real-time execution of the system followed Algorithm~\ref{alg:execution}. The resulting trajectory is shown in Figure~\ref{fig:exp}. In the \emph{second} experiment, we selected~$\gamma=0.1$ and $\beta=0.1$ to allow  risk in the plan prefix. It took~$4.9s$ to synthesize the complete policy. 
Compared to the case where~$\gamma=0$, the optimal policy instructs the robot to move forward, straight to the base station at $(2.25m, 0.25m)$, even though there is a risk of colliding with the obstacle at $(1.25m, 0.75m)$ due to the uncertainty in its forward action.  Both experiment videos are online~\cite{mdp_exp_video}. 

Lastly, to demonstrate the proposed scheme for much larger workspaces and more complex tasks, particularly when no AMECs can be found in the product automaton, we create a virtual experiment platform based on V-REP~\cite{rohmer2013v}, which is available in~\cite{git_mdp_tg}. A snapshot is shown in Figure~\ref{fig:exp-0}. The user can easily change the configuration of the workspace and the robot task specification. Once the control policy is synthesized via Algorithm~\ref{alg:complete} and saved, the user can perform any number of test runs in this environment.  Demonstration videos are online~\cite{mdp_exp_video} where we replicate the surveillance task with clustered obstacles from Section~\ref{sec:surv-rex}. It can be seen that the relaxed control policy can ensure high probability of avoiding bad states over long time intervals.


\section{Conclusion and Future Work}\label{sec:conc}
In this paper,  we propose a plan synthesis algorithm for probabilistic motion planning, subject to high-level LTL task formulas and risk constraints. Uncertainties in both the robot motion and the workspace properties are considered. We obtain optimal policies that optimize  the total cost both in the prefix and suffix of the system trajectory.
We also address the case where no AECs exist in the product automaton in which case the probability of satisfying the task is zero. 
The proposed solution provides provable guarantees on the probabilistic satisfiability  and the mean total-cost optimality, and is verified via both numerical simulations and experimental studies.
Future work involves extensions to multi-robot systems.

\bibliography{IEEEabrv,meng.bib}

\end{document}